\documentclass{article}

\usepackage{microtype}
\usepackage{graphicx}
\usepackage{subfigure}
\usepackage{booktabs} 

\usepackage{hyperref}

\usepackage[accepted]{icml2026}

\usepackage{amsmath}
\usepackage{bm}
\usepackage{amssymb}
\usepackage{mathtools}
\usepackage{amsthm}
\usepackage{graphicx}
\usepackage{enumitem}

\usepackage[capitalize,noabbrev]{cleveref}

\theoremstyle{plain}
\newtheorem{theorem}{Theorem}[section]
\newtheorem{proposition}[theorem]{Proposition}

\theoremstyle{definition}

\theoremstyle{remark}
\newtheorem{remark}[theorem]{Remark}

\usepackage{amsmath}
\usepackage{amssymb}  %
\usepackage{graphicx} 

\makeatletter

\newcommand*\Neginternal[3]{\mathpalette\Neg@{{#1}{#2}{#3}}}
\newcommand*\Neg@[2]{\Neg@@{#1}#2}
\newcommand*\Neg@@[4]{%
  \mathrel{\ooalign{%
    $\m@th#1#4$\cr
    \hidewidth$\m@th#3{#1}\mkern\muexpr#2*2$\hidewidth\cr
  }}%
}

\newcommand*\negslash[1]{\m@th#1\not\mathrel{\phantom{=}}}
\newcommand*\snegslash[1]{\rotatebox[origin=c]{60}{$\m@th#1-$}}
\newcommand*\ssnegslash[1]{\rotatebox[origin=c]{60}{$\m@th#1{\dabar@}\mkern-7mu{\dabar@}$}}
\newcommand*\sssnegslash[1]{\rotatebox[origin=c]{60}{$\m@th#1\dabar@$}}
\makeatother  

\definecolor{lred}{RGB}{245, 180, 185}
\definecolor{lorange}{RGB}{255, 200, 135}
\definecolor{lyellow}{RGB}{255, 240, 140}
\definecolor{lblue}{RGB}{150, 210, 235}
\definecolor{lgreen}{RGB}{165, 215, 175}
\definecolor{lgrey}{RGB}{210, 210, 210}
\definecolor{llgrey}{RGB}{235, 235, 235}
\definecolor{lpurple}{RGB}{175, 165, 215}
\definecolor{lmagenta}{RGB}{235, 165, 200}
\definecolor{laqua}{RGB}{80, 150, 170}

\newcommand{\Dtrain}{\mathcal{D}}
\newcommand{\Dtest}{\mathcal{D}}

\newcommand{\ytest}{{y}}
\newcommand{\xtest}{\bm{x}}
\newcommand{\ytrain}{y_{tr}}
\newcommand{\xtrain}{{\bm{x}_{tr}}}
\newcommand{\xtraini}{{\bm{x}^{(i)}_{tr}}}
\newcommand{\param}{{\bm{\theta}}}
\newcommand{\doit}{\text{do}}

\newcommand{\edit}[1]{\color{violet}}

\usepackage{soul}

\icmltitlerunning{Use What You Know: Causal Foundation Models with Partial Graphs}

\newcommand{\defeq}{\mathrel{\raisebox{-0.01ex}{\hspace{0.05em}:}\hspace{-0.15em}=}}

\begin{document}

\twocolumn[
\icmltitle{Use What You Know: Causal Foundation Models with Partial Graphs}

\begin{icmlauthorlist}
\icmlauthor{Arik Reuter}{1,2}
\icmlauthor{Anish Dhir}{3}
\icmlauthor{Cristiana Diaconu}{1}
\icmlauthor{Jake Robertson}{4,5}
\icmlauthor{Ole Ossen}{5}
\icmlauthor{Frank Hutter}{4,6,5}
\icmlauthor{Adrian Weller}{1,7}
\icmlauthor{Mark van der Wilk}{8}
\icmlauthor{Bernhard Schölkopf}{2,6}

\end{icmlauthorlist}

\icmlaffiliation{1}{University of Cambridge, Cambridge, United Kingdom}
\icmlaffiliation{2}{Max Planck Institute for Intelligent Systems, Tübingen, Germany}
\icmlaffiliation{3}{Gatsby Computational Neuroscience Unit, University College London}
\icmlaffiliation{4}{Prior Labs, Freiburg, Germany}
\icmlaffiliation{5}{University of Freiburg, Freiburg, Germany}
\icmlaffiliation{6}{ELLIS Institute Tübingen, Tübingen, Germany}
\icmlaffiliation{7}{The Alan Turing Institute, London, United Kingdom}
\icmlaffiliation{8}{University of Oxford, Oxford, United Kingdom}

\icmlcorrespondingauthor{Arik Reuter}{ar2364@cam.ac.uk}

\icmlkeywords{Causality, Foundation Model, Prior-data fitted Networks, Neural Processes, Causal Graphs, Partial Graph Information}

\vskip 0.3in
]

\printAffiliationsAndNotice{}

\begin{abstract}

Estimating causal quantities traditionally relies on bespoke estimators tailored to specific assumptions.
Recently proposed Causal Foundation Models (CFMs) promise a more unified approach by amortising causal discovery and inference in a single step.
However, in their current state, they do not allow for the incorporation of any domain knowledge, which can lead to suboptimal predictions. 
We bridge this gap by introducing methods to condition CFMs on causal information, such as the causal graph or more readily available ancestral information.
When access to complete causal graph information is too strict a requirement, our approach also effectively leverages partial causal information.
We systematically evaluate conditioning strategies and find that injecting learnable biases into the attention mechanism, together with a graph-convolutional encoder, is a highly effective method to utilise full and partial causal information.
Our experiments show that this conditioning allows a general-purpose CFM to match the performance of specialised models trained on specific causal structures. 
Overall, our approach addresses a central hurdle on the path towards all-in-one causal foundation models: the capability to answer causal queries in a data-driven manner while effectively leveraging any amount of domain expertise.

\end{abstract}

\begin{figure*}[t]
    \centering

    \begin{minipage}{1.0\linewidth}
        \vspace{1cm}
        {\raggedright\textbf{(a)}\par}
        \vspace{-1cm}
        \centering
        \includegraphics[trim={0 4cm 0cm 5cm},clip,width=0.8\linewidth]{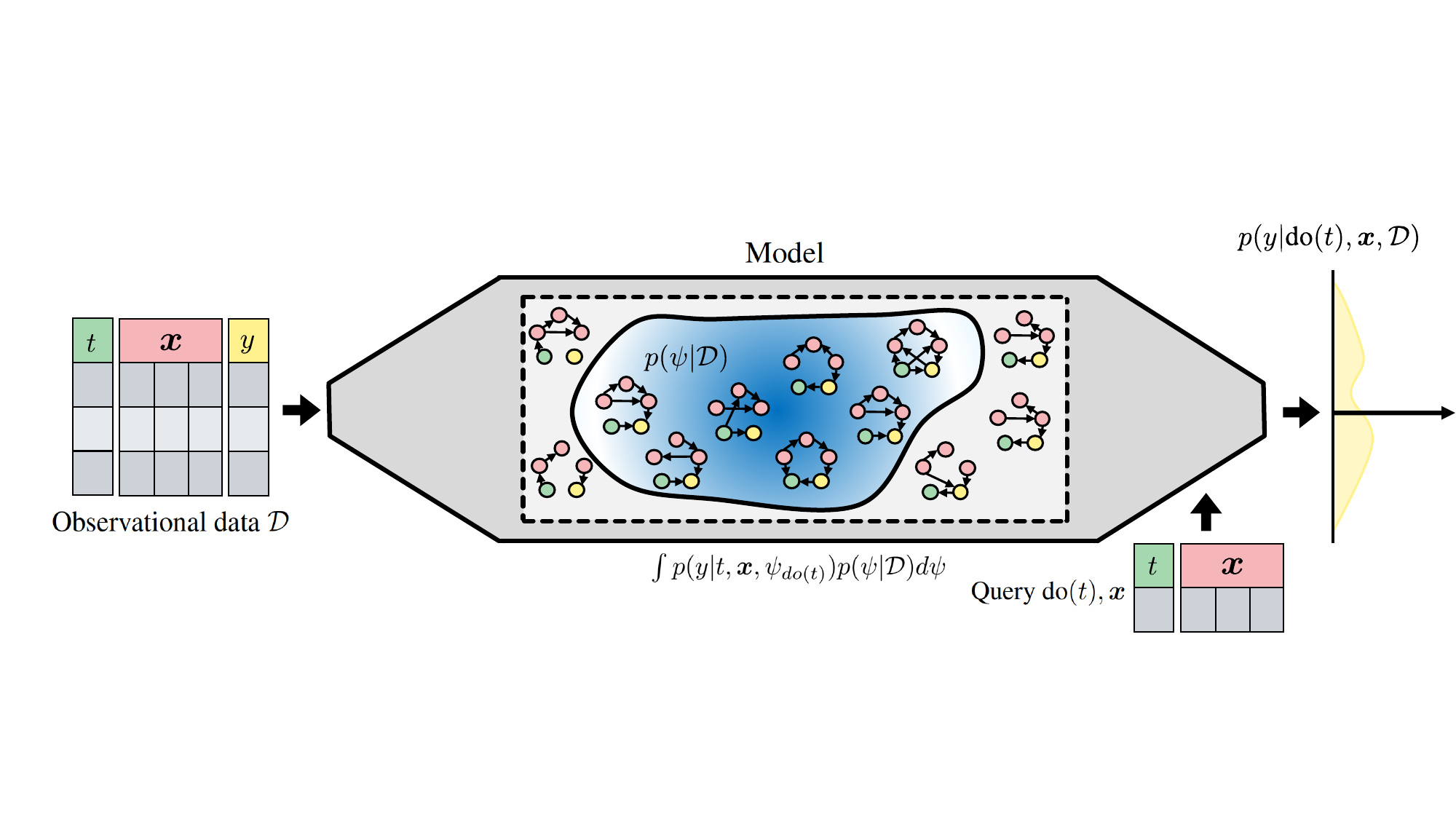}
    \end{minipage}

    \par\nointerlineskip\vspace{-0.1cm}

    \begin{minipage}{1.0\linewidth}
        \vspace{1cm}
        {\raggedright\textbf{(b)}\par}
        \vspace{-1cm}
        \centering
        \includegraphics[trim={0 3.5cm 0cm 5cm},clip,width=0.8\linewidth]{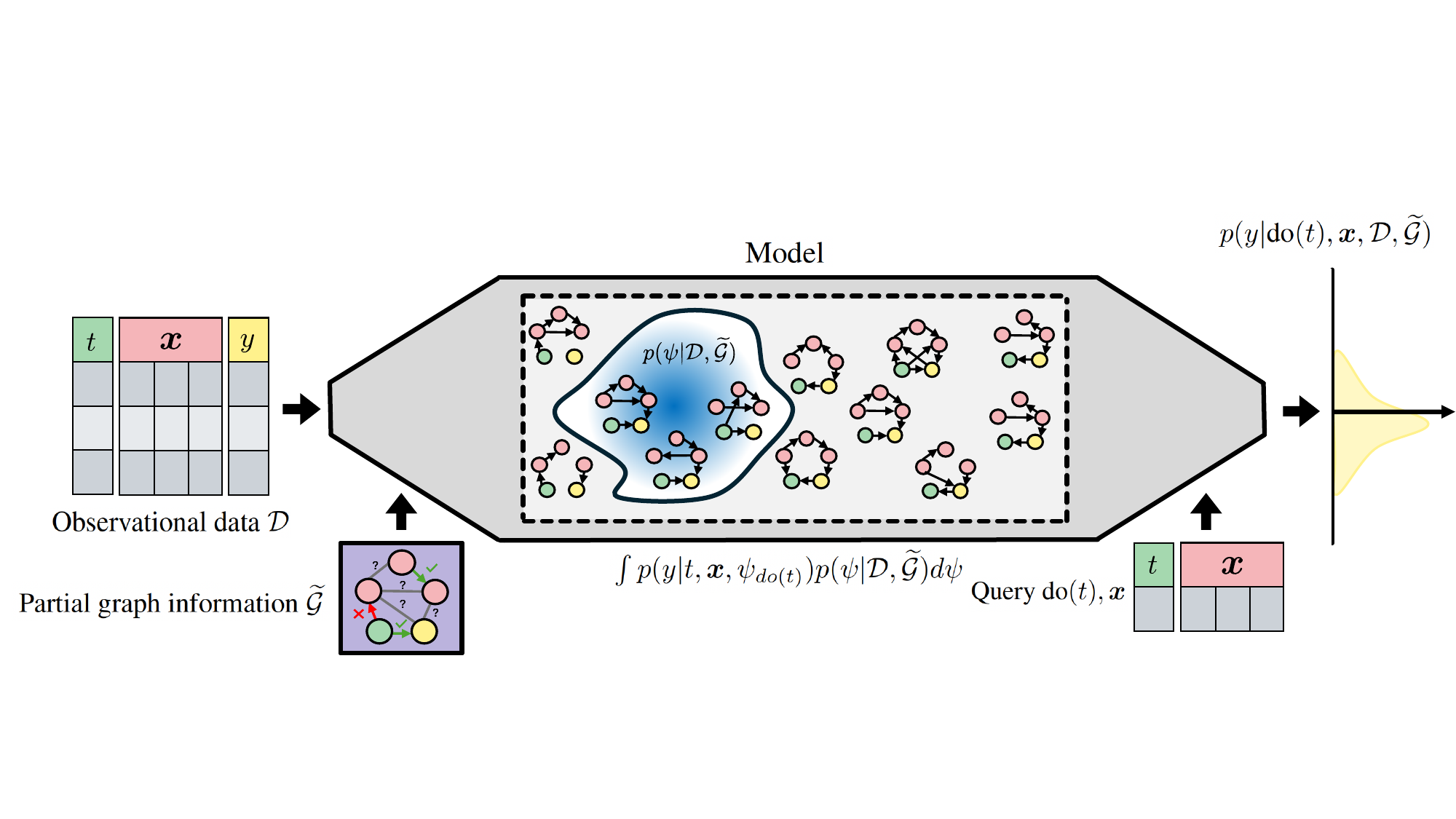}
    \end{minipage}

    \caption{
        Causal Foundation Models based on PFNs for predicting the effect of causal interventions: 
        A model takes as input observational data $\Dtrain$ and a query-point comprising an intervention $\doit(t)$ together with a feature vector $\xtest$. The model, implemented as a transformer, outputs the posterior of the causal effect $p(y|\doit(t),\xtest, \Dtrain)$. This can be seen as the model implicitly marginalising over causal structures (SCMs) $\psi$, based on the posterior probability $p(\psi|\mathcal{D})$ that \(\psi\) could have generated the observational data $\Dtrain$.
        \textbf{(a)} The model can only leverage observational data to compute its posterior belief $p(\psi|\Dtrain)$ over possible causal structures (visualized as the region shaded in blue). When using this posterior to compute the posterior predictive $\smash{p(y|\doit(t),\xtest, \Dtrain) = \int p(y|t, \xtest, \psi_{\doit(t)})p(\psi|\Dtrain)d\psi}$, this leads to high uncertainty, and possibly imprecise predictions. 
        \textbf{(b)} In contrast, providing the model with partial graph information $\widetilde{\mathcal{G}}$ allows to narrow down the set of causal structures that have high posterior mass $p(\psi|\Dtrain, \widetilde{\mathcal{G}})$. This yields a more concentrated posterior predictive $p(y|\doit(t),\xtest, \Dtrain, \widetilde{\mathcal{G}})$, and thus more accurate predictions of the outcome.}
    \label{fig:fig1}
\end{figure*}

\section{Introduction}
\label{sec:introduction}

Causal questions are central to science and decision-making.
Traditional approaches to answer those questions \citep{imbens2015causal, pearl2009causality} are tailored to specific settings where certain assumptions hold. This effectively makes estimating causal effects a two-step process: 
\textbf{(i)} make identification assumptions (e.g.\ through a causal graph), \textbf{(ii)} choose an estimator based on those assumptions.
This approach has produced a sizeable toolbox of estimators, and selecting amongst them requires substantial expertise. In other areas of machine learning, a similar reliance on human expertise has been reduced by using foundation models that are trained once on diverse datasets and adapt their predictions \emph{in context} to the specific task at hand, eliminating the need for bespoke estimators \citep{wang2024qwen2, bodnar2025foundation, brown2020language}.

Recently, Prior-Data-Fitted Networks (PFNs; \citealt{hollmann2022tabpfn}) have been proposed as a foundation model paradigm for tabular data, achieving state-of-the-art performance on tabular benchmarks \citep{erickson2025tabarena}, without relying on human experience to tailor a method to a specific setting. Motivated by this, several PFN-based foundation models for causal inference have been proposed \citep{robertson2025pfn, dhirestimating, ma2025foundation, sauter2025activa}. 
These Causal Foundation Models (CFM) are trained to predict causal effects on numerous synthetic datasets generated from known causal models. With the training data specifying the prior, these models implicitly integrate over a posterior on causal graphs and functions to provide the causal effect for a test dataset \citep{dhirestimating, robertson2025pfn}. This allows for estimating causal effects by training a CFM, either with causal assumptions reflected in the training data (by restricting the causal graph), or with minimal causal assumptions (by including all possible causal graphs), while capturing uncertainty from finite samples and structural ambiguity in both cases through the Bayes posterior.

However, when partial domain knowledge is available, restricting the training data (prior) to reflect the domain knowledge requires expensive retraining of CFMs.
On the other hand, marginalising over \emph{all} structures can yield unnecessarily conservative estimates when partial domain knowledge is available, leading to imprecise predictions by assigning weight to causal worlds that experts can rule out (see \cref{fig:fig1}).
We thus identify a central shortcoming: \textit{Existing CFMs cannot condition on prior causal information when performing causal inference at test time}.

In this work, we study techniques for flexibly conditioning CFMs on causal information. To allow for conditioning on causal information that might be more readily available than the exact causal graph, we focus on conditioning on \textit{causal ancestral} information.
Even so, access to all the ancestral information can be unrealistic in certain applications.
Hence, we focus on three regimes: \textbf{(a)} there is ancestral knowledge about all the causal variables, \textbf{(b)} there is partial ancestral causal knowledge among a few of the variables (which can be represented as a partial ancestral graph), \textbf{and (c)} there is no causal knowledge between variables.
Our proposed method allows for expressing all three cases above in a \emph{single} model, allowing CFMs to condition on not only observational data in-context, but also on any amount of available domain knowledge (or lack thereof). In particular, we address the following three questions:

\textbf{1.) What type of causal information should one condition on in CFMs?}
    We argue that \emph{partial ancestral information} provides a natural, flexible, and practically useful representation of causal domain knowledge, subsuming settings with no, partial, or complete causal information. 
    
\textbf{2.) How to condition on causal information?}
    We systematically study different ways of incorporating partial graph information into CFMs and empirically show that \emph{learnable attention biases}, together with a graph-convolutional encoder, are an effective and robust mechanism to enable the use of partial ancestral information in CFMs. 

\textbf{3.) Does this translate to complex data?} We evaluate the proposed mechanisms for incorporating partial graph information into CFMs on a complex and realistic simulator of causal structures based on priors for existing Tabular Foundation Models, and on established semi-synthetic causal benchmarks. We find that ancestry-conditioning, even when it is just partially known, yields substantial improvements.\footnote{The code for our natively causal prior, model training, and our experiments is available at \url{https://github.com/ArikReuter/Graphs4CausalFoundationModels}.}

\begin{figure*}[t]
    \centering
    \begin{minipage}[t]{0.45\textwidth}
        \centering
        \includegraphics[width=\linewidth]{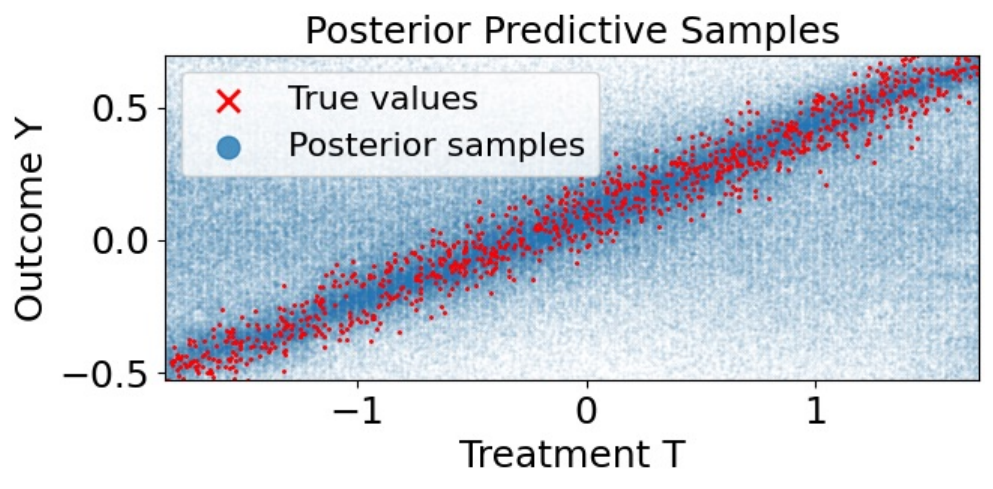}
    \end{minipage}
    \hfill
    \begin{minipage}[t]{0.45\textwidth}
        \centering
        \includegraphics[width=\linewidth]{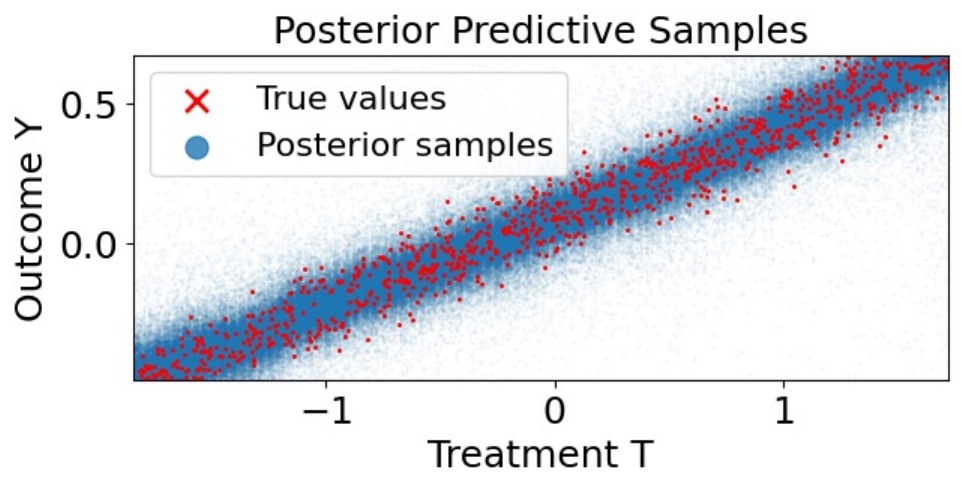}
    \end{minipage}

    \caption{Posterior predictive samples in the two-node case where $T$ causes $Y$ ($T \rightarrow Y$); more specifically treatment and outcome are related via $Y = 0.25\cdot T + \epsilon$, for $\epsilon \sim \mathcal{N}(0, 0.1)$. Here, the causal direction is not identifiable from observational data. \textbf{Left}: Without telling the model $Q_\theta$ that is trained on arbitrary causal structures (as, e.g., in \citet{robertson2025pfn} or \citet{dhirestimating}) the correct causal direction, it outputs a mixture distribution between the posteriors for the two causal directions $Q_\theta(Y|\doit(T)) = 0.5 \cdot P_{}(Y|\doit(T), Y \rightarrow T) + 0.5 \cdot P_{}(Y|\doit(T), T \rightarrow Y)$. \textbf{Right}: When conditioned on the right causal direction, however, the model's output aligns with the correct causal effect: $Q_\theta(Y|\doit(T), T \rightarrow Y) = P_{}(Y|\doit(T), T \rightarrow Y)$, leading to more precise predictions.}
    \label{fig:example}
\end{figure*}

\section{Background}
\label{sec:causal_structures}

The aim of this paper is to investigate the incorporation of domain knowledge into Causal Foundation Models to improve their predictions of causal effects. 
We review the necessary background regarding causal structures, interventions, interventional distributions, and prior-fitting for causal effect estimation below.
\vspace{-0.5cm}

\paragraph{Causal Structures} Causality can be formalised via structural causal models \citep[SCMs;][]{pearl2009causality, peters2017elements}. They consist of three main components: 
\textbf{(i)} a directed acyclic graph (DAG) $\mathcal{G} = (\mathcal{N}, \mathbf{A})$, where the set of nodes $\mathcal{N} = \{z_1, \ldots, z_K\}$ corresponds to random variables and the structure of the graph is encoded by an adjacency matrix $\mathbf{A} \in \{0,1\}^{K \times K}$. An entry $A_{ij} = 1$ indicates the presence of a directed causal edge $z_i \to z_j$, while $A_{ij} = 0$ indicates the absence of a direct causal influence. The adjacency matrix thus determines the parent set of each variable as $\mathrm{pa}(z_k) = \{z_j \mid A_{jk} = 1\}$.
\textbf{(ii)} SCMs pose a collection of causal mechanisms $\{f_k\}_{k=1}^K$, where each function $f_k$ determines the value of $z_k$ as a function of its parents $\mathrm{pa}(z_k)$ and an exogenous noise variable $\varepsilon_k$, i.e., $z_k \defeq f_k(\mathrm{pa}(z_k), \varepsilon_k),$
and \textbf{(iii)} a joint distribution $p(\varepsilon_1, \ldots, \varepsilon_K)$ over the exogenous noise variables, which we assume to be mutually independent.  
We denote an SCM as the tuple \(\psi := (\mathcal{G}, \{f_k\}_{k=1}^K, p(\varepsilon_1, \ldots, \varepsilon_K))\) and assume no hidden variables, i.e., causal sufficiency, throughout.

SCMs can act as generative models via ancestral sampling; i.e., first drawing the exogenous variables, followed by computing the functional mechanisms of the SCM in a topological order of the nodes. This defines an observational distribution $p(\Dtrain \mid \psi)$, where $\Dtrain = \{ (\bm{x}^{(i)}_{tr}, \ytrain^{(i)}) \}_{i=1}^{N_{tr}}$ comprises $N_{tr}$ i.i.d.\ samples obtained via ancestral sampling, where one (randomly determined) node $z_i$ represents the target $\ytrain$ and the other nodes the feature-vector $\xtrain$. Later, we will use ancestral sampling to generate synthetic data.

\paragraph{Interventions and Interventional Distributions}

SCMs define not only the observational distribution but also distributions under \emph{interventions}, formalised by the do-operator $\doit(\cdot)$.
An intervention $\doit(t)$ replaces the structural assignment of variable
$t$\footnote{We make use of a common abuse of notation by using $t$ to denote both the variable representing $t$ and its value.} with a constant, thereby removing all incoming edges into $t$ in the
underlying DAG. 
We denote an SCM that has been intervened upon with respect to variable $t$ by $\psi_{\doit(t)}$. This operation induces the conditional interventional distribution (CID; \citealt{robins86,shpitser2006identification}) \( p(\ytest \mid \xtest, \psi_{\doit(t)}) \),
which describes, for a fixed SCM $\psi$, the distribution of an outcome $\ytest$ under an
intervention on $t$, conditional on observed features $\xtest$. 

Access to the SCM \(\psi\) enables the computation of interventional queries with observational data \citep{pearl2009causality}.
In the absence of \(\psi\), estimating the SCM from observational data is typically ill-posed: multiple SCMs can imply the same observational distribution, and point identifiability requires restrictive functional assumptions \citep{peters2017elements}.
These hard restrictions on functions can lead to poor performance if they do not hold.
In contrast, the Bayesian approach allows for replacing the strict functional constraints with soft constraints through a prior \(p(\psi)\), with the posterior \(p(\psi|\Dtrain)\) capturing the uncertainty due to the lack of identifiability.
It has been shown that under the relatively weak assumption that the prior over functional mechanisms factorises\footnote{This is an instantiation of the \textit{independent causal mechanism} principle \citep{stegle2010probabilistic, janzing2010causal, guo2024finetti}.}, the posterior over causal structure cannot be made non-informative \citep{dhirbivariate}.
This allows the posterior to distinguish causal direction, albeit with some theoretical probability of error \citep{dhirbivariate}, and can offer better performance compared with making restrictive functional assumptions for point identifiability \citep{dhircontinuous, dhirmeta}.
We incorporate factorised priors throughout our work.

The uncertainty in the posterior can be propagated to the interventional distribution by averaging the posterior over SCMs \citep{madigan1994model}.
\begin{equation}
    \label{eq:CID}
    p(\ytest | \doit(t), \xtest, \Dtrain) = \int p(y|\xtest, \psi_{\doit(t)}) p(\psi|\Dtrain) d\psi.
\end{equation}
We refer to the posterior predictive in \cref{eq:CID} as the CID throughout the paper.
Computing \cref{eq:CID} is difficult due to the intractable integral, especially for complex priors $p(\psi)$ and therefore requires approximations, for instance, via Neural Processes.

\paragraph{Prior-Fitting for Causal Effect Estimation}
\label{sec:prior-fitting_for_causal_effect_estimation}

Neural processes are models that meta-learn to map from datasets to a complex, usually intractable, posterior predictive distribution of interest \citep{garnelo2018conditional, garnelo2018neural, nguyen2022transformer}.
Here, the training datasets form the prior distribution and the approximation of the posterior predictive side-steps any intermediate quantities. 
Prior-data fitted Networks (PFNs; \citealt{muller2021transformers}) are a class of neural processes that learn solely from synthetic datasets sampled from a carefully constructed prior. PFNs thus are well suited for approximating \cref{eq:CID}: sampling interventional data from a distribution over SCMs is simple, and the approximation directly estimates \cref{eq:CID}, without explicitly working with the potentially complex posterior over SCMs. 
Training data is sampled by \textbf{1)} sampling an SCM from a prior \(\psi \sim p(\psi)\), followed by \textbf{2)} sampling an observational dataset $\Dtrain \sim p(\Dtrain|\psi)$ from the observational distribution via ancestral sampling, and \textbf{3)} sampling interventional test-points from the SCM $(\xtest, \ytest) \sim p(\xtest, \ytest \mid \psi_{\doit(t)})$ by drawing a value for the intervention $t\sim p(t \mid \psi)$, followed by propagating it through the intervened upon SCM $\psi_{\doit(t)}$ together with newly sampled noise. This sampling procedure defines a joint distribution over observational-interventional data $p(\Dtrain, t, \xtest, \ytest)$. CFMs can be trained in the PFN-paradigm by minimising the negative log-likelihood on data sampled from the prior. This corresponds to minimising the forward KL-divergence between the model's approximation $q_\param(\ytest| \doit(t), \xtest, \Dtrain)$ and the CID from \cref{eq:CID}: 
\begin{multline}
    \label{eq:nll_kld}
    \underset{\theta}{\text{argmin}} \ \mathbb{E}_{p(\Dtrain, t, \xtest, \ytest)} -\log q_\param (\ytest|\doit(t), \xtest, \Dtrain) = \\
    \underset{\theta}{\text{argmin}} \ \mathbb{E}_{p(\Dtrain, t, \xtest)} \mathbb{KL}\left[p(\ytest |\doit(t), \xtest, \Dtrain)||q_\param(\ytest|\doit(t), \xtest, \Dtrain) \right] 
\end{multline}
We refer to \citet{robertson2025pfn} and \citet{dhirestimating} for a detailed explanation of PFNs and Neural Processes for causal effect estimation.

\section{Related Work}

To the best of our knowledge, all existing Causal Foundation Model (CFM) approaches operate in the PFN-paradigm, with a distinction into two types of methods: 1) models that are only trained on data from SCMs where the interventional query is identifiable from data---a separate model is required for each identifiable case, reflecting different causal assumptions, and 2) models that do not impose any assumptions and rely entirely on the posterior over SCMs.

\citet{ma2025foundation} propose using separate priors $p_1(\psi), p_2(\psi), \ldots, p_N(\psi)$ for different causal settings (back-door adjustment, front-door adjustment, and instrumental variables). A different model $q_{\theta_1}, q_{\theta_2}, \ldots, q_{\theta_N}$ is trained on data from each individual prior. This ensures by a restriction of the possible graph-types per prior, that a supposed ``ground-truth'' causal effect can be recovered in the limit of infinite data. Furthermore, CausalPFN \citep{balazadeh2025causalpfn} is a PFN trained for estimating causal effects only in the back-door case (more specifically under strong ignorability), demonstrating excellent performance on benchmarks that satisfy that assumption.
The above models presuppose access to causal information that renders the interventional distribution point-identifiable. 
In the absence of this, these models are misspecified and cannot be used for estimating interventional distributions.
In contrast, methods like DoPFN \citep{robertson2025pfn}, MACE-TNP \citep{dhirestimating}, and ACTIVA \citep{sauter2025activa} do not impose restrictions on the distribution over causal graphs and train on a single prior $p(\psi)$.
These methods estimate the uncertainty due to non-identifiability, and also take advantage of the fact that the Bayesian posterior can be informative about the SCM in Markov-equivalent cases by, e.g., relying on the independence of mechanisms \citep{dhircontinuous, dhirestimating}. 
However, estimates from these models can still include uncertainty from SCMs that can easily be rejected through domain knowledge. 
As exemplified in \cref{fig:example}, when domain knowledge is available, these models are overly conservative with their estimates, because they omit potentially critical causal information. 

We argue that an all-in-one causal foundation model requires the ability to condition on both observational data \textbf{and} any available causal information when making predictions. A major challenge for such an approach is how to effectively condition on causal information---which is the question we tackle in this paper.

\section{Methodology}

To address the problem of conditioning on causal information in CFMs, we aim to answer two questions: \textbf{(a)} What type of causal information should one condition on in CFMs? \textbf{(b)} How to condition on causal information in CFMs? We begin by arguing in favour of partially known ancestral matrices as an answer to question \textbf{(a)}:

\subsection{Ancestral Relationships}
\label{sec:ancestral-relationships}
Beyond the direct causal connections specified via an adjacency matrix (introduced in \cref{sec:causal_structures}), we can characterise the broader dependency structure of an SCM through its ancestor relationships. A variable $z_i$ is said to be an ancestor of $z_j$ if there exists a (possibly multi-edge) directed path $z_i \leadsto z_j$ in the DAG. The totality of those relationships can be represented by the \emph{ancestor matrix} $\mathbf{T} \in \{0,1\}^{K \times K}$, defined such that $T_{ij} = 1$ if and only if $z_i$ is an ancestor of $z_j$ (and $0$ otherwise). Formally, $\mathbf{T}$ corresponds to the transitive closure of the adjacency matrix $\mathbf{A}$, i.e.\ $\mathbf{T} = \mathrm{sgn}(\mathbf{A} + \mathbf{A}^2 +  \mathbf{A}^3 + \ldots)$. Ancestor matrices generally contain less information than adjacency matrices since multiple adjacency matrices can yield the same ancestral matrix \citep{aho1972transitive}. However, this provides them with practical advantages over working with adjacency matrices: 

While adjacency matrices require a practitioner to know for all pairs of variables $(z_i, z_j)$ if $z_i$ is a direct cause of $z_j$, ancestral matrices only require the assumption that $z_i$ is a direct \textit{or indirect} cause of $z_j$. It is, for example, uncontroversial to state that smoking causes cancer; however, assuming that smoking is a \textit{direct cause} is no longer correct when closely inspecting the biological mechanisms \citep{united2010tobacco}. 
Furthermore, one can argue that ancestor matrices asymptotically contain the same causal information as adjacency matrices under standard assumptions. This is because from an infinite observational dataset, one can determine the skeleton of the ground-truth causal DAG under standard conditions (Theorem 3.4 in \citet{spirtes2000causation}), while checking for all pairs $(i,j)$ if $z_i$ is ancestor of $z_j$ yields the remaining edge orientations.

\subsection{Partial Ancestral Information}
\label{sec:partial_ancestral_information}

While knowing ancestral instead of direct causal relationships has the meaningful practical benefits discussed above, some (or even all) of the ancestral relationships within a set of variables may be unclear. We thus propose to utilise the partially known ancestor matrix (PAM), denoted by $\tilde{\mathbf{T}} \in \{-1, 0, 1\}^{K \times K}$. Specifically, we set $\tilde{T}_{ij} = 1$ if it is known that $z_i$ is a cause of $z_j$ and $\tilde{T}_{ij} = -1$ if it is known that $z_i$ is not a cause of $z_j$. Importantly, we set $\tilde{T}_{ij} = 0$ if the causal relationship between $z_i$ and $z_j$ is unknown.

Having partial causal knowledge that cannot be specified in a fully known causal graph is common in real-world datasets. As an example, we discuss how this applies to the IHDP dataset in \cref{app:ihdpexample}. 

Furthermore, the temporal order always provides a natural constraint on ancestral causal structure, since causes must precede effects. If variable $z_i$ is measured before $z_j$ in time, we immediately know that $\tilde{T}_{ji} = -1$, ruling out backward causation. In practice, this type of temporal information, at least for a subset of all available variables, is arguably a very common type of causal information. 

Considering classical causal assumptions, it is straightforward to specify back-door and front-door constellations via ancestral relationships and thus PAMs. As an additional example, we discuss how \textit{unconfoundedness}, an assumption commonly made in the potential outcomes framework (PO; \citealt{imbens2015causal}), can be specified using PAMs in \cref{prop:unconf_and_partial_graph}. Later, we show that encoding unconfoundedness via a PAM provides meaningful benefits to the predictions of a CFM (\cref{sec:realcause}).

\paragraph{Conditioning on Graphs and Consistency} \citet{ma2025foundation} argue in favour of priors $p(\psi)$ that ensure consistency of the induced posterior $p(y|\doit(t), \xtest, \Dtest)$\footnote{Note that properties of this posterior do not necessarily translate to the approximation $q_{\theta}(y|\doit(t), \xtest, \Dtest) \approx p(y|\doit(t), \xtest, \Dtest)$ learned by the neural network.} by restricting the prior over graphs to specific identification setups, followed by training separate models for each case. However, graph-conditioning allows training a single model on a generic prior, while consistency is (trivially) implied whenever the graphical information suffices for identifiability. If the causal effect cannot be identified via partial graph knowledge, the posterior quantifies the uncertainty due to unidentifiability. We provide a mathematical discussion of consistency using (partial) graph information in our Bayesian setup in \cref{app:consistency}.

\subsection{Architectures for Conditioning on Graphs in Causal Foundation Models}
\label{sec:conditioning_on_graphs_in_cfms}

Now, we discuss architectural modifications for conditioning on graphical causal information in the form of partially known ancestral matrices, introduced in \cref{sec:partial_ancestral_information}. 

\paragraph{Graph-Conditioned Architecture}
Following \citet{dhirestimating} and \citet{robertson2025pfn}, we employ a transformer-based architecture \citep{vaswani2023attentionneed}. We begin by embedding the input data such that the model
(i) can capture intra-sample correlations,
(ii) preserves the specific value of each variable, and
(iii) can distinguish between different variable roles (e.g.\ feature vs.\ treatment) and data types (observational vs.\ interventional).
The exact embedding procedure is detailed in \cref{app:architecturedetails}.

We then apply multiple transformer blocks with a factorised attention mechanism that alternates between two operations at each layer:
(i) sample-wise attention, which exchanges information across different samples, and
(ii) feature-wise attention, which models relationships between variables within a sample. In the feature-wise stage, we condition the model on the partial ancestral matrix $\tilde{\mathbf{T}}$, and propose two complementary mechanisms for injecting this structural information: \emph{structural attention biasing} and \emph{structural feature modulation}.

\paragraph{Structural Attention Biasing} The feature-wise attention blocks in our model compute attention scores between all $K$ variables (features) in a dataset, in order to exchange information about the representation of each feature. The attention scores can be summarised in a matrix $\mathbf{A} \in \mathbb{R}^{K \times K}$, where entry $A_{i,j}$ quantifies how much attention feature $i$ (query) pays to feature $j$ (key):
\begin{equation}
    A_{i,j} =
    \frac{
        \exp\!\left(\frac{\mathbf q_i^\top \mathbf k_j}{\sqrt{d_k}}\right)
    }{
        \sum_{j'=1}^{K}
        \exp\!\left(\frac{\mathbf q_i^\top \mathbf k_{j'}}{\sqrt{d_k}}\right)
    },
\end{equation}
where $\mathbf q_i$ denotes the query vector corresponding to feature $i$, $\mathbf k_j$ the key vector of feature $j$, and $d_k$ the dimensionality of the key vectors.

To inject causal knowledge, we bias the attention scores in the direction of the ancestral causal relationships between the variables, exploring two variants: 

\textbf{1.) Soft Attention Bias:}
We introduce learnable scalar biases that encourage attention towards known ancestors and discourage attention towards known non-ancestors. The attention logits are modified by adding $B_{ij}$ to $A_{ij}$ to obtain $\widetilde{A}_{i,j} \defeq A_{ij} + B_{ij}$, where
\[
B_{ij} =
\begin{cases}
+\beta_{\text{anc}}      & \text{if } z_j \text{ ancestor of } z_i \quad \quad \quad (\tilde{T}_{ji} = 1) \\
-\beta_{\text{non-anc}} & \text{if } z_j \text{ non-ancestor of } z_i \quad \ (\tilde{T}_{ji} = -1) \\
0                       & \text{if } \text{unknown} \quad \quad \quad \quad \quad \quad (\tilde{T}_{ji} = 0) .
\end{cases}
\]
Here, $\beta_{\text{anc}}$ and $\beta_{\text{non-anc}}$ are learnable scalar bias terms that increase or decrease attention between features depending on their ancestral relationship, and we use separate learnable parameters per head per layer. (E.g., for a model with six layers and four attention heads, we would use 24 additional parameters). The matrix $\widetilde{\mathbf A}$ is then used to compute the result of the attention computation: $\operatorname{Attention}(\mathbf Q, \mathbf K, \mathbf V)
    =
    \widetilde{\mathbf A}\mathbf V, $
where $\mathbf V \in \mathbb{R}^{K \times d_v}$ denotes the matrix of value vectors. This soft formulation allows the model to prioritise causal parents while remaining robust to incomplete graph information. Note that we add the bias $\beta_{\text{anc}}$ to the attention score between the $i$-th and $j$-th variable if $z_j$ is ancestor of $z_i$ (and not if $z_i$ is ancestor of $z_j$), as this encourages the flow of information in the causal direction. 

\textbf{2.) Hard Masking:} Alternatively, we enforce sparsity by setting $B_{ij} = -\infty$ if $\tilde{T}_{ji} = -1$, and $B_{ji} = 0$ otherwise. This strictly prohibits information flow from variables known not to be causes while treating ancestor and unknown node relationships equally.

\paragraph{Structural Modulation (GCN)}
Complementary to local attention biasing, we condition the model on a global representation of the ancestry matrix $\tilde{\mathbf{T}}$ using a Graph Convolutional Network (GCN; \citealt{duvenaud2015convolutionalnetworksgraphslearning}) adapter. While the attention biases described above act directly on pairwise interactions, the GCN provides each variable with a structure-aware representation that captures its broader role in the causal graph. Concretely, we initialise each node with a learnable role embedding indicating whether the corresponding variable is a feature, treatment, or outcome, and propagate these embeddings over the graph induced by $\tilde{\mathbf{T}}$ to obtain graph-conditioned node representations $\mathbf{Z}_{\mathrm{graph}}$. These are injected into each transformer block via Adaptive Layer Normalisation (AdaLN; \citealt{peebles2023scalablediffusionmodelstransformers}):
\[
    \widetilde{\mathbf{X}}
    =
    \boldsymbol{\gamma}(\mathbf{Z}_{\mathrm{graph}}) \odot \operatorname{LN}(\mathbf{X})
    +
    \boldsymbol{\delta}(\mathbf{Z}_{\mathrm{graph}}),
\]
where $\mathbf{X}$ denotes the transformer hidden representations. This allows the hidden representation of each variable to be modulated according to its structural position in the causal graph, enabling the model to incorporate global graph information throughout the network.

\paragraph{Sample-wise Attention and Decoding}
After graph-conditioned feature processing, the model performs sample-wise attention to aggregate information across the dataset. We implement this using Multi-Head Self-Attention over the observational data, followed by Multi-Head Cross-Attention from the query to the observational set. This process is repeated over $L$ blocks. Finally, we extract the outcome representation for the query sample and project it through a linear head to parameterise a discretised bar distribution \citep{hollmann2022tabpfn} over the outcomes. 
Complete implementation details are provided in \cref{app:architecturedetails}.

\begin{figure*}[t]
   
       \centering
        \includegraphics[width=\linewidth]{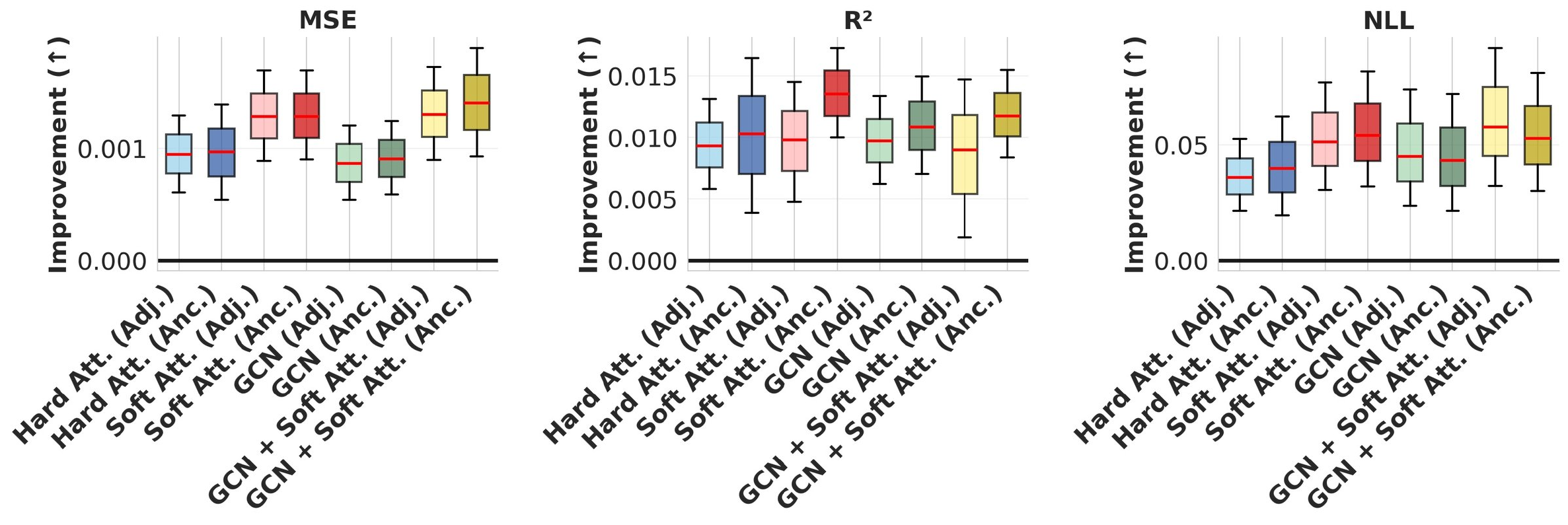}

    \caption{Performance comparison of different graph-conditioning methods on linear-Gaussian data. Results are shown as the improvement in negative log-likelihood (NLL), mean-squared-error (MSE) and coefficient of determination ($R^2$) on held-out test data relative to a non-graph-conditioned baseline (represented by the value $0$ on the $y$-axis). The error-bars represent $95$-percent bootstrap confidence intervals for the median. We compare soft and hard attention biases as well as a GCN-based conditioning approach, using either ground-truth adjacency matrices (Adj.) or ancestor matrices (Anc.) as graph input. All methods yield significant benefits with the Soft Attention and GCN + Soft Attention methods performing the best.}
    \label{fig:detailed_agg_res_lingaus} 
\end{figure*}

\subsection{A Tabular Foundation Model Prior with a Causal Implementation}
\label{sec:tfm_prior}

To provide a challenging and realistic test-bed for training and evaluating the effect of graph-conditioning, we turn to priors, i.e., simulators of synthetic tabular data, from state-of-the-art Tabular Foundation Models (TFMs; \citealt{hollmann2022tabpfn, qu2025tabicl}). Since no existing TFM prior provides a natively causal implementation (also see the discussion in \cref{app:causal_prior}), we built upon the prior in \citep{robertson2025pfn} and implemented advancements from the state-of-the-art open-source prior in TabICL in terms of functional mechanisms \citep{qu2025tabicl}. We show that training a TFM on this prior yields competitive predictive performance compared to, e.g., a Random Forest with default hyperparameters \citep{breiman2001random}, on small-scale tabular regression benchmark datasets (details in \cref{app:empirically_validating}). In the subsequent experiments, to evaluate the effect of graph-conditioning, we use exactly this prior (``TFM-Prior''), as well as a linear-Gaussian version with exclusively linear mechanisms and Gaussian noise (``LinGaus-Prior''). Our prior implies the independence of noise variables, independence of causal mechanisms, as well as causal sufficiency, i.e., no hidden variables. A somewhat surprising finding is that the TFM-Prior yields relatively small absolute differences between causal and observational posteriors when performing hard interventions (see \cref{app:causal_effects_prior}).

\section{Experiments}

We empirically study the effectiveness of graph-conditioning in Causal Foundation Models (CFMs), progressing from controlled synthetic settings to complex and semi-synthetic benchmarks. In all our experiments, we assume no hidden variables, i.e., causal sufficiency. 

\subsection{Which Graph-conditioning Method Works Best?}
\label{sec:best_performing_method}

Our primary objective is to enable Causal Foundation Models (CFMs) to condition effectively on ancestral knowledge. As outlined in \cref{sec:conditioning_on_graphs_in_cfms}, this can be achieved through several distinct mechanisms, and our first research question seeks to identify the most effective approach. To isolate the contribution of the graph information, we first benchmark these methods on synthetic case studies where the graph information is critical because the underlying causal graph is not identifiable from the observational data alone. To achieve this, we train and evaluate a model on the LinGaus Prior (see \cref{sec:which_graph_works_best_details}). Then we evaluate on a test-set sampled from the same distribution.

\cref{fig:detailed_agg_res_lingaus} presents the improvement in Negative Log-Likelihood (NLL), Mean-Squared Error (MSE) and the $R^2$ score relative to a baseline that is trained without graph-conditioning but otherwise identical. We also evaluate two types of conditioning information: the ground-truth adjacency matrix (Adj.) and the ancestor matrix (Anc.). (see \cref{sec:ancestral-relationships} for more details). As shown in \cref{fig:detailed_agg_res_lingaus}, all graph-conditioned variants outperform the baseline, confirming that the model effectively leverages structural information. Among the investigated architectures, Soft Attention and the combination of Soft Attention with a GCN consistently outperform both Hard Attention and the GCN-only variants. When comparing conditioning signals (Adj. vs. Anc.), we generally only observe a small divergence in performance, empirically confirming that CFMs can be effectively conditioned on ancestor information. Interestingly, using the ancestor matrix instead of the adjacency matrix yields small benefits in terms of MSE and $R^2$, especially for the soft-attention-based methods. We hypothesise that restricting attention solely to direct parents (as implied by the adjacency matrix) may be overly limiting for bias-based mechanisms, which benefit from the broader context provided by the ancestor matrix. Detailed results can be found in \cref{app:more_resultsA}. While the observed effect of graph-conditioning compared to the baseline is always significant, it is small in absolute terms across all metrics. However, this is in line with overall small causal effects under our TFM prior (\cref{app:causal_effects_prior}).

\begin{figure*}[t]
    \centering
    \includegraphics[width=0.99\linewidth]{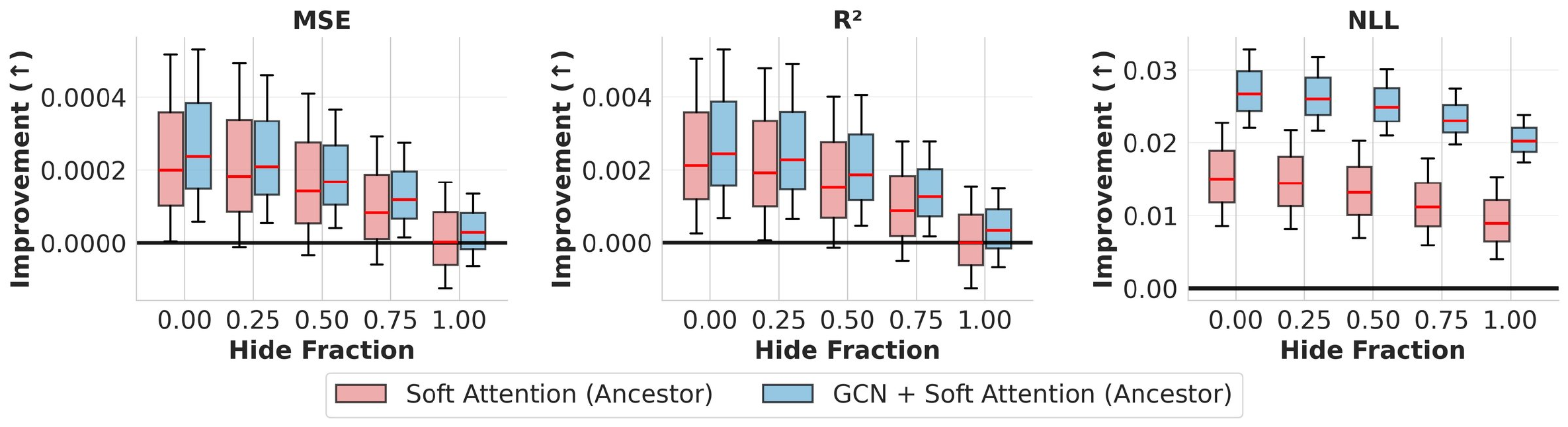}
    \caption{Performance on complex synthetic data for different fractions of hidden entries in the ancestor matrix for the Soft Attention (Ancestor) and GCN + Soft Attention (Ancestor) graph-conditioning approaches. Results are measured as the absolute difference to a baseline that does not support graph conditioning but is otherwise trained identically. The baseline performance is represented by the value $0$ on the $y$-axis. Error-bars represent 95-percent bootstrap confidence intervals for the mean. Providing full graph information significantly improves performance while GCN + Soft Attention performs better than just using Soft Attention.}
    \label{fig:complexmech_result}
\end{figure*}

\subsection{Is Graph-Conditioning still Effective with Partial, rather than Full Ancestry Information?}

In order to support no, full and partial conditioning, we need to amortise during training over scenarios with varying amounts of available ancestor information. This raises the question of how much performance the model loses due to the resulting challenges this entails. To obtain an answer, we compare a model capable of handling any amount of missing information (amortised) with specialised models: one specialised in never using ancestral information; one specialised in always using full ancestral information. As observed in \cref{fig:aggregate_idk_levels} in \cref{app:can_we_amortise_idk}, the amortised procedure during training is efficient, with the performance of the amortised model being comparable to that of the specialised models. This shows that our training procedure yields a model that is flexible concerning the amount of information it receives without sacrificing performance---a single model can effectively handle varying amounts of available information.

\subsection{Does Graph-conditioning Help with Complex Causal Structures?}
\label{sec:complexmechbench}

Besides the simple linear-Gaussian mechanisms that we considered in the previous sections, we now turn to the more complex and realistic case of the TFM prior. We evaluate the two best graph-conditioning mechanisms from \cref{sec:best_performing_method}, namely Soft Attention and Soft Attention combined with a GCN on $30{,}000$ datasets from the complex prior. We vary the fraction of entries in the ancestor matrix that are hidden, i.e., replaced by $0$, and measure the absolute improvement compared to a model that is not trained with ancestor-conditioning, but otherwise identical. 

First, \cref{fig:complexmech_result} shows that providing full graph-information improves the error of the model in all metrics and for both graph-conditioning methods. When we increase the fraction of hidden entries in the ancestor matrix towards one, the models' performance degrades and is, for MSE and $R^2$, no longer significantly different from the non-graph-conditioned baseline. The NLL metric indicates a consistent benefit of training models with ancestral information on the complex mechanisms, even when no ancestral information is provided. We hypothesise that this is due to improved training dynamics when the causal structure of the data is respected during training. In terms of the NLL, one can observe that the combined GCN + Soft Attention approach performs noticeably better than just using Soft Attention, while in terms of the less sensitive MSE and $R^2$ metrics, which assess only the mean of the PPD, there is a systematic yet not significant difference. Overall, the improvement relative to the baseline remains small in terms of absolute values, which is again in line with small differences between the interventional and observational posteriors (\cref{app:causal_effects_prior}).

\subsection{Ignorance is Better than Misspecification}
\label{sec:ood}

A natural concern for graph-conditioned models is their robustness to errors in the provided ancestral information. As shown in \cref{app:misspec}, when edges are misspecified (i.e., the stated causal direction is incorrect), performance degrades quickly as the fraction of incorrectly specified edges grows---an expected finding that is equally true for traditional causal methods. However, a key practical advantage of our approach is that practitioners need \emph{not} commit to a potentially wrong causal direction: uncertain relationships can instead be marked as ``unknown'' by setting the corresponding entry to $0$. We find that this strategy is substantially less harmful than misspecification, with performance degrading only gradually as more edges are hidden. This highlights the practical benefit of supporting partial ancestral matrices: when causal domain knowledge is uncertain, leaving the relevant edges unspecified is far better than guessing incorrectly.

\begin{table*}[t]
\centering
\caption{Providing graph information helps for semi-synthetic data: We evaluate the effect of providing partially known ancestral information encoding unconfoundedness on semi-synthetic RealCause datasets. We report the performance for a \textbf{predictive} model trained on our complex prior, a model with \textbf{partially known ancestral information}, and a model with \textbf{no ancestral information}. As metrics, we use root Precision in Estimation of Heterogeneous Effect ($\sqrt{\text{PEHE}}$) and relative error when predicting the ATE $\epsilon_{\text{ATE}}$ (see \cref{app:realcausemetrics}).}
\label{tab:slearner_dofm_comparison}
\small
\scalebox{1.0}{
\begin{tabular}{@{}lrrrrrrrr@{}}
\toprule
 & \multicolumn{2}{c}{IHDP} 
 & \multicolumn{2}{c}{ACIC} 
 & \multicolumn{2}{c}{CPS} 
 & \multicolumn{2}{c}{PSID (unbalanced)} \\
\cmidrule(lr){2-3} \cmidrule(lr){4-5} \cmidrule(lr){6-7} \cmidrule(lr){8-9} 
Method 
& $\sqrt{\text{PEHE}} \ (\downarrow)$ & $\epsilon_{\text{ATE}} \ (\downarrow)$
& $\sqrt{\text{PEHE}} \ (\downarrow)$ & $\epsilon_{\text{ATE}} \ (\downarrow)$
& $\sqrt{\text{PEHE}} \ (\downarrow)$ & $\epsilon_{\text{ATE}}\ (\downarrow)$
& $\sqrt{\text{PEHE}}\ (\downarrow)$ & $\epsilon_{\text{ATE}}\ (\downarrow)$ \\
\midrule
Predictive 
& 6.79$\scriptstyle\pm$0.81 & 0.81$\scriptstyle\pm$0.11
& 3.14$\scriptstyle\pm$0.47 & 0.38$\scriptstyle\pm$0.06
& 11393$\scriptstyle\pm$31 & 0.78$\scriptstyle\pm$0.00
& \textbf{11820}$\scriptstyle\pm$41 & 1.03$\scriptstyle\pm$0.01 \\

No Ancestral Info.
& 6.28$\scriptstyle\pm$0.79 & 0.67$\scriptstyle\pm$0.05
& 3.47$\scriptstyle\pm$0.47 & 0.46$\scriptstyle\pm$0.09
& 12800$\scriptstyle\pm$55 & 0.99$\scriptstyle\pm$0.01
& 13096$\scriptstyle\pm$26 & \textbf{0.98$\scriptstyle\pm$0.00} \\

Ancestral Info.
& \textbf{5.49$\scriptstyle\pm$0.78} & \textbf{0.49$\scriptstyle\pm$0.08}
& \textbf{2.79$\scriptstyle\pm$0.45} & \textbf{0.17$\scriptstyle\pm$0.08}
& \textbf{11213$\scriptstyle\pm$60} & \textbf{0.70$\scriptstyle\pm$0.02}
& 12975$\scriptstyle\pm$24 & 1.09$\scriptstyle\pm$0.01 \\
\bottomrule
\end{tabular}
}
\end{table*}

\subsection{Does Partially Known Ancestral Information Help on Semi-Synthetic Data?}
\label{sec:realcause}

Finally, we also investigate whether providing our model with graph information offers benefits on the commonly used semi-synthetic RealCause Benchmark \citep{neal2020realcause}. The data for this benchmark is simulated to satisfy the unconfoundedness assumption, 
which does not specify the causal relationship between the covariates. This makes it impossible to directly use complete adjacency matrices or full ancestral matrices to exactly represent this assumption; however, it can be specified in terms of partially known ancestral matrices (\cref{prop:unconf_and_partial_graph}). To evaluate the RealCause benchmark, we train a model on the complex-mechanism prior from \cref{sec:complexmechbench} and assess the effect of providing ancestral information. Fixing the prior while using a model that (a) is trained in the predictive setup, (b) uses the ancestral information, and (c) does not use the ancestral information eliminates the confounding effect of different priors, thus allowing for a precise assessment of the effect of conditioning on ancestor information. We compare the performance of the model without any graph information (``No Ancestral Info.'') to the model with the correct graph information (``Ancestral Info.''). Furthermore, we compare against a predictive model (``Predictive'') trained on the complex-mechanisms prior, as under unconfoundedness, predicting the causal effect effectively becomes a predictive task \citep{kunzel2019metalearners}. 

\cref{tab:slearner_dofm_comparison} shows that providing the model trained on the generic complex prior with the partially known ancestral matrix systematically improves performance compared to not conditioning on ancestral information for the IHDP, ACIC and CPS datasets. For PSID, all methods achieve suboptimal performance, which we find is due to the highly imbalanced distribution of treated versus untreated individuals. When performing rebalancing, all methods substantially improve while conditioning on the ancestral information yields substantial benefits (see \cref{app:results_psid_unbalanced}).

\subsection{Comparison to Other Causal Foundation Models}
\label{sec:cfm_comparison}

Unlike our approach, CausalPFN \citep{balazadeh2025causalpfn} and CausalFM \citep{ma2025foundation} are trained exclusively for specific identification settings. We compare our models against these baselines on the TFM prior, stratified by whether the backdoor criterion holds (\cref{app:cfm_comparison,fig:bdcomplex}), which is assumed by both CausalPFN and (this version of) CausalFM. Our graph-conditioned models perform only slightly better than CausalPFN when the backdoor criterion is satisfied; when it is violated---a setting where CausalPFN and CausalFM rely on a misspecified assumption---the advantage of our models increases substantially. On the RealCause benchmark (\cref{tab:rebuttal_semisynth}), CausalPFN performs best overall, which can be attributed to its training process being designed precisely for this case \citep{neal2020realcause, balazadeh2025causalpfn}; once graph conditioning is introduced, our model outperforms CausalFM and Do-PFN \citep{robertson2025pfn} across almost all datasets.

\section{Discussion}

In this work, we study how to condition CFMs on structured causal knowledge. 
We argue that partially known ancestral matrices provide a realistic and expressive way to encode domain knowledge, capturing the common case where some causal relationships can be known, some ruled out, and some uncertain.
Architecturally, we identify that the most efficient method to inject this information is by combining attention biasing with a graph convolutional encoder. 
Using this strategy, we enable the construction of a single CFM that accounts for uncertainty when specific causal domain knowledge is lacking, while effectively leveraging varying amounts of available causal information, which we demonstrate on complex, non-identifiable synthetic data as well as semi-synthetic benchmarks.

\paragraph{Limitations} This work focuses on graphical assumptions regarding fully observed variables encoded through partially known ancestry. We do not consider the case of hidden variables. While partial graph information is a widely applicable form of causal knowledge, CFMs could also benefit from flexibly conditioning on aspects of functional mechanisms or hidden variables. Moreover, while we evaluate on a complex causal prior, validated in the predictive setting, the development of practically useful ``all-in-one'' CFMs would require comprehensive real-world causal benchmarks that currently do not exist. Despite these current constraints, our method successfully demonstrates that conditioning on causal information is both feasible and effective. This establishes a promising foundation for the next generation of CFMs, bringing us closer to robust, all-in-one causal effect estimation models.

\section*{Acknowledgements}
AR is supported by the Leverhulme Trust via the Leverhulme Centre for the Future of Intelligence. 
AW acknowledges support from a
Turing AI fellowship under grant EP/V025279/1, The Alan Turing Institute, and the Leverhulme Trust via CFI.

\section*{Impact Statement}

This paper presents work whose goal is to advance the field of 
Machine Learning. There are many potential societal consequences 
of our work, none of which we feel must be specifically highlighted here.

\bibliography{bib}
\bibliographystyle{icml2026}

\newpage
\appendix
\onecolumn

\newcommand{\Ginfo}{\mathcal{G}}
\newcommand{\tG}{\mathcal{G}^*}
 \newcommand{\ttheta}{\theta^*}
 \newcommand{\tpsi}{\psi^*}

\section{Detailed Propositions}

\label{app:proofs}

\begin{proposition}[Unconfoundedness from partial graph knowledge]
\label{prop:unconf_and_partial_graph}
Assume a causally sufficient SCM on the observed variables 
$\mathcal{N}=\{t,y,x_1,\dots,x_K\}$ with potential outcomes $y_0,y_1$, a binary treatment $t \in \{0,1\}$, and
 \[
 y = t y_1 + (1-t)y_0 .
 \]
 Let $\tilde{\mathbf{T}}$ be a PAM such that:
 \begin{itemize}
 \item $t \leadsto y$ is specified in the PAM,
 \item for all $i$, $x_i \leadsto t$ and $x_i \leadsto y$ are specified in the PAM,
 \item no variable outside $\bm{x}=\{x_i\}_{i=1}^K$ is allowed to be an ancestor of both $t$ and $y$
       (i.e.,\ the PAM excludes any common ancestor of $t$ and $y$ outside $x$).
 \end{itemize}
 Then the unconfoundedness condition holds:
 \[
 (y_0,y_1)\ \perp\!\!\!\perp\ t \mid x .
 \]
 \end{proposition}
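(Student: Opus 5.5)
We prove this with the back-door criterion in the SCM, reading the potential outcomes as interventional quantities: $y_1 = y_{\doit(t=1)}$ and $y_0 = y_{\doit(t=0)}$. Concretely, each $y_s$ is the value of $y$ obtained by propagating the \emph{same} exogenous noise $(\varepsilon_k)_k$ through the intervened model $\psi_{\doit(t=s)}$. The consistency equation $y = t y_1 + (1-t) y_0$ then holds automatically. By the standard SCM result (Pearl's back-door theorem in its counterfactual form), it suffices to check that $\bm{x}$ satisfies the back-door criterion relative to $(t,y)$. That is, we need (i) no element of $\bm{x}$ is a descendant of $t$, and (ii) $\bm{x}$ blocks every path from $t$ to $y$ that begins with an arrow into $t$.

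\textbf{Step 1 (non-descendants).} The PAM states $x_i \leadsto t$ for every $i$, so each $x_i$ is an ancestor of $t$. Acyclicity then forbids $x_i$ from also being a descendant of $t$, which gives (i).

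\textbf{Step 2 (back-door paths blocked).} Take any back-door path $t \leftarrow w \cdots y$. Walk along it from $t$. Either the path is directed from $y$ back to $t$, or it has a first collider, or it contains a fork.
\begin{itemize}
\item If the path is directed $t \leftarrow \cdots \leftarrow y$, then $y$ is an ancestor of $t$. Together with $t \leadsto y$ this gives a cycle, so this case cannot occur.
\item If the path contains no collider, it has a unique source node $c$ with directed subpaths $c \to \cdots \to t$ and $c \to \cdots \to y$. So $c$ is a common ancestor of $t$ and $y$, and by the third assumption $c \in \bm{x}$. A non-collider in the conditioning set blocks the path.
\item If the path contains colliders, we must show it is blocked somewhere. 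Consider the segment from $t$ to the first collider $m$. It has the form $t \leftarrow \cdots \leftarrow c \to \cdots \to m$, or it is a directed chain $t \leftarrow \cdots \leftarrow m'$ that turns into $m$. In the first form, with a source $c$ before the collider, $c$ is an ancestor of $t$. To show $c$ is blocked we must show $c \in \bm{x}$, or that some intermediate node on $t \leftarrow \cdots \leftarrow c$ lies in $\bm{x}$.
\end{itemize}

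\textbf{Main obstacle.} The collider case is where I expect the argument to be hardest. The key auxiliary claim I would try to prove first is that every ancestor of $t$ is an ancestor of $y$, via $t \leadsto y$. So the third assumption says that \emph{every} ancestor of $t$ other than $t$ itself lies in $\bm{x}$. With this in hand, the node $w$ adjacent to $t$ on any back-door path is a parent of $t$, hence a common ancestor of $t$ and $y$, hence $w \in \bm{x}$. Since $w$ is the middle of $t \leftarrow w \cdots$, it is a non-collider, so conditioning on it blocks the path.

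This shortcut settles Step 2 uniformly, so the case analysis above is only needed as a sanity check. Combining (i) and (ii), the back-door criterion gives $y_s \perp\!\!\!\perp t \mid \bm{x}$ for each $s$. To get the joint statement for $(y_0, y_1)$, use the SCM representation: $(y_0,y_1)$ is a function of the exogenous noises of the non-descendants-of-$t$ part and the noises of the descendants of $t$. Conditional on $\bm{x}$, which contains all parents of $t$, $t$ depends only on $\varepsilon_t$. Independence of the noises then yields the joint independence.
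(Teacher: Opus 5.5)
Your proof is correct. It is also more rigorous than the paper's argument, which is only an informal sketch: all common causes of $t$ and $y$ lie in $\bm{x}$, causal sufficiency rules out hidden confounders, and intervening on $t$ only replaces $t$'s equation, so conditioning on $\bm{x}$ removes the shared variation. You instead isolate two structural facts. The first is $\mathrm{pa}(t)\subseteq\bm{x}$, because $t\leadsto y$ makes every ancestor of $t$ a common ancestor of $t$ and $y$. The second is $\bm{x}\cap\mathrm{de}(t)=\emptyset$, from the second bullet plus acyclicity. You then conclude from independence of the exogenous noises. What this buys is clarity about which hypotheses do the work. The paper's sketch reasons only from ``all common causes are in $\bm{x}$'' and never uses non-descendance; the remark after the proposition even says the $x_i\leadsto t$ entries may be left unspecified. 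Yet your Step 1 is indispensable. Take $t\to x\to y$ with $x=t+\varepsilon_x$ and $y=x+\varepsilon_y$: there is no common ancestor at all, yet $y_s=s+\varepsilon_x+\varepsilon_y=s+x-t+\varepsilon_y$ depends on $t$ given $x$. Two tidy-ups. First, the back-door detour, including the collider case analysis, is unnecessary, because your last paragraph already proves the joint statement directly from the two facts. Second, in that paragraph you should state explicitly that $\bm{x}$, like $(y_0,y_1)$, is a function of $(\varepsilon_k)_{k\neq t}$; this is where Step 1 enters. That gives $\varepsilon_t\perp\!\!\!\perp(y_0,y_1,\bm{x})$, and then $t=f_t(\mathrm{pa}(t),\varepsilon_t)$ yields $t\perp\!\!\!\perp(y_0,y_1)\mid\bm{x}$. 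Note also that if the node set is literally $\{t,y,x_1,\dots,x_K\}$, the third bullet is vacuous and $\mathrm{pa}(t)\subseteq\bm{x}$ follows from $t\leadsto y$ and acyclicity alone. Your argument covers the more general reading with additional variables, which is the only reading in which that bullet has content.
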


 \begin{proof}[Proof sketch]
 The PAM constraints enforce that all common causes of $t$ and $y$ are contained in $\bm{x}$.
 Since the model is causally sufficient, there are no unobserved confounders.
 Conditioning on $\bm{x}$ therefore removes all shared sources of variation between the treatment
 assignment mechanism and the outcome mechanism.
 Intervening on $t$ changes only the structural equation of $t$ and does not introduce new
 dependence between $t$ and $(y_0,y_1)$ once $\bm{x}$ is fixed.
 Hence, conditional on $\bm{x}$, treatment assignment is independent of the potential outcomes,
 which is exactly the unconfoundedness assumption $(y_0,y_1)\ \perp\!\!\!\perp\ t \mid \bm{x}$.
 \end{proof}

\begin{remark}
Not assuming a path between $x_i$ and $t$, or between $x_i$ and the outcome $y$, can equally be expressed in the PAM by leaving those entries unspecified (set to $0$), and the unconfoundedness conclusion still follows. The only critical condition is that no variable outside $\bm{x}$ is a common ancestor of both $t$ and $y$.
\end{remark}

\section{A Discussion on Consistency through Graphical Information}
\label{app:consistency}

 Suppose a ground-truth DAG $\tG$ with ground-truth mechanisms and distribution parameters $\ttheta$ fully specifying an SCM $\tpsi$.
Assume we have an observational dataset $\Dtrain_n$ of size $n$ sampled from $P(\Dtrain_n|\tpsi)$. Also, assume that we have correct, but potentially partial
graph information $\Ginfo_0$ about $\tG$, which, together with the parameter vector $\ttheta$, uniquely identifies the CID $P(y|\doit(t), \xtest, \theta^*, \Ginfo_0)$, such that $P(y|\doit(t), \xtest, \theta^*, \Ginfo_0) \neq P(y|\doit(t), \xtest, \theta', \Ginfo')$ for any tuple $(\theta', \Ginfo')\neq (\theta^*, \Ginfo_0)$.

The question we are interested in is whether, for infinitely many observational data points $n \to \infty$, the CID $P(Y|\doit(t), \xtest, \mathcal{D}_n, \Ginfo_0)$ will converge to identifiable CID $P(Y|\doit(t), \xtest, \theta^*, \Ginfo_0)$.

To address this question, assume a parametrisation such that $\theta \mapsto P(\Dtrain_n|\theta, \Ginfo_0)$ is injective, which might require working in function-space. Consistency arguments from Bayesian nonparametrics yield (under their respective conditions): 

$$P(\theta|\Ginfo_0, \Dtrain_n) \rightarrow \delta(\ttheta).$$

One can use Doob's theorem for this type of consistency; which requires $\theta$ to be finite-dimensional, and ensures consistency almost surely in terms of $P(\theta)$, but otherwise only needs measurability of the map $\theta \mapsto P(\Dtrain_n|\theta, \Ginfo_0)$. See \citet{miller2018detailed} (Theorem 2.4). 
One can also make use of more powerful results such as Theorem 3 by \citet{de2013bayesian}, or Corollary 1 by \citet{walker2004new}, also used by \citet{nagler2023statistical} in the context of PFNs, which ensures weak convergence almost-surely with respect to the data-generating distribution under the ground-truth SCM. 

Now, under regularity conditions on the functional $P(\theta| \Ginfo_0,\mathcal{D}_n) \mapsto \int P(y|\doit(t), \xtest, \theta, \Ginfo_0) P(\theta| \Ginfo_0,\mathcal{D}_n) d\theta$, this implies: 

$$P(Y|\doit(t), \xtest, \mathcal{D}_n, \Ginfo_0) \to P(Y|\doit(t), \xtest, \theta^*, \Ginfo_0),$$

which is exactly consistency to the identifiable CID. If the partial graphical information $\Ginfo_0$, together with $\theta^*$ fully identifies the SCM $\tpsi$, one has $P(Y|\doit(t), \xtest, \theta^*, \Ginfo_0) = P(Y|\doit(t), \xtest, \tpsi)$, i.e., consistency in terms of the ground-truth SCM $\tpsi$.

One such regularity condition is continuity of $P(\theta| \Ginfo_0,\mathcal{D}_n) \mapsto \int P(y|\doit(t), \xtest, \theta, \Ginfo_0) P(\theta| \Ginfo_0,\mathcal{D}_n) d\theta$, which allows to use the Continuous Mapping Theorem (Theorem 2.3 in \citet{van2000asymptotic}). Alternatively, it suffices that $P(y|\doit(t), \xtest, \theta, \Ginfo_0) P(\theta| \Ginfo_0,\mathcal{D}_n)$ is a bounded and continuous function of $\theta$ by the Portmanteau Theorem (Lemma 2.2 in \citet{van2000asymptotic}).

\section{Example for Partial Ancestral Information}
\label{app:ihdpexample}

One of the most frequently used datasets to benchmark causal effect estimation is the Infant Health and Development Program (IHDP) dataset \citep{brooks1992effects}. Here, a partially known causal structure arises naturally from temporal, biological, and logical considerations. Maternal characteristics such as age, race, and endocrine or nervous system disorders clearly precede and can plausibly influence prenatal behaviors like cigarette consumption, implying $\tilde{T}_{\text{maternal\_age,cigarettes}} = 1$. Likewise, these maternal and prenatal factors are known to affect birth-related outcomes such as the child’s bilirubin level ($\tilde{T}_{\text{cigarettes,bilirubin}} = 1$), whereas reverse effects such as $\tilde{T}_{\text{bilirubin,cigarettes}} = -1$ can be ruled out. Social and demographic variables such as maternal race or birthplace are fixed attributes and thus cannot be caused by medical conditions or behaviours ($\tilde{T}_{\text{race,medical}} = 1$, $\tilde{T}_{\text{medical,race}} = -1$). In contrast, many dependencies among medical covariates, such as between endocrine and nervous system conditions, remain uncertain.

\section{Details on the Architecture}
\label{app:architecturedetails}
Our architecture shares a lot of similarities with the architectures proposed in \citet{robertson2025pfn,dhirestimating}, while also allowing for conditioning on partial ancestral information.

We assume we observe $N_{tr}$ samples from the observational data $\Dtrain = \{ \xtraini, t^{(i)}, \ytrain^{(i)} \}_{i=1}^{N_{tr}}$ and we aim to predict the distribution of the interventional outcome $\ytest$ given a set of interventional features $\xtest$ and a treatment value $t$: $p(\ytest \mid \doit(t), \xtest, \Dtrain)$. We parameterise the latter with a bar distribution following \citet{robertson2025pfn,hollmann2022tabpfn}.

\paragraph{Data Pre-Processing} The input to the model is comprised of 1) the observational data $\Dtrain = \{ \xtraini, t^{(i)}, \ytrain^{(i)} \}_{i=1}^{N_{tr}} \in \mathbb{R}^{N_{tr} \times (D+2)}$, where $D$ represents the maximum number of features (excluding the treatment and outcome) used during training, and 2) the interventional query ($\xtest \in \mathbb{R}^{D}$, $t \in \mathbb{R}$). If the dataset of interest contains fewer than $D$ features we pad the input with zeros.

\paragraph{Embeddings} Before applying any attention mechanism, we aim to embed the data such that the model 1) is able to capture intra-sample correlations amongst features, 2) can preserve the specific value of each variable, and 3) is able of distinguishing between the different types of variables (i.e., feature vs. treatment) and data sources (observational vs. interventional). Let $\mathbf{X} \in \mathbb{R}^{S \times D}$ and $\mathbf{t} \in \mathbb{R}^{S \times 1}$ denote a batch of $S$ samples (where $S = N_{tr} + 1$) comprising both observational data and the interventional query. We first concatenate features and treatments to form the input tensor $\mathbf{V} = [\mathbf{X}; \mathbf{t}] \in \mathbb{R}^{S \times (D+1)}$. This input is processed to produce embeddings in $\mathbb{R}^{S \times (D+1) \times d_{\text{model}}}$, where $d_{\text{model}}$ is the hidden dimension of the model:

\begin{enumerate}
\item \textbf{Row Embeddings:} To capture intra-sample correlations, we process the entire feature vector of a sample simultaneously. The concatenated input $\mathbf{V}$ is projected via a Multi-Layer Perceptron ($\text{MLP}_{\text{row}}: \mathbb{R} \to \mathbb{R}^{d_{\text{model}}}$) to generate a global sample embedding. This embedding is then broadcast across the feature dimension to yield $\mathbf{E}_{\text{row}} \in \mathbb{R}^{S \times (D+1) \times d_{\text{model}}}$.

\item \textbf{Cell Embeddings:} To preserve the precise value of individual variables, we apply a separate MLP ($\text{MLP}_{\text{cell}}: \mathbb{R}^{1} \to \mathbb{R}^{d_{\text{model}}}$) element-wise to each scalar entry in $\mathbf{V}$. To distinguish between variable types (feature vs. treatment) and data sources (observational vs. interventional), we add learnable role embeddings $\mathbf{e}_{\text{role}} \in \mathbb{R}^{d_{\text{model}}}$ to the output:
\begin{equation}
    \mathbf{E}_{\text{cell}}^{(i, j)} = \text{MLP}_{\text{cell}}(\mathbf{V}_{ij}) + \mathbf{e}_{\text{type}}[j] + \mathbf{e}_{\text{source}}[i]
\end{equation}
where $\mathbf{e}_{\text{type}}$ distinguishes between features ($1 \dots D$) and the treatment column ($D+1$), and $\mathbf{e}_{\text{source}}$ distinguishes between observational samples ($1 \dots N_{tr}$) and the query sample.
\end{enumerate}

The final embedding $\mathbf{H}^{(0)} \in \mathbb{R}^{S \times (D+1) \times d_{\text{model}}}$ is a learnable weighted sum of these two representations:
\begin{equation}
\mathbf{H}^{(0)} = \lambda_{\text{row}} \cdot \mathbf{E}_{\text{row}} + \lambda_{\text{cell}} \cdot \mathbf{E}_{\text{cell}}
\end{equation}
where $\lambda_{\text{row}}$ and $\lambda_{\text{cell}}$ are learnable scalar parameters initialized to control the relative importance of global context versus local values. Finally, we append a learnable token representing the target variable $y$, resulting in a final input sequence shape of $S \times (D+2) \times d_{\text{model}}$.

\paragraph{Feature Positional Encodings} Inspired by \citet{qu2025tabicl}, we additionally inject feature positional encodings. Although the architecture already breaks the symmetry through the use of row embeddings, we further assure that we avoid representational collapse through sinusoidal positional encodings. For the sequence of $D+2$ tokens (representing the $D$ features, the treatment indicator, and the target variable), we compute the encoding $\mathbf{P} \in \mathbb{R}^{(D+2) \times d_{\text{model}}}$ as:\begin{equation}\mathbf{P}{(pos, 2i)} = \sin\left(\frac{pos}{10000^{2i/d{\text{model}}}}\right), \quad \mathbf{P}{(pos, 2i+1)} = \cos\left(\frac{pos}{10000^{2i/d{\text{model}}}}\right)\end{equation}These fixed encodings are broadcast across the batch and sample dimensions and added element-wise to the variable embeddings $\mathbf{H}^{(0)}$.

\paragraph{Attention Sinks} 
Following \citet{grinsztajn2025tabpfn25advancingstateart}, we prepend a set of $N_{\text{sink}}$ learnable dummy samples to the input sequence in order to improve optimisation stability and prevent the attention heads from collapsing when no relevant context is found. We instantiate these sinks as learnable parameters $\mathbf{S}_{\text{feat}} \in \mathbb{R}^{N_{\text{sink}} \times (D+1) \times d_{\text{model}}}$ (corresponding to features and treatment) and $\mathbf{S}_{\text{target}} \in \mathbb{R}^{N_{\text{sink}} \times 1 \times d_{\text{model}}}$ (corresponding to the target variable). These are concatenated to form complete dummy samples $\mathbf{S} = [\mathbf{S}_{\text{feat}}; \mathbf{S}_{\text{target}}]$ and prepended to the sample axis of the input batch. 

After performing the above-mentioned modifications, the input to the transformer backbone $\tilde{\mathbf{H}}^{(0)}$ has dimensions $(N_{tr} + 1 + N_{\text{sink}}) \times (D+2) \times d_{\text{model}}$.

\paragraph{Graph Conditioning Mechanism} The main contribution of our work is enabling a causal foundation model to condition on partial ancestry information when this is available. We explore several ways in which this conditioning can be performed: 1) through structural biasing in the attention mechanism with the use of masks, or 2) by constructing structural embeddings via a Graph Convolutional Network (GCN)~\citep{duvenaud2015convolutionalnetworksgraphslearning} and injecting the information into each layer of the transformer through conditional layer-normalisation~\citep{peebles2023scalablediffusionmodelstransformers}.

\subparagraph{Structural Biasing (Attention Mask)} We utilise the partial ancestry matrix $\tilde{\mathbf{T}} \in \{-1, 0, 1\}^{(D+2) \times (D+2)}$ to encourage the attention mechanism to respect the underlying causal structure encoded in the matrix. We define a bias matrix $\mathbf{B}$ based on the relationship between a query node $i$ (the effect) and a key node $j$ (the potential cause) in two modes:

\begin{itemize}
\item \textbf{Soft Attention Bias:} We introduce a learnable bias to encourage attention towards known ancestors and discourage it towards non-ancestors. The attention scores are modified by adding a bias term $\mathbf{B}_{ij}$:
\begin{equation}
\mathbf{B}_{ij} =
\begin{cases}+\beta_{\text{anc}} & \text{if } \tilde{T}_{ij} = 1 \text{ (Known Ancestor)} 
\\ -\beta_{\text{non-anc}} & \text{if } \tilde{T}_{ij} = -1 \text{ (Known Non-Ancestor)} 
\\ 0 & \text{if } \tilde{T}_{ij} = 0 \text{ (Unknown)}
\end{cases}
\end{equation}
where $\beta_{\text{anc}}$ and $\beta_{\text{non-anc}}$ are learnable positive scalars. This soft bias encourages the model to respect the causal relationships encoded in the matrix, while retaining the flexibility to attend to other variables and thus robustness to incomplete graph information.

\item \textbf{Hard Attention Mask (Ablation Baseline):} We also study a variant where we enforce strict sparsity. We set $\mathbf{B}_{ij} = -\infty$ if $\tilde{T}_{ij} = -1$ (masking known non-ancestors), and $\mathbf{B}_{ij} = 0$ otherwise. This strictly prevents variables from attending to nodes known not to be their causes.
\end{itemize}

In both cases, we implicitly set the diagonal $\tilde{T}_{ii} = 1$ to ensure every variable always attends to itself.

\subparagraph{Graph Encoding (GCN)}
We begin by constructing a graph representation where nodes correspond to the variables in the input sequence. Given the partial ancestry matrix $\tilde{\mathbf{T}} \in \{-1, 0, 1\}^{(D+2) \times (D+2)}$, we first initialise node features $\mathbf{Z}^{(0)} \in \mathbb{R}^{(D+2) \times d_{\text{model}}}$ using learnable role embeddings specific to each node type (feature, treatment, or outcome). These initial features are processed by a Graph Convolutional Network (GCN) encoder operating on $\tilde{\mathbf{T}}$ to produce structure-aware node embeddings $\mathbf{Z}_{\mathrm{graph}}$. These embeddings subsequently modulate each transformer block via Adaptive Layer Normalisation (AdaLN)~\citep{peebles2023scalablediffusionmodelstransformers}.

\paragraph{Transformer Architecture} We follow \citet{dhirestimating} and use a transformer architecture~\citep{vaswani2023attentionneed} with factorised attention that alternates between processing interactions between features (intra-sample) and between samples (inter-samples). The input to the transformer are the obtained embeddings $\tilde{\mathbf{H}}^{(0)} \in \mathbb{R}^{(S + N_{\text{sink}}) \times (D+2) \times d_{\text{model}}}$, as well as the attention biases $\mathbf{B}$ and the graph encodings $\mathbf{Z}_{\mathrm{graph}}$ if the GCN mechanism is used.

\subparagraph{1. Feature-wise Attention (Graph-Conditioned)} This layer operates independently on each sample to capture the dependencies between variables. We employ a standard Multi-Head Self-Attention (MHSA) mechanism augmented with the graph conditioning strategies described previously, such that the output of layer $l$ becomes:
\begin{equation}
\mathbf{H}'^{(l)} = \text{MHSA}\left(\text{AdaLN}(\mathbf{H}^{(l-1)}, \mathbf{Z}_{\mathrm{graph}}), \text{mask}=\mathbf{B}\right) + \mathbf{X}
\end{equation}
where we introduce the conditioning on the GCN embeddings $\mathbf{Z}_{\mathrm{graph}}$ if they are available, and $\mathbf{B}$ applies the structural soft bias. If no GCN embeddings are provided AdaLN is replaced with a LayerNorm.

\subparagraph{2. Sample-wise Attention (Inductive)} This layer operates independently on each feature dimension to exchange information across samples. Similarly to \citet{dhirestimating}, we first perform MHSA amongst the observational data, followed by Multi-Head Cross Attention (MHCA) from the queries to the observational data for a reduced computational cost (as opposed to masked MHSA).

\subparagraph{3. Position-wise Feed-Forward Network}
The final sub-layer is a standard two-layer MLP with SwiGLU activation~\citep{shazeer2020glu}, applied to every token independently. Similar to the feature attention layer, this MLP is conditioned via AdaLN if the graph embeddings are provided, otherwise we use LayerNorm:
\begin{equation}
\mathbf{H}^{(l)} = \text{MLP}(\text{AdaLN}(\mathbf{H}'^{(l)}, \mathbf{Z}_{\mathrm{graph}})) + \mathbf{H}'^{(l)}
\end{equation}

\paragraph{Decoding}
After processing through $L$ layers, we obtain the final latent representation
$\mathbf{H}^{(L)} \in \mathbb{R}^{(S + N_{\text{sink}}) \times (D+2) \times d_{\text{model}}}$.
To produce a prediction for the interventional query, we slice this tensor to extract the representation corresponding to the outcome node of the interventional sample,
denoted by $\mathbf{h}_{\text{intv}} \in \mathbb{R}^{d_{\text{model}}}$.
Finally, we project $\mathbf{h}_{\text{intv}}$ through a linear prediction head
($\text{Linear}: \mathbb{R}^{d_{\text{model}}} \rightarrow \mathbb{R}^{K}$)
to obtain the parameters of the interventional outcome distribution.

\paragraph{Bar Distribution}
Following \citet{hollmann2022tabpfn}, we parameterise the interventional outcome distribution using the bar distribution. This approach partitions the outcome space into $K$ bins with supports
$(a_0, a_1], (a_1, a_2], \ldots, (a_{K-1}, a_K]$.
We fix $K = 1000$ during training and use equally sized bins. Standardising the outcome variable $y$ both during training and inference ensures a bounded co-domain.

For each interventional query, the model outputs a vector of logits
$\boldsymbol{\lambda} \in \mathbb{R}^{K}$.
Applying a softmax transformation yields a categorical distribution over bins,
$\boldsymbol{\Delta} = \mathrm{softmax}(\boldsymbol{\lambda})$.
The resulting bar distribution is given by
\begin{equation}
q_\theta(y \mid \boldsymbol{\lambda}) =
\sum_{k=0}^{K-1} \mathbb{I}(a_k < y \leq a_{k+1}) \, \Delta_k .
\end{equation}

The model is trained using a cross-entropy loss with respect to the bin index of the ground-truth outcome.

\newpage
\section{Implementation Details}
\label{app:implementation_details}

\subsection{The Causal TFM Prior}
\label{app:causal_prior}

Our causal prior builds on the principles introduced in DoPFN \citep{robertson2025pfn}. While DoPFN employs relatively simple structural mechanisms, consisting of a linear function followed by a nonlinearity and additive Gaussian noise, we substantially increase the expressiveness of the mechanisms, inspired by the construction proposed in \citep{qu2025tabicl}. The goal is to expose the model to a broad range of nonlinear, non-Gaussian, and partially discrete data-generating processes.

Sampling a dataset from the prior proceeds in two stages. First, a structural causal model (SCM) $\psi$ is sampled. Second, data are generated from this SCM. Throughout this section, a ``dataset'' refers to a single sample generated from one SCM; the set of datasets (thus a meta-dataset) of our PFN consists of many such datasets.

\subsubsection{Sampling the SCM}

As described in \cref{sec:causal_structures}, an SCM consists of three components: a directed acyclic graph (DAG), a collection of structural mechanisms, and noise distributions. These components are sampled independently, but with several dataset-wide hyperparameters shared across nodes.

\paragraph{Sampling the DAG}
We first sample the number of nodes in the DAG uniformly from $\{2, \dots, 52\}$, corresponding to the maximum number of nodes considered in our experiments. Next, a dataset-wide edge probability $p$ is drawn from a $\mathrm{Beta}(2,3)$ distribution. Conditioned on $p$ and the number of nodes, the DAG is sampled using an Erd\H{o}s--R\'enyi model with edge probability $p$.

\paragraph{Sampling the Mechanisms}
For each node, the structural mechanism determining its value as a function of its parents is sampled independently. Each mechanism is either implemented as an MLP-based mechanism or as an XGBoost-based mechanism \citep{chen2016xgboost}. MLP mechanisms yield continuous-valued variables, while XGBoost mechanisms can produce categorical features.

First, a dataset-wide probability of using XGBoost mechanisms is sampled from a categorical distribution with support $\{0.0,\, 0.1,\, 0.2,\, 0.3\}$ and corresponding probabilities $(0.8902,\; 0.0989,\; 0.00989,\; 0.000989)$. For each node, this probability is then used to decide uniformly at random whether its mechanism is implemented as an MLP or as an XGBoost regressor.

MLP mechanisms are implemented using a standard PyTorch \citep{paszke2019pytorch} multilayer perceptron with default weight initialization. The number of hidden layers is sampled from a categorical distribution with choices $\{0,\,1,\,2,\,3\}$ and probabilities $(0.875,\; 0.1,\; 0.025,\; 0.01)$. The hidden layer width is sampled independently from a categorical distribution with choices $\{1,\,2,\,4,\,6,\,8,\,10,\,12,\,14,\,16,\,32\}$ and corresponding probabilities proportional to $(0.7,\; 0.2,\; 0.1,\; 0.05,\; 0.04,\; 0.03,\; 0.02,\; 0.01,\; 0.01,\; 0.01)$.

The nonlinearities used in the MLP mechanisms follow those introduced in TabICL \citep{qu2025tabicl}. They include standard activation functions such as ReLU, ReLU6 \citep{krizhevsky2012imagenet}, SELU \citep{klambauer2017self}, SiLU \citep{hendrycks2016gaussian}, Softplus, the hard hyperbolic tangent defined as $\mathrm{HardTanh}(x) = \max(-1, \min(1, x))$, the signum function, the sine function, a radial basis function given by $\mathrm{RBF}(x) = \exp(-x^2)$, and the exponential function. In addition, they include the absolute value $f(x) = |x|$, a clipped indicator-like function $f(x) = \bm{1}_{|x| \leq 1}$, and a quadratic nonlinearity $f(x) = x^2$.

Finally, an approximate sample from a Gaussian process of the form $f(x) = \phi(x)^\top z$ is used, where $z \sim \mathcal{N}(0, I)$. The feature map $\phi(x) \in \mathbb{R}^N$ is defined as
$$
\phi(x) := \frac{w \odot \sin(a x + b)}{\lVert w \rVert_2},
$$
with feature dimension $N := 256$. The parameters are sampled independently as $b_i \sim \mathcal{U}[0, 2\pi]$, $a_i \sim \mathcal{U}[0, N]$, and $w_i := a_i^{-\exp(u_i)}$, where $u_i$ is sampled from a standard uniform distribution.

XGBoost mechanisms are implemented as XGBoost regressors \citep{chen2016xgboost} fitted on $10$ to $500$ samples of standard-normal noise for both inputs and targets, analogous to TabICL \citep{qu2025tabicl}. With probability $50\%$, we add standard-normal noise to the output of the regressor. This results in features that are clustered around specific values while not being strictly categorical. The output of the XGBoost regressor is subsequently passed through a nonlinearity sampled in the same manner as for the MLP mechanisms.

Each mechanism additionally takes a noise variable as input. This noise is either added after applying the nonlinearity, added before applying it, or treated as an additional input dimension to the mechanism and thus mixed into the input vector. For variables without parents, we sample a value from the exogenous noise distribution (described below) and pass it through a mechanism sampled as described above. Furthermore, with a probability of $50\%$, we standardise (using BatchNorm) the output of each mechanism.

\paragraph{Sampling the Noise Distributions}
To sample the noise distributions in the SCM, we distinguish between exogenous noise, which determines the values of variables without parents, and endogenous noise, which together with the parent values determines the values of variables with parents. Both noise distributions are constructed in the same way, but with different dataset-wide hyperparameters.

Specifically, the noise distribution is defined as an equal-weight mixture of Normal, Gamma, Laplace, Student-$t$, and Gumbel distributions, with zero mean. The standard deviation is sampled from a Gamma distribution $\text{Gamma}(\mu,\sigma)$ with mean $\mu$ and standard deviation $\sigma$. For exogenous noise, $\mu$ is sampled from a $\text{LogNormal}(1,1)$ distribution and $\sigma$ from a $\text{Uniform}(0.1,0.4)$ distribution. For endogenous noise, $\mu$ is sampled from a $\text{LogNormal}(-3,6)$ distribution and $\sigma$ from a $\text{Uniform}(0.0,0.5)$ distribution. All noise hyperparameters are fixed within a dataset.

\subsubsection{Sampling Data}
\label{app:sampling_data}

We use the sampled SCMs to generate both purely observational data and paired observational–interventional data.

\paragraph{Observational Data}
To obtain an observational dataset from an SCM $\psi$, we apply ancestral sampling as described in \cref{sec:causal_structures}. The number of training points $N$ is sampled uniformly between $1$ and $1000$, with the remaining $1000-N$ points used as test data. Ancestral sampling yields a table of training and testing data, from which one column is uniformly at random designated as the target and the remaining columns as input features.

All test features are standardised by subtracting the empirical mean and dividing by the empirical standard deviation. The target variable is normalised to lie in the interval $[-1,1]$. Outliers are removed by clipping values above the $99.5$th percentile $q_{99.5}$ to $q_{99.5}$ and values below the $0.5$th percentile $q_{0.005}$ to $q_{0.005}$.

To avoid implausible datasets during training, we reject datasets whose target variance after normalisation is smaller than $0.01$ or that exhibit fewer than $20\%$ unique target values across training samples.

\paragraph{Observational--Interventional Data}
To sample paired observational and interventional datasets from an SCM $\psi$, we first uniformly at random assign two distinct nodes in the causal graph to serve as the treatment variable $t$ and the outcome variable $y$. An observational training dataset is then obtained via ancestral sampling.

Importantly, to simulate interventions, we perform a hard intervention on a particular node $t$ by removing all its incoming edges. Subsequently, we simulate an interventional test-dataset by drawing i.i.d. samples from the intervened-upon SCM. A critical choice here is the distribution $p_{int}(t)$ from which to sample the values of $T$ that are used to perform the intervention. (This distribution is not prescribed via the SCM itself). 
We choose $p_{int}(t)$ to be the same distribution as the marginal distribution over $T$ under the observational distribution induced via the SCM $\psi$, i.e.,

\begin{equation}
    p_{int}(t|\psi) = \int p(t, \mathbf{x}, y|\psi) d \mathbf{x} d y.
\end{equation}

This means that the distribution of interventional values depends on the distribution of observational data within a specific SCM---and, critically, matches exactly observational distribution of treatments. We implement this by performing two passes through the not-intervened-upon SCM: The first one samples the observational dataset; for the second forward pass we sample different noise values, but keep everything the same, and only collect the values for the treatment variable. Then, the SCM is intervened upon, we resample the noise again, and use the observational treatment values from the previous pass to simulate the interventions. 

The paired observational–interventional data are preprocessed in the same manner as the purely observational data, including standardisation of features, normalisation of the target, outlier removal, and rejection of implausible datasets.

\begin{figure}[t]
    \centering
    \includegraphics[width=0.5\linewidth]{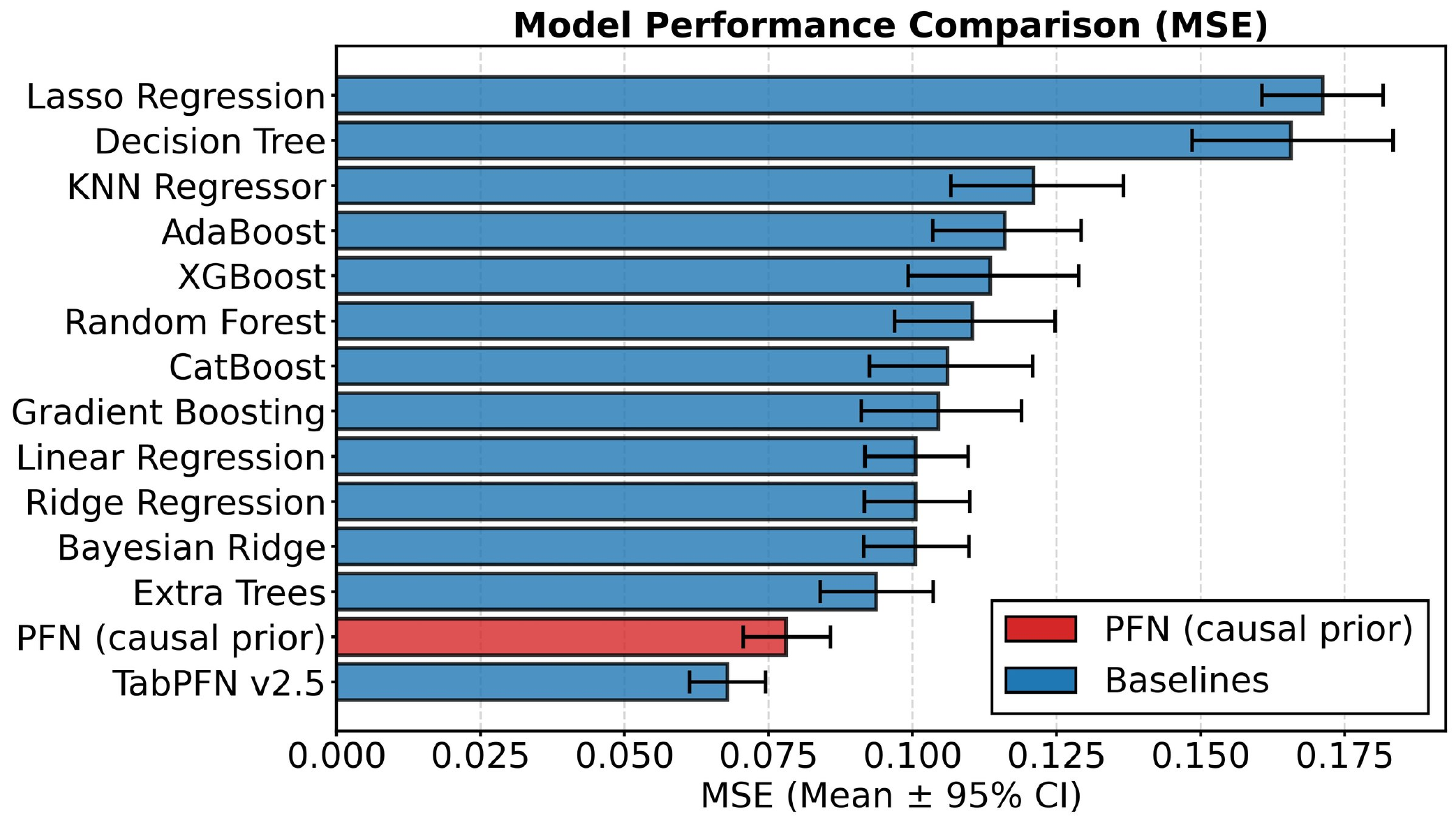}
    \caption{Predictive performance on small-scale datasets of a PFN trained on our causal prior compared to standard baselines. We subsample datasets with up to 1000 training points from the TabArena regression benchmark and consider out-of-the-box performance, i.e., default hyperparameters. A PFN trained on our prior achieves strong performance compared to the baselines, but is outperformed by TabPFN v2.5.}
    \label{fig:predictive_performance_bar_plot}
\end{figure}

\subsection{Empirically Validating the Causal Prior}
\label{app:empirically_validating}

To the best of our knowledge, all existing priors for tabular foundation models are not natively causal and do not directly support the generation of interventional data \citep{hollmann2022tabpfn, qu2025tabicl, zhangmitra, qu2026tabiclv2}. The priors for state-of-the-art commercial models such as TabPFN \citep{hollmann2025accurate} or LIMIX \citep{zhang2025limix} might be implemented as SCMs; however, this information is neither publicly disclosed nor is the code accessible. The priors by \citet{robertson2025pfn, dhirestimating, ma2025foundation} are causal in their implementation, but are rather simple (they do not allow sampling categorical variables, for instance) and
only evaluate on small-scale synthetic or semi-synthetic causal benchmarks.

We therefore developed a custom, \textit{natively causal}, TabPFN-style prior that allows to generate observational, as well as interventional data while giving access to the SCMs used to generate the data. This prior builds upon the simple causal prior of \citet{robertson2025pfn} and uses nonlinearities inspired by TabICL \citep{qu2025tabicl}. More specifically, our prior allows to sample SCMs by (i) drawing the number of nodes uniformly between 2 and 52, (ii) sampling a causal graph via the Erd\H{o}s--R\'enyi scheme. Subsequently, (iii) the distribution over exogenous and endogenous variables is randomly determined, followed by (iv) sampling functional mechanisms as randomly initialised Multi-Layer Perceptrons (MLPs) or as XGboost regressors \citep{chen2016xgboost} fitted to Gaussian noise. An SCM from the prior can then be used to directly draw observational data, or to sample interventional data after modifying the SCM. We discuss the details in \cref{app:causal_prior}. 

To validate the proposed prior, we train a tabular foundation model in the purely predictive setup and compare its performance on small-scale tabular regression datasets with up to 1000 training samples (as done for the first version of TabPFN). More specifically, we evaluate a model trained on our causal prior using the 13 real-world regression datasets from the TabArena benchmark \citep{erickson2025tabarena} and compare it against standard tabular baselines. We subsample small-scale datasets with 1000 training and 1000 testing-points 25 times from each of the 13 real-world datasets. All methods use default hyperparameters and our PFN (unlike TabPFN v2.5), uses simple preprocessing and no self-ensembling strategies during inference. Additionally, our PFN is only trained on 3.2 million synthetic datasets compared to 130 million for TabPFN v2 \citep{hollmann2025accurate}, and probably even more for TabPFN v2.5. 

Our experiments show that, on small-scale datasets, the PFN trained on the proposed causal prior achieves similar or slightly better results than untuned baselines while TabPFN v2.5 achieves the lowest mean MSE. (Please see \cref{fig:predictive_performance_bar_plot}).

\begin{figure}[t]
    \centering
    \includegraphics[width=0.48\linewidth]{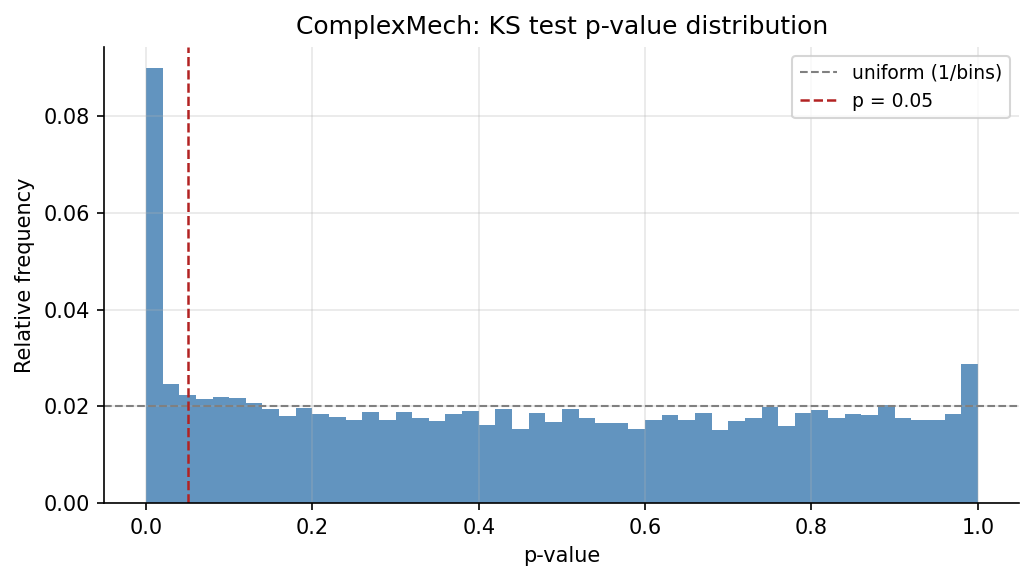}
    \includegraphics[width=0.48\linewidth]{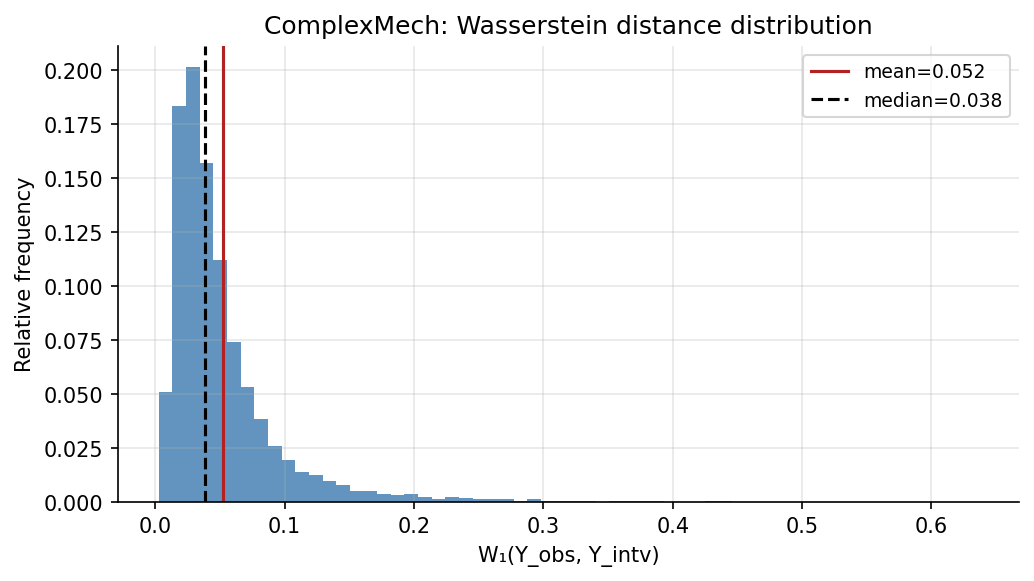}
    \caption{
    Distribution of Kolmogorov–Smirnov (KS) test $p$-values (left) and Wasserstein-1 ($W_1$) distances (right) across $10{,}000$ SCMs sampled from the complex-mechanism prior. Each comparison is between $p(y \mid \doit(t), \psi)$ and $p(y \mid t, \psi)$. Small $p$-values indicate detectable discrepancies. While $12.64\%$ of SCMs yield $p < 0.05$, most exhibit weak effects. The $W_1$ distances have mean $0.052$ and show a heavy right tail.
    }
    \label{fig:KSTest}
\end{figure}

\begin{figure}
    \centering
    \includegraphics[width=0.48\linewidth]{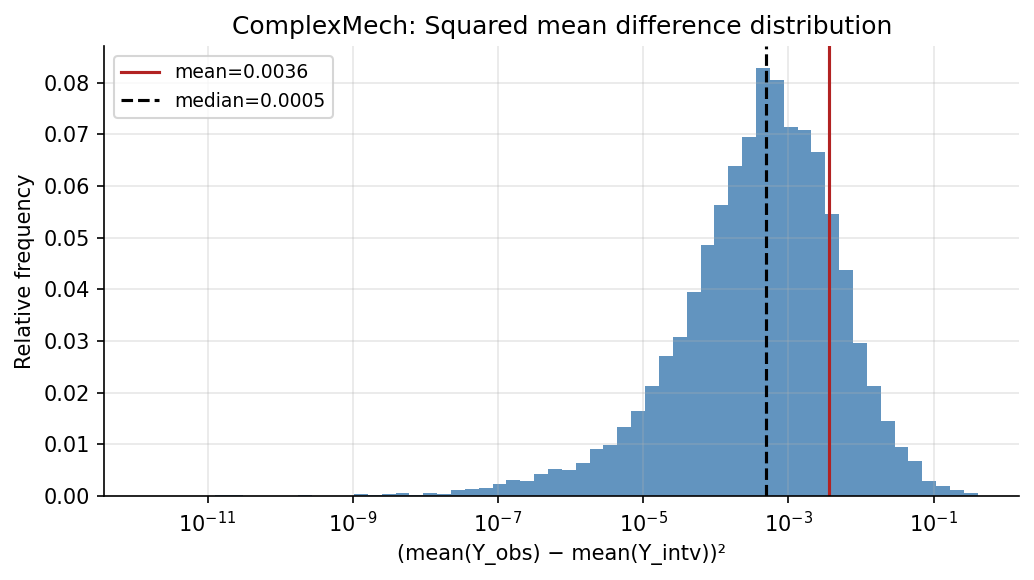}
    \includegraphics[width=0.48\linewidth]{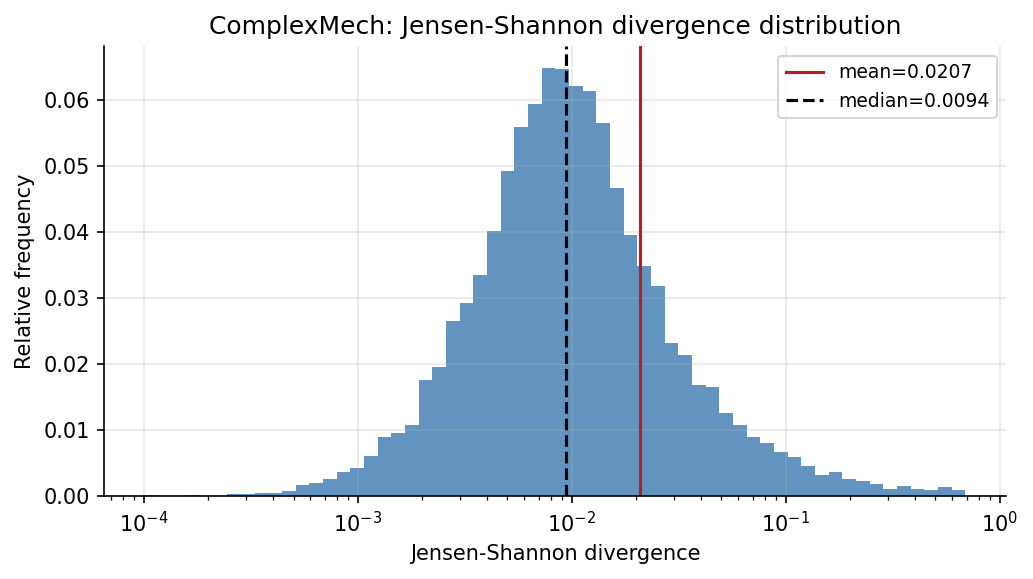}
    \caption{
    Distribution of squared mean differences (left) and Jensen–Shannon (JS) divergences (right) between $p(y \mid \doit(t), \psi)$ and $p(y \mid t, \psi)$ across $10{,}000$ SCMs. Both axes are shown on a logarithmic scale. The mean squared difference is $0.0036$ and the mean JS divergence is $0.0207$. Both metrics exhibit heavy-tailed behaviour, indicating that while most causal effects are small, larger effects occur with non-negligible probability.
    }
    \label{fig:EffectSizes}
\end{figure}

\subsection{Causal Effects under the Tabular Foundation Model Prior}
\label{app:causal_effects_prior}

While a causal prior, validated against real-world data in the observational setting and equipped with the intervention distributions described above, it is of interest to investigate the causal implications of such a TFM prior. Concretely, we want to investigate the (expected) discrepancy $D_c(\psi)$ between the interventional distribution $p(y \mid \doit(t), \psi)$ and the observational conditional $p(y \mid t, \psi)$ distributions:
\begin{equation}
    D_c(\psi) \defeq  \mathbb{E}_{p_{int}(t \mid \psi)} \left[ d\!\left(p(y \mid \doit(t), \psi),\, p(y \mid t, \psi)\right) \right],
\end{equation}

where $d$ denotes a metric or divergence on probability distributions and $p_{int}(t \mid \psi)$ is the intervention distribution (cf.~\cref{app:sampling_data}). We are interested in the distribution of $D_c(\psi)$ for SCMs $\psi \sim p(\psi)$ drawn from the TFM-Prior and the LinGaus-Prior.

To estimate this distribution, we sample $10{,}000$ SCMs from the prior. For each SCM, we compare $p(y \mid \doit(t), \psi)$ and $p(y \mid t, \psi)$ using multiple discrepancy measures. First, we compute the $p$-value of a Kolmogorov–Smirnov (KS) test \citep{massey1951kolmogorov} with the null hypothesis that both distributions are identical. This provides an interpretable proxy for $1 - d(\cdot,\cdot)$: small $p$-values indicate that the null hypothesis of equal distributions can be rejected, implying a detectable causal effect.

In addition, we report three direct discrepancy measures: the Wasserstein-1 distance ($W_1$), the squared mean difference, and the Jensen–Shannon (JS) divergence. The squared mean difference is simply the distribution of the squared difference between the mean of the observational and the interventional samples. 

\cref{fig:KSTest} shows the distribution of KS $p$-values together with the corresponding $W_1$ distances. While $12.64\%$ of SCMs yield $p < 0.05$, with a pronounced peak for $p$-values below $0.01$, indicating detectable differences between $p(y \mid \doit(t), \psi)$ and $p(y \mid t, \psi)$, the majority of datasets exhibit large $p$-values. This suggests that detectable causal effects are relatively rare, but existent, under the prior. The $W_1$ distances have a mean of $0.052$ indicating that most datasets have very small absolute differences between the observational and interventional datasets under the TFM prior. There is, however a relatively heavy tail towards higher $W_1$ distances. 

\cref{fig:EffectSizes} further quantifies these effects using the squared mean difference (mean $0.0036$) and the JS divergence (mean $0.0207$). Both quantities are shown on a logarithmic scale and likewise exhibit heavy-tailed behaviour, reinforcing the observation that the prior predominantly generates weak effects, with occasional larger deviations.

\cref{fig:LingausCausalEff} shows that the situation is analogous for the LinGaus prior, which is identical to the TFM prior, except that the noise variables are always Gaussian and the mechanisms are exclusively linear: On $10{,}000$ SCMs from this prior, $14.74\%$ have a detectable difference between the observational and interventional distributions, whereas the absolute effect sizes are, on average, even smaller than in the TFM Prior case.  

In summary, using our causal implementation of a TFM prior, together with a natural notion of interventions, leads to detectable, albeit somewhat rare, and, overall, in absolute terms, small differences between the causal interventional distribution $p(y|\doit(t), \psi)$ and the observational PPD $p(y|t, \psi)$. 

This also leads to the expectation that the absolute size of the effect of graph-conditioning remains rather small, since graphical information primarily matters for causal tasks, and one can expect the effect of utilising graph information to be roughly of the same order of magnitude as the difference between the observational and interventional distributions. The latter is because, in the purely unidentifiable case, one would expect the graph information to move the model from predicting $p(y|t, \psi)$ to predicting $p(y|\doit(t), \psi)$.

\begin{figure}[t]
    \centering
    \includegraphics[width=0.48\linewidth]{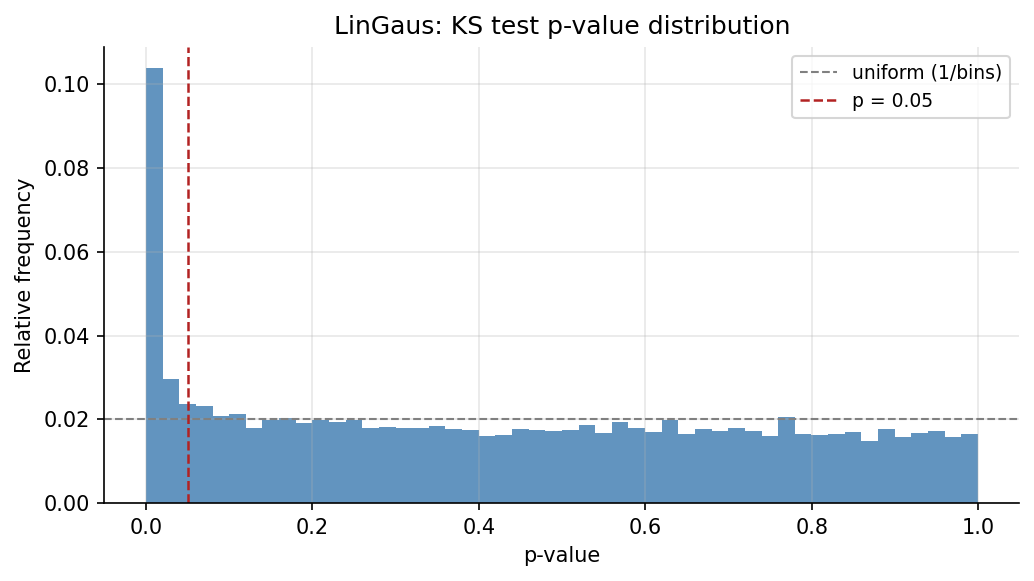}
    \hfill
    \includegraphics[width=0.48\linewidth]{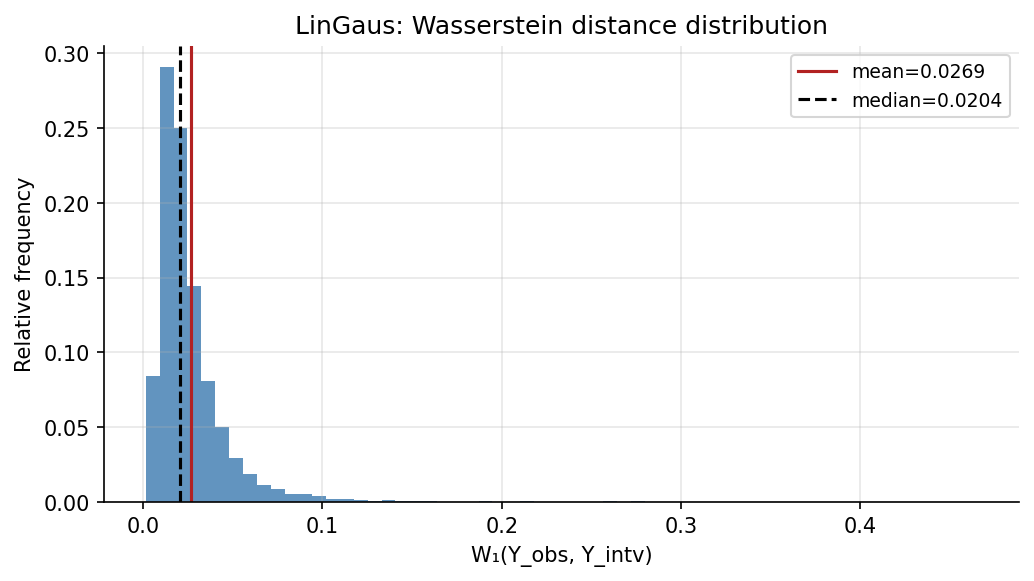}

    \vspace{0.5em}

    \includegraphics[width=0.48\linewidth]{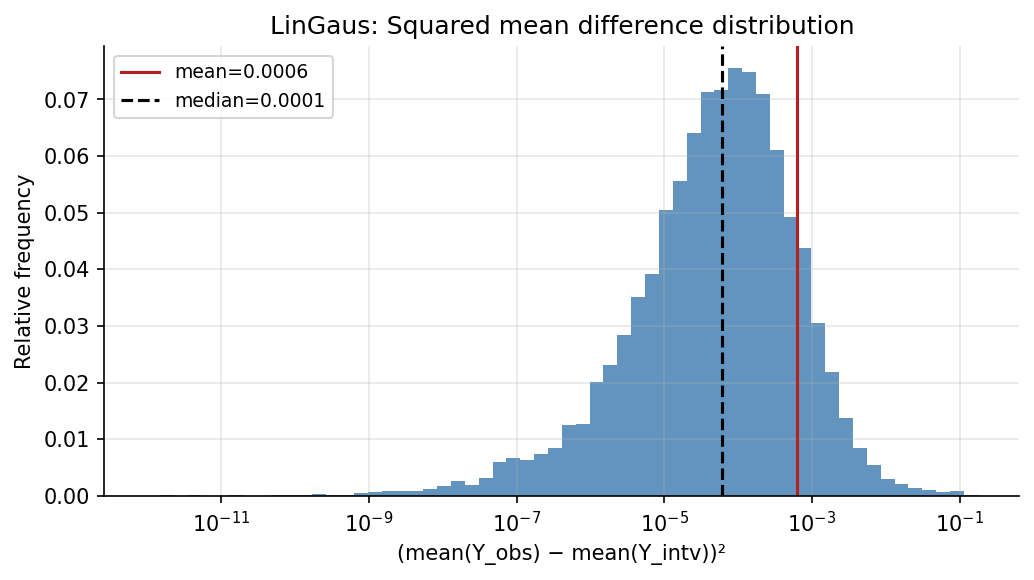}
    \hfill
    \includegraphics[width=0.48\linewidth]{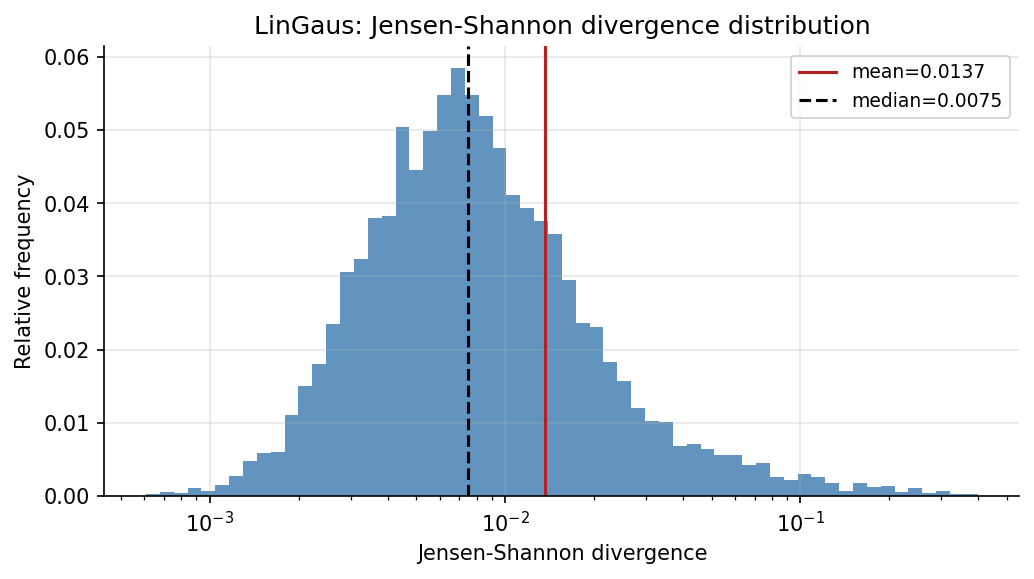}

    \caption{
    Distribution of Kolmogorov--Smirnov test $p$-values (upper left), Wasserstein-1 ($W_1$) distance, squared mean difference and Jensen-Shannon (JS) divergences between $p(y|\doit(t), \psi)$ and $p(y|t, \psi)$ on $10{,}000$ SCMs from the linear-Gaussian prior. 
    }
    \label{fig:LingausCausalEff}
\end{figure}

\newpage

\subsubsection{Correlation of Treatment and Outcome}

\begin{figure}
    \centering
    \includegraphics[width=0.47\linewidth]{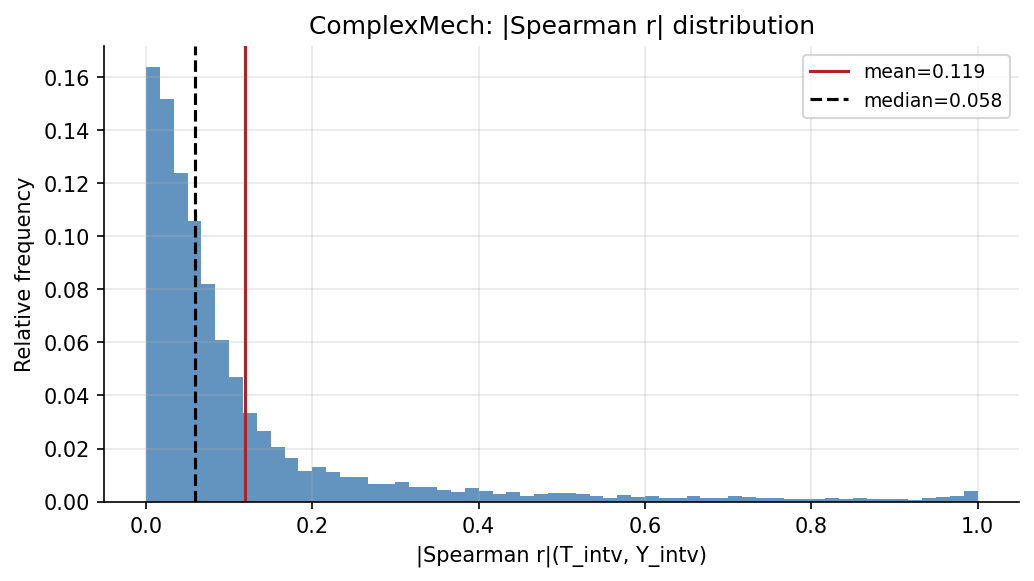}
    \includegraphics[width=0.47\linewidth]{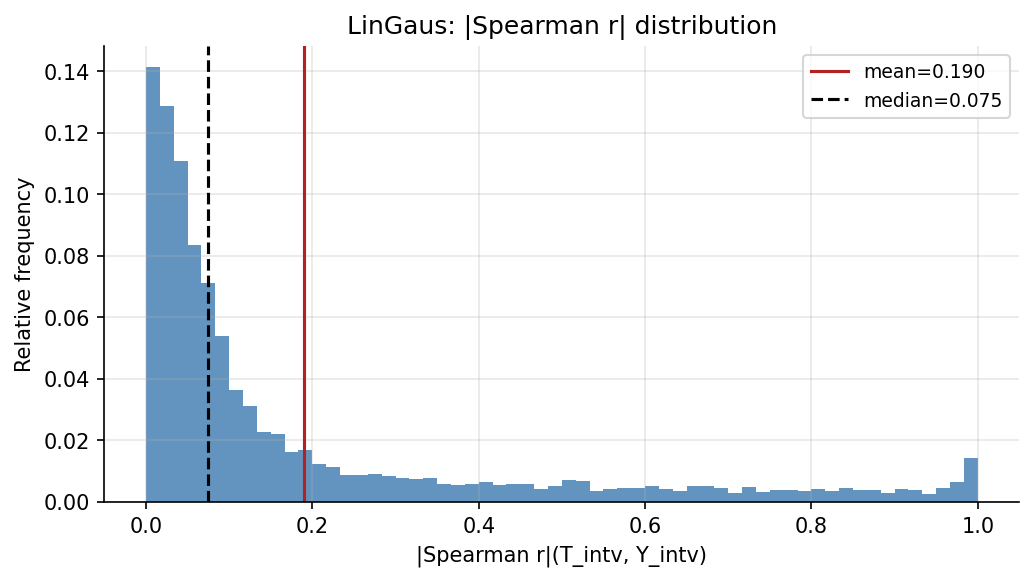}
    \caption{
Distribution of absolute Spearman correlations between treatment $T$ and outcome $Y$ across $10{,}000$ SCMs sampled from the TFM prior (left) and the linear-Gaussian prior (right). In both cases, most datasets exhibit weak correlations, while a small fraction shows strong monotonic relationships, reflected in a heavy right tail.
}
    \label{fig:correlation_T_Y}
\end{figure}

Additionally, \cref{fig:correlation_T_Y} shows the distribution of the Spearman correlation between the intervention variable $T$ and the outcome $Y$ for $10{,}000$ SCMs that are sampled from the TFM-Prior (on the left) or the LinGaus-Prior (on the right). While in both cases, more than half of the datasets have Spearman correlations below $0.05$, there is a notable heavy tail towards values of $1.0$. More specifically, on the TFM-Prior, $2.6 \%$ of datasets have a Spearman correlation of more than $0.7$, and on the LinGaus-Prior $7.9 \%$. 

This provides further indication of the existence of only a few datasets where a strong relationship exists between the treatment and outcome variables.

\clearpage
\newpage

\section{Detailed Experimental Setup}

\subsection{Which Graph-Conditioning Method Works Best?}
\label{sec:which_graph_works_best_details}

To determine the most effective way of conditioning on graphical information in a causal TabPFN, we evaluate different graph-conditioning methods on synthetic case studies in which the causal graph is not identifiable from observational data alone. Specifically, we consider 30{,}000 datasets sampled from SCMs with linear mechanisms and Gaussian noise, together with internal standardisation \citep{ormaniecstandardizing}. The datasets contain uniformly distributed $2$, $5$, $20$, $35$ and $50$ nodes and are drawn in equal proportions from SCMs with (i) a directed path from treatment $T$ to outcome $Y$, (ii) a directed path from $Y$ to $T$, and (iii) no directed path between $T$ and $Y$. All those datasets contain $500$ train and test-samples.

The linear mechanisms are implemented as one-layer PyTorch \citep{paszke2019pytorch} MLPs with default weight initialisation. The standard deviation is sampled from a Gamma distribution $\text{Gamma}(\mu,\sigma)$ with mean $\mu$ and standard deviation $\sigma$. For exogenous noise, $\mu$ is sampled from a $\text{LogNormal}(1,1)$ distribution and $\sigma$ from a $\text{Uniform}(0.1,0.4)$ distribution. For endogenous noise, $\mu$ is sampled from a $\text{LogNormal}(-3,6)$ distribution and $\sigma$ from a $\text{Uniform}(0.0,0.5)$ distribution. All noise hyperparameters are fixed within a dataset.

We implement the baseline with an identical architecture compared to the graph-conditioning methods and report the difference from this baseline in terms of negative log-likelihood, mean squared error (MSE), and $R^2$ on the test-data.

\subsection{RealCause Benchmark}

We provide results showing the benefit of partial graph conditioning on the semi-synthetic RealCause benchmark \citep{neal2020realcause}, which transforms real-world Randomized Controlled Trial (RCT) data into unconfounded scenarios by generating synthetic confounding effects as well both potential outcomes $y^0$ and $y^1$. RealCause provides synthetic ground truth values for Conditional Average Treatment Effect (CATE) and Average Treatment Effect (ATE) estimation and is a commonly used benchmark. We evaluate the performance of our graph conditioned model against the same model with no graph information, as well as a model trained on the same prior for the predictive task. 

\subsubsection{Metrics for RealCause}
\label{app:realcausemetrics}

\paragraph{Precision in Estimation of Heterogeneous Effects}

We measure performance in estimating Conditional Average Treatment Effects (CATEs) using the root Precision in Estimation of Heterogeneous Effects (PEHE) metric as proposed in \citep{shalit2017estimatingindividualtreatmenteffect}, defined as 
$$\sqrt{\text{PEHE}} = \sqrt{\frac{1}{n} \sum_{i=1}^{n} \left( \hat{\tau}(x_i) - \tau(x_i) \right)^2}$$

Here, $\hat{\tau}(x) = \int q_{\theta}(y|\doit(1), \xtest, \Dtrain) - q_{\theta}(y|\doit(0), \xtest, \Dtrain)dy$ is the CATE estimated by the model for a binary treatment variable $t \in \{0,1\}$, and $\hat{\tau}(x_i) = y^1_i - y^0_i$, for $y^1_i$ and $y^0_i$ the two potential outcomes for individual $i$.

\paragraph{Relative Average Treatment Effect Error}

For Average Treatment Effects (ATEs), we report the \emph{relative ATE error}
\[
\epsilon_{\text{ATE}} 
= \frac{\left| \widehat{\text{ATE}} - \text{ATE} \right|}{\left| \text{ATE} \right|},
\]
where the true ATE is
\[
\text{ATE} = \frac{1}{n}\sum_{i=1}^{n} \tau(x_i)
= \frac{1}{n}\sum_{i=1}^{n} \left(y_i^{1} - y_i^{0}\right),
\]
and the estimated ATE is
\[
\widehat{\text{ATE}} 
= \frac{1}{n}\sum_{i=1}^{n} \hat{\tau}(x_i),
\]
with $\hat{\tau}(x)$ denoting the model's CATE estimate defined above.

\subsection{Results on the Balanced PSID Dataset}
\label{app:results_psid_unbalanced}

On the PSID dataset used in RealCause \citep{neal2020realcause}, we find that the predictive model achieves a relative error close to one, i.e., performance close to that of a naive constant baseline (\cref{tab:slearner_dofm_comparison}). The same applies to the ancestral-conditioned model as well as the non-ancestral-conditioned model. 
To remove the effect of this imbalance when investigating the effect of graph-conditioning in terms of the number of treated (141 individuals) versus untreated (2266 individuals), we perform rebalancing by randomly subsampling 500 individuals from the untreated group while keeping all treated individuals. We find that this improves the performance of all methods, showing again a pronounced benefit of performing graph-conditioning in \cref{tab:psid_balanced}.

\begin{table}[h]
\centering
\caption{Results on the \textbf{PSID (balanced)} semi-synthetic dataset. We report root Precision in Estimation of Heterogeneous Effects ($\sqrt{\text{PEHE}}$) and relative ATE error ($\epsilon_{\text{ATE}}$). Mean $\pm$ standard deviation over 100 runs.}
\label{tab:psid_balanced}
\small
\begin{tabular}{lcc}
\toprule
Method & $\sqrt{\text{PEHE}}$ & $\epsilon_{\text{ATE}}$ \\
\midrule
Predictive 
& 22045$\scriptstyle\pm$136 
& 0.95$\scriptstyle\pm$0.01 \\

No Ancestral Information 
& 21896$\scriptstyle\pm$137 
& 0.94$\scriptstyle\pm$0.00 \\

Ancestral Information 
& \textbf{19711$\scriptstyle\pm$230} 
& \textbf{0.65$\scriptstyle\pm$0.02} \\
\bottomrule
\end{tabular}
\end{table}

\section{Training Details}
\label{app:training_details}

We train all models for $50{,}000$ steps with a batch-size of 32. To save memory, we accumulate the gradients from 4 forward-passes with a batch-size of 8 each. This corresponds to training on 1.6 million synthetic datasets. Each of those datasets has a size of $1000$ samples that is split uniformly at random between the context- and target-set. Our model has 7,962,910 parameters in the predictive setting, 8,012,012 parameters when doing the Soft Attention biasing and 11,186,463 parameters for the version that combines the GCN with Soft Attention. 

We train with the Adam optimiser \citep{kingma2014adam} at a learning rate of $10^{-4}$, using cosine annealing \citep{loshchilov2017sgdr} with a warm-up ratio of $0.1$ and a minimum learning rate-ratio of $0.1$. The weight decay parameter is set to $10^{-5}$, and we do not use dropout. For efficiency, we utilise Flash Attention \citep{dao2022flashattention} and train in 16-bit precision. All our neural networks and their training are implemented using PyTorch \citep{paszke2019pytorch}. The training of each model takes approximately 2 days and three hours on a single H100 GPU with 80 gigabytes of memory.

\section{Additional Results}

\subsection{Additional Results on Linear-Gaussian Data}
\label{app:more_resultsA}

We provide additional quantitative results for the linear-Gaussian synthetic setting discussed in the main text.

\begin{figure}[h]
    \centering
    \includegraphics[width=1\linewidth]{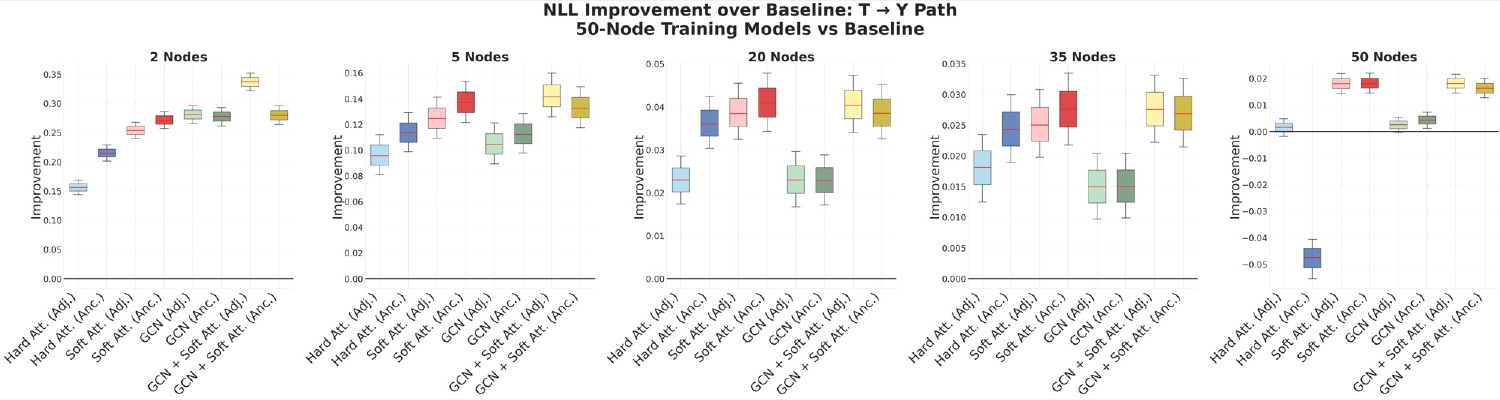}

    \includegraphics[width=1\linewidth]{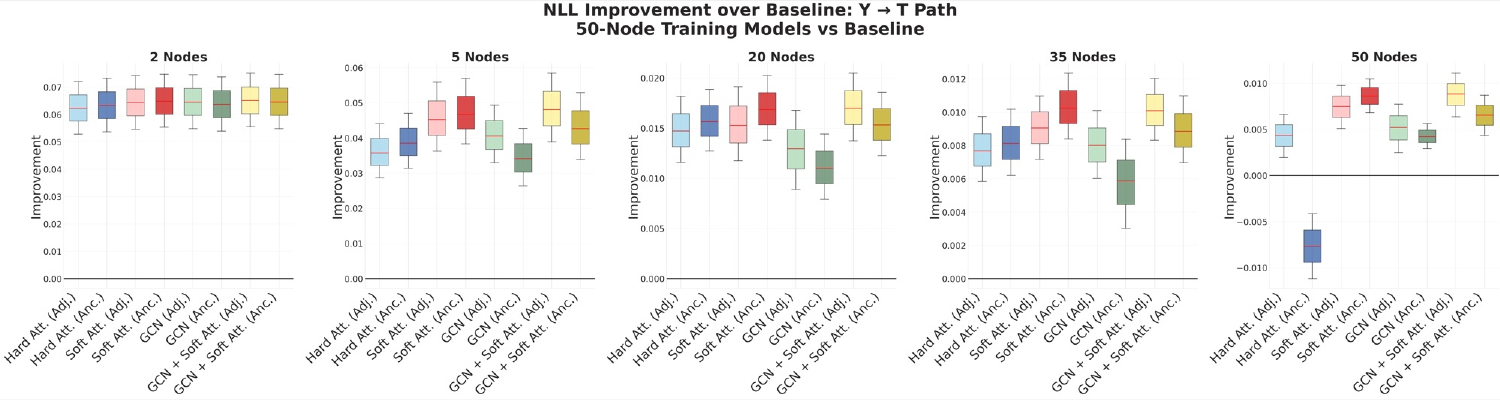}

    \includegraphics[width=1\linewidth]{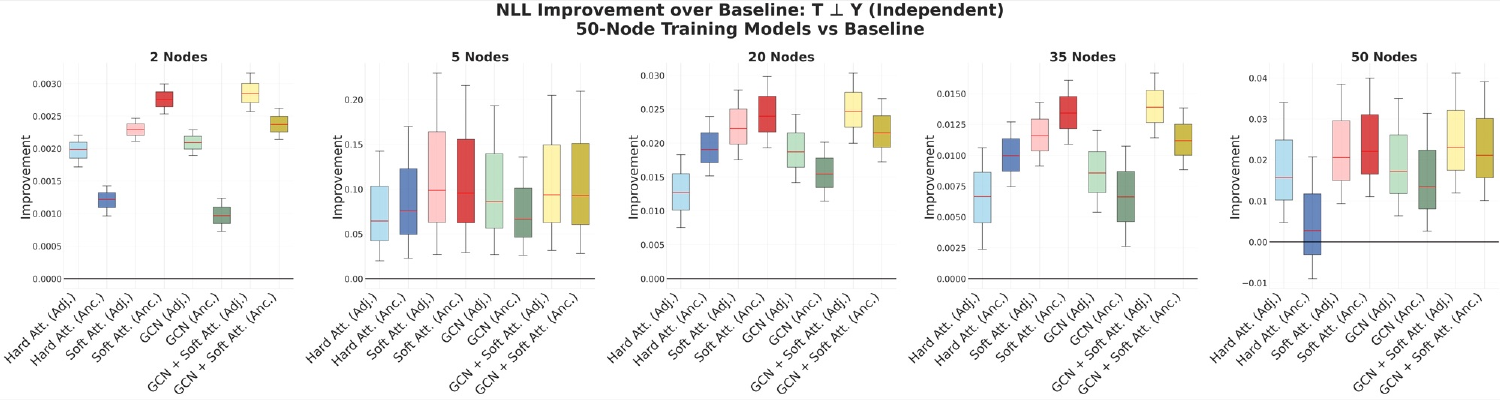}
    \caption{Results, measured in terms of negative log-likelihood ({NLL}) on held-out test data, for the different graph-conditioning methods evaluated on synthetic data generated from SCMs with linear mechanisms and Gaussian noise. Overall, the graph-conditioned models substantially outperform the baseline that does not use graph information. One notable exception occurs in the $50$-node setting: using hard attention together with ancestry-based conditioning leads to very poor performance, indicating instability of this approach. Importantly, this failure mode does not appear for smaller numbers of nodes. Apart from this exception, we consistently observe that using soft attention alone, or soft attention combined with a GCN-based conditioning mechanism, yields the best performance across all node counts and across all SCM variants ($T \rightarrow Y$, $Y \rightarrow T$, and $T$ and $Y$ 
    independent).}
\end{figure}

\begin{figure}[h]
    \centering
    \includegraphics[width=1\linewidth]{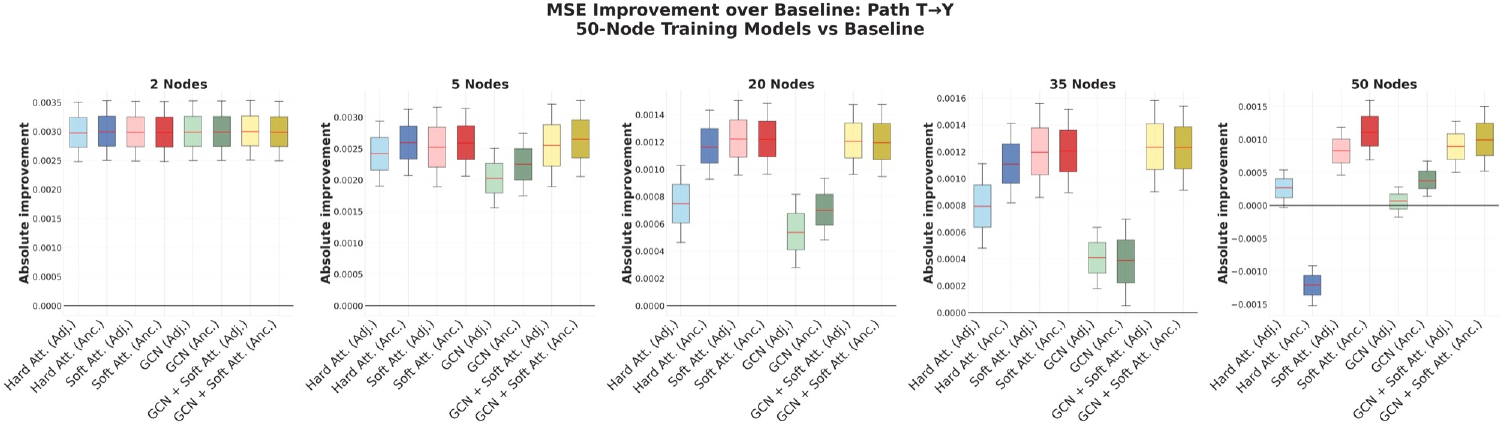}

    \includegraphics[width=1\linewidth]{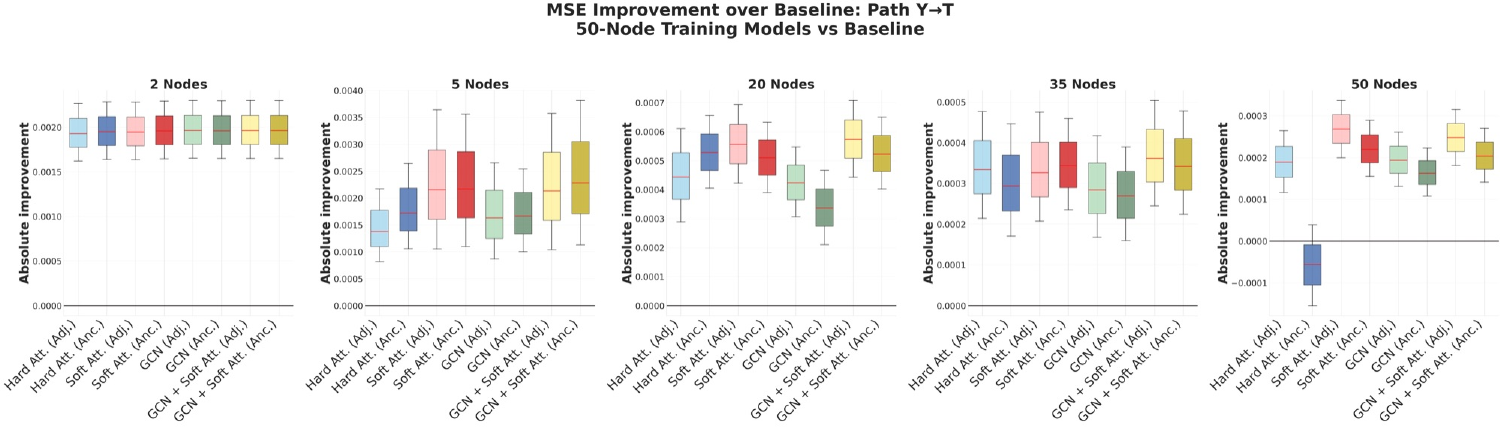}

    \includegraphics[width=1\linewidth]{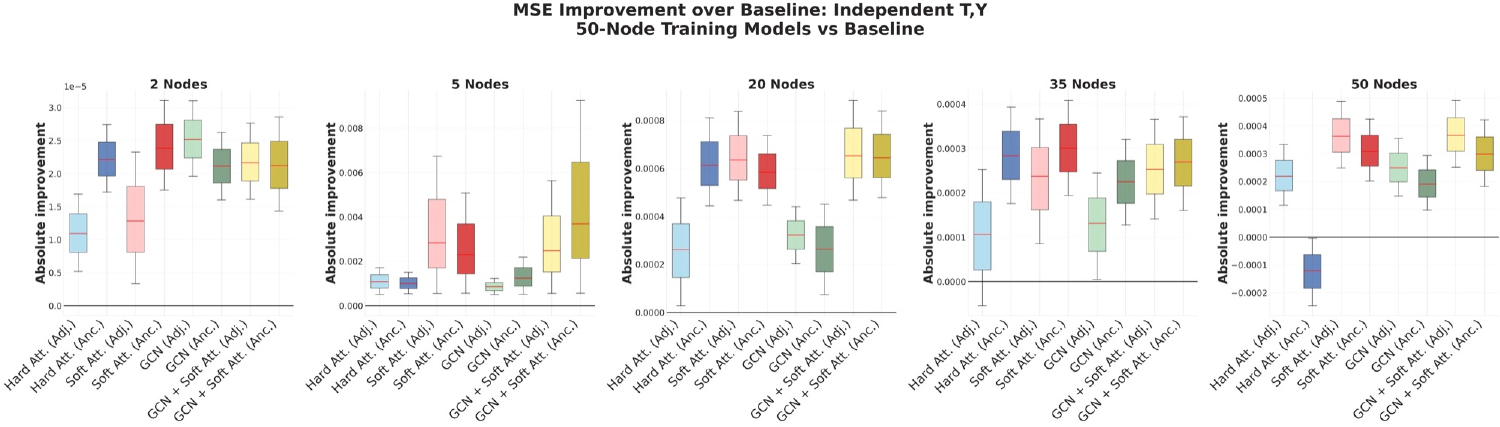}
    \caption{Performance comparison of different graph-conditioning methods measured in terms of mean squared error (MSE) on held-out test data. In contrast to negative log-likelihood, which evaluates the quality of the full predictive posterior, MSE assesses only the accuracy of the posterior mean. Overall, the qualitative trends closely mirror those observed for NLL, with graph-conditioned models outperforming the non-graph-conditioned baseline and Soft-attention leading to the best results. However, performance differences between methods are generally smaller under the MSE metric, particularly in the two-node setting.}
\end{figure}

\begin{figure}[h]
    \centering
    \includegraphics[width=1\linewidth]{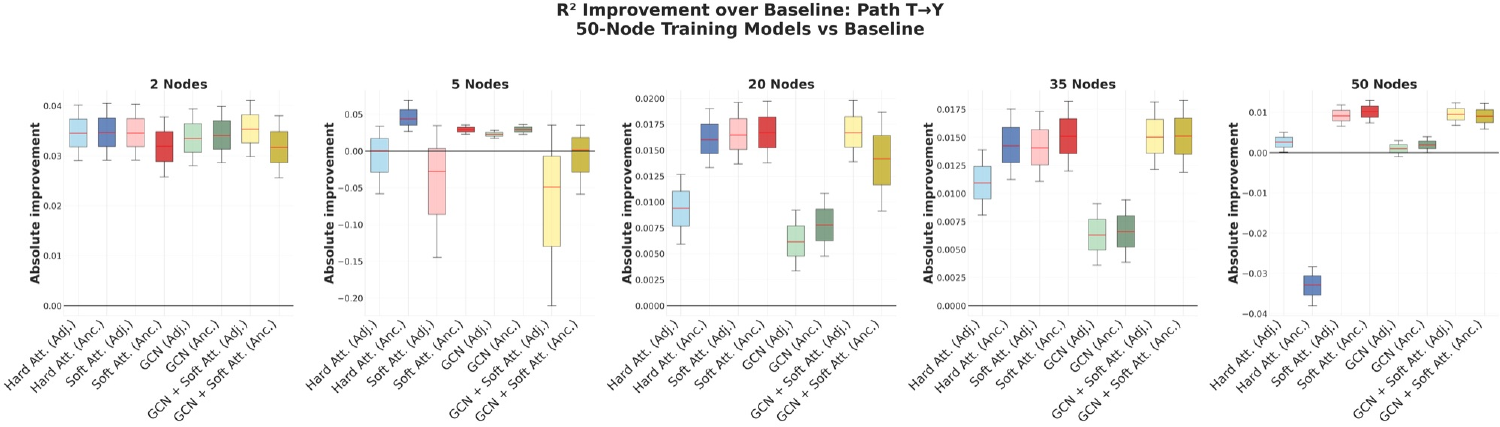}

    \includegraphics[width=1\linewidth]{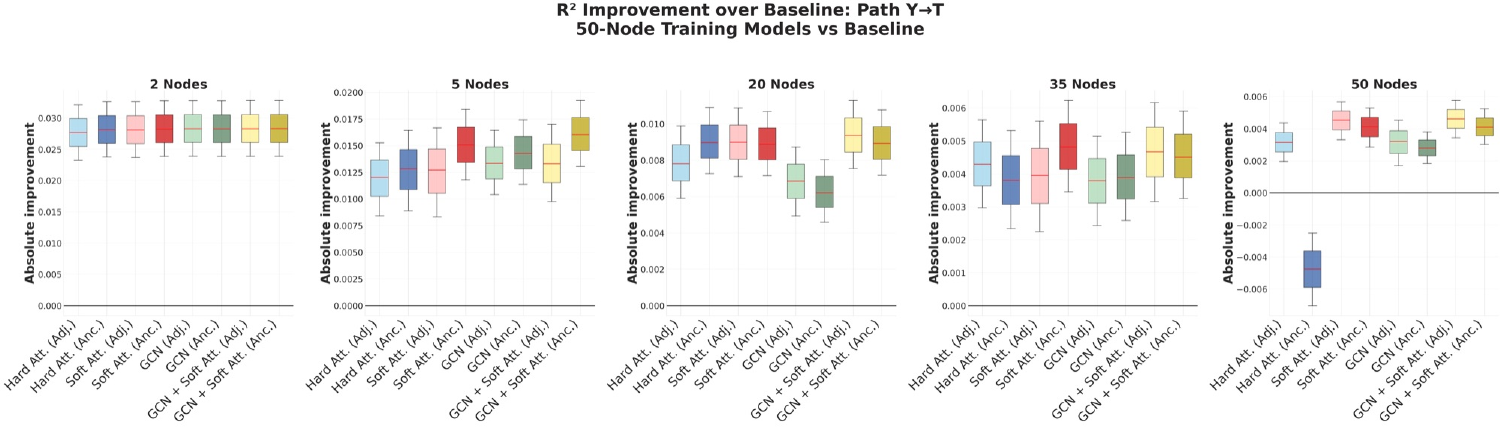}

    \includegraphics[width=1\linewidth]{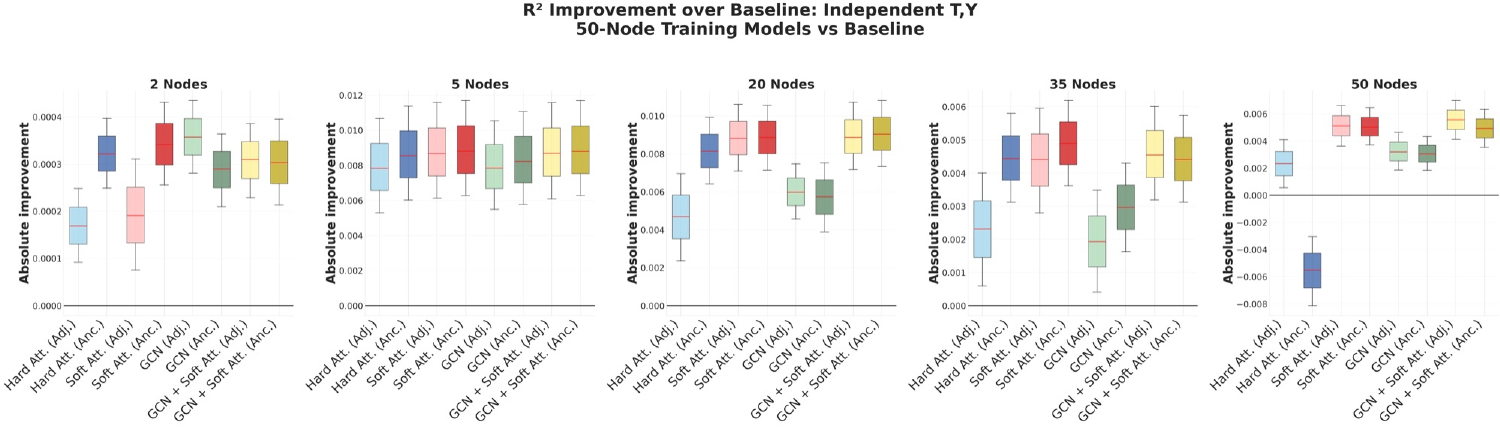}
    \caption{Performance comparison of different graph-conditioning methods measured in terms of the coefficient of determination ($R^2$) on held-out test data. Overall, the qualitative behaviour closely matches that observed for the MSE metric, with graph-conditioned models consistently outperforming the non-graph-conditioned baseline and soft-attention–based approaches achieving the strongest performance. One minor exception occurs in the five-node setting, where results for soft attention conditioned on adjacency matrices and for the combined GCN + soft-attention (Adj.) approach exhibit increased variance. This effect is not observed consistently across other node counts.}
\end{figure}

\clearpage

\subsection{Is graph-conditioning still effective with partial, rather than full ancestry information?}
\label{app:can_we_amortise_idk}

This section provides detailed results for the experiment summarised in \cref{sec:partial_ancestral_information}. Our goal is to evaluate whether a single model can be trained to operate effectively across varying degrees of available ancestral information.

We consider the same linear-Gaussian setting as detailed in \cref{sec:which_graph_works_best_details}. Three models are trained under identical conditions, differing only in how structural information is provided:

\begin{itemize}
    \item \textbf{Complete Ancestral Information:} The full ancestor matrix is always available during training.
    \item \textbf{No Graph Conditioning:} No structural information is provided.
    \item \textbf{Amortised (Partial Ancestral Matrix):} During training, a fraction of entries in the ancestor matrix is replaced by $0$, representing unknown relationships. The hide fraction is sampled uniformly, forcing the model to handle different levels of structural knowledge.
\end{itemize}

\begin{figure}[h]
    \centering
    \includegraphics[width=\linewidth]{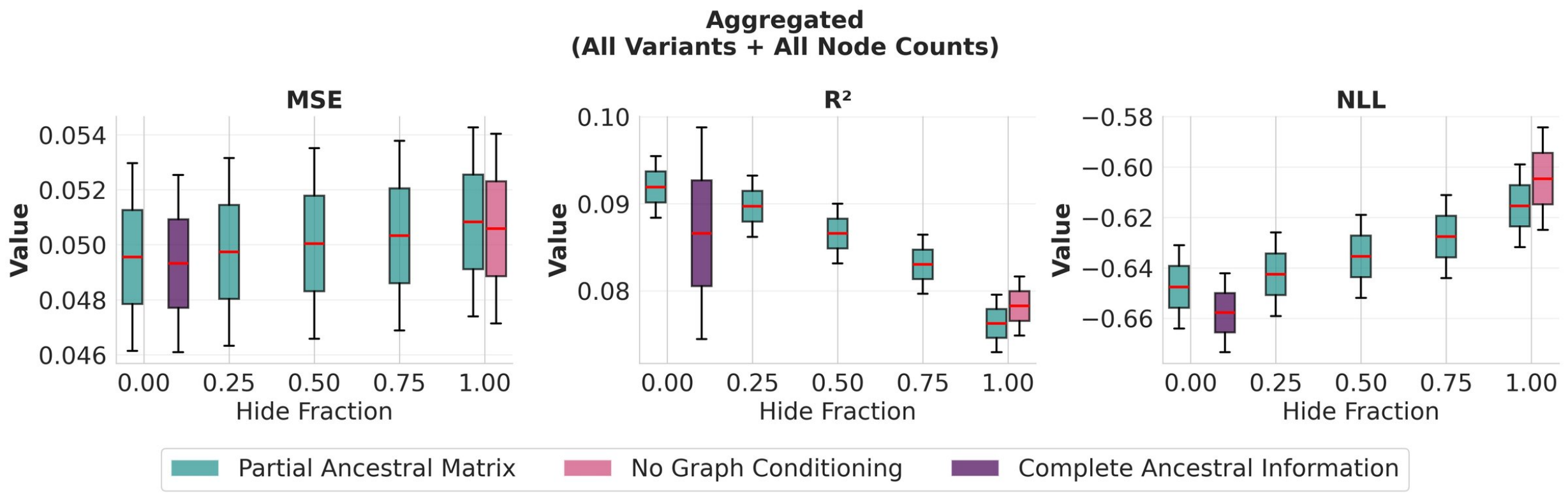}
    \caption{Aggregate performance as a function of available structural information. The ``Hide Fraction'' denotes the proportion of ancestor-matrix entries replaced by $0$ (unknown). Performance degrades monotonically as less information is available. }
    \label{fig:aggregate_idk_levels}
\end{figure}

\cref{fig:aggregate_idk_levels} shows performance aggregated across metrics. As expected, predictive accuracy degrades monotonically as more ancestral entries are hidden, indicating that the model consistently exploits structural information when available.

Crucially, the amortised model remains competitive with both specialised models across all regimes. When no information is hidden, its performance mostly matches that of the model trained exclusively with full ancestral information. When all information is hidden, its performance is similar to that of the model trained without graph conditioning. The differences are rather small and fall within confidence intervals in terms of MSE and $R^2$, suggesting that amortising over information levels does not introduce a substantial performance penalty. For NLL, the fully-informed model has a more pronounced advantage when full information is available, while the amortised model performs marginally better in the fully-hidden regime. 

\begin{figure}[h]
    \centering
    \includegraphics[width=\linewidth]{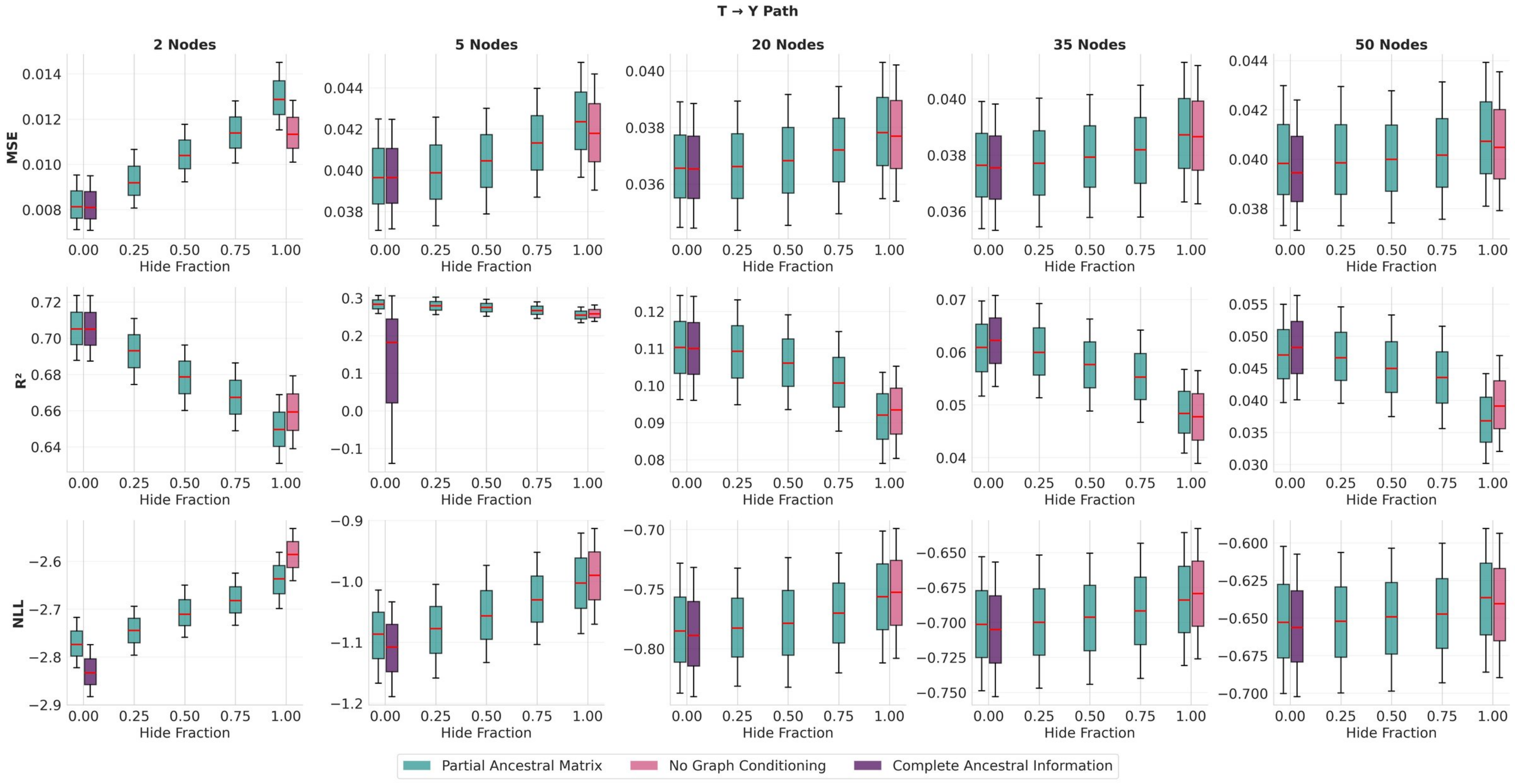}
    \caption{Performance across information levels for SCMs with a {directed causal path from treatment to outcome} ($T \rightarrow Y$).}
    \label{fig:idk_ty}
\end{figure}

\begin{figure}[h]
    \centering
    \includegraphics[width=\linewidth]{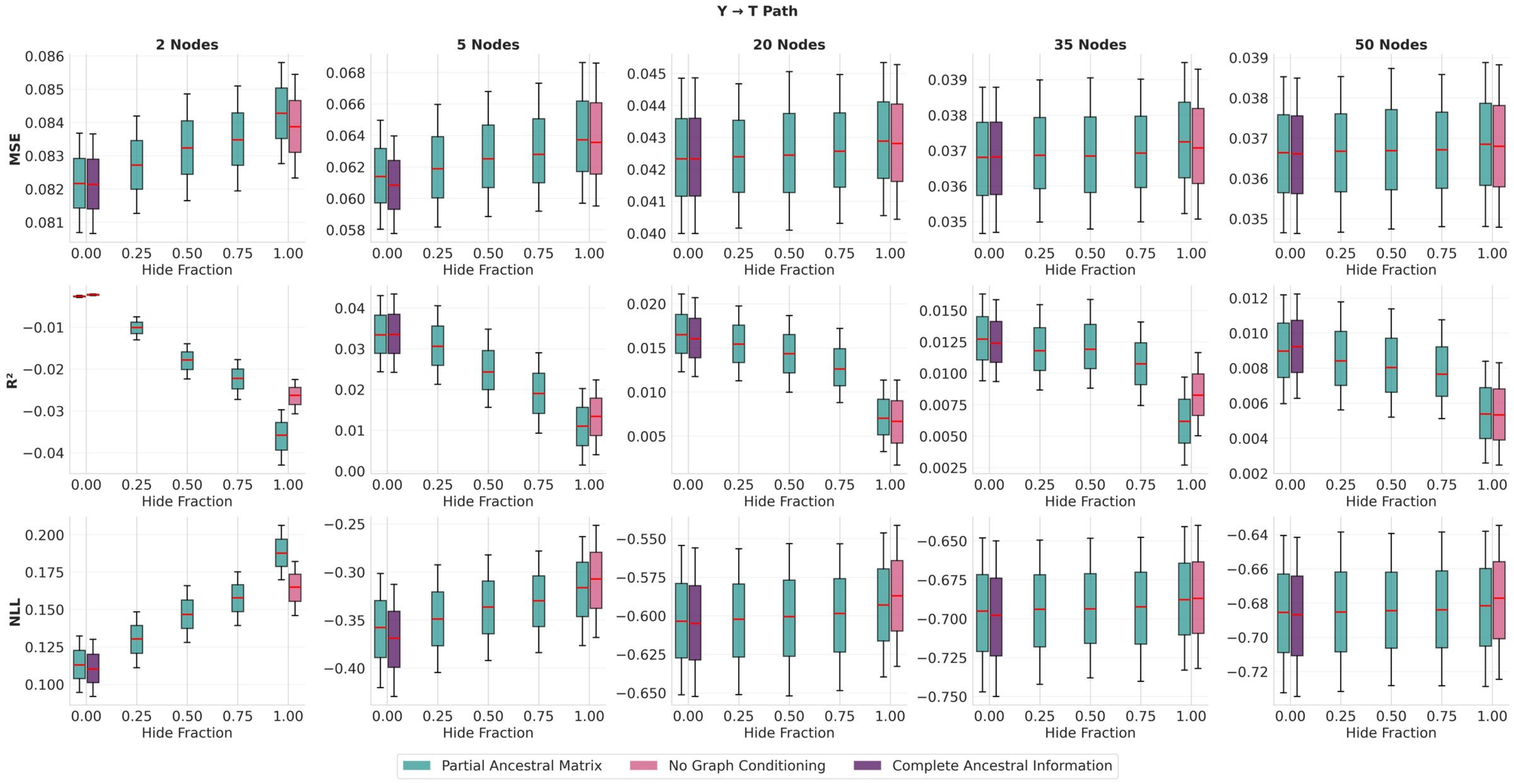}
    \caption{Performance across information levels for SCMs with a {reverse causal path from outcome to treatment} ($Y \rightarrow T$).}
    \label{fig:idk_yt}
\end{figure}

\begin{figure}[h]
    \centering
    \includegraphics[width=\linewidth]{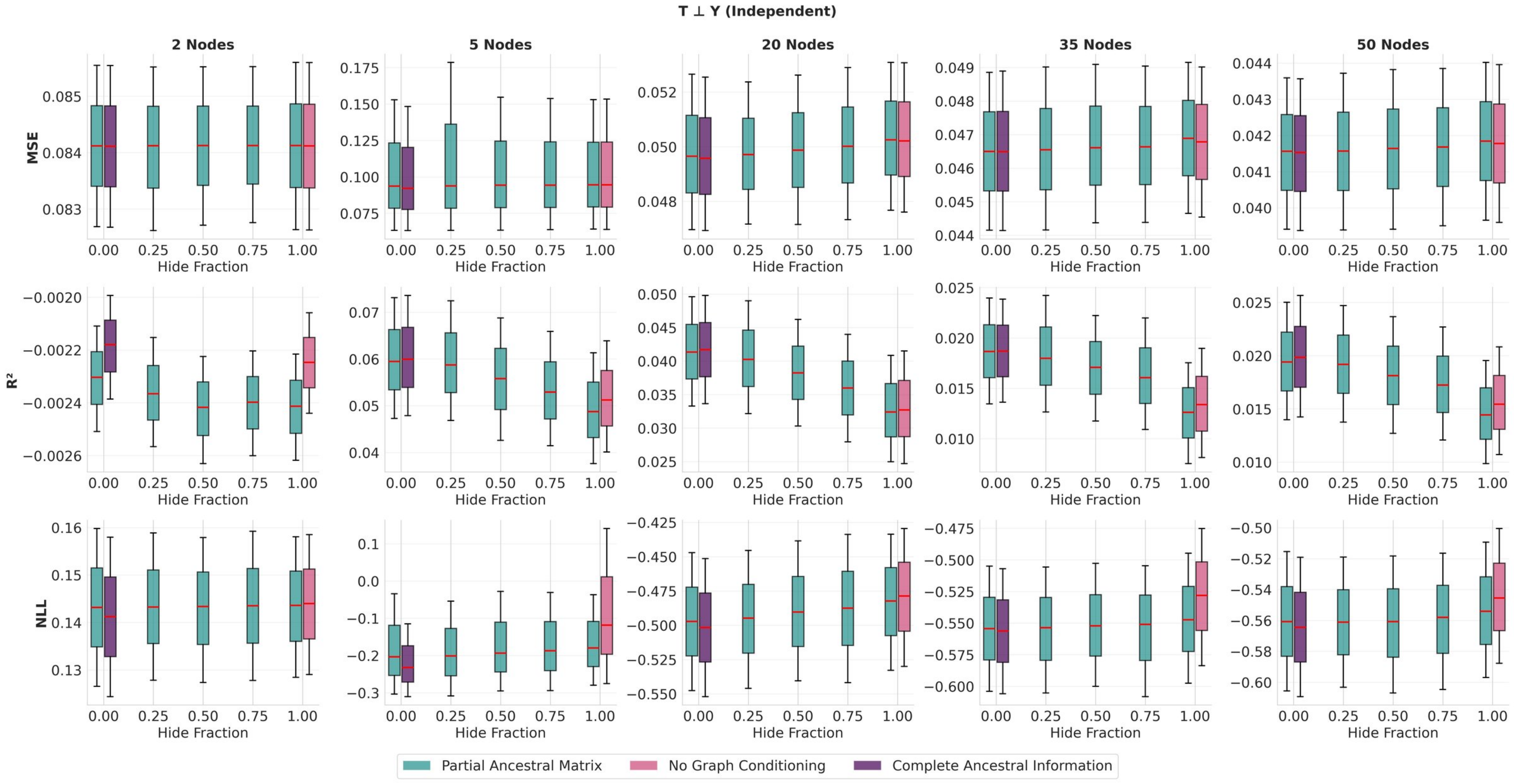}
    \caption{Performance across information levels for SCMs where {treatment and outcome are causally independent}.}
    \label{fig:idk_indep}
\end{figure}

\clearpage


\begin{figure}[t]
    \centering
    \includegraphics[width=0.48\linewidth]{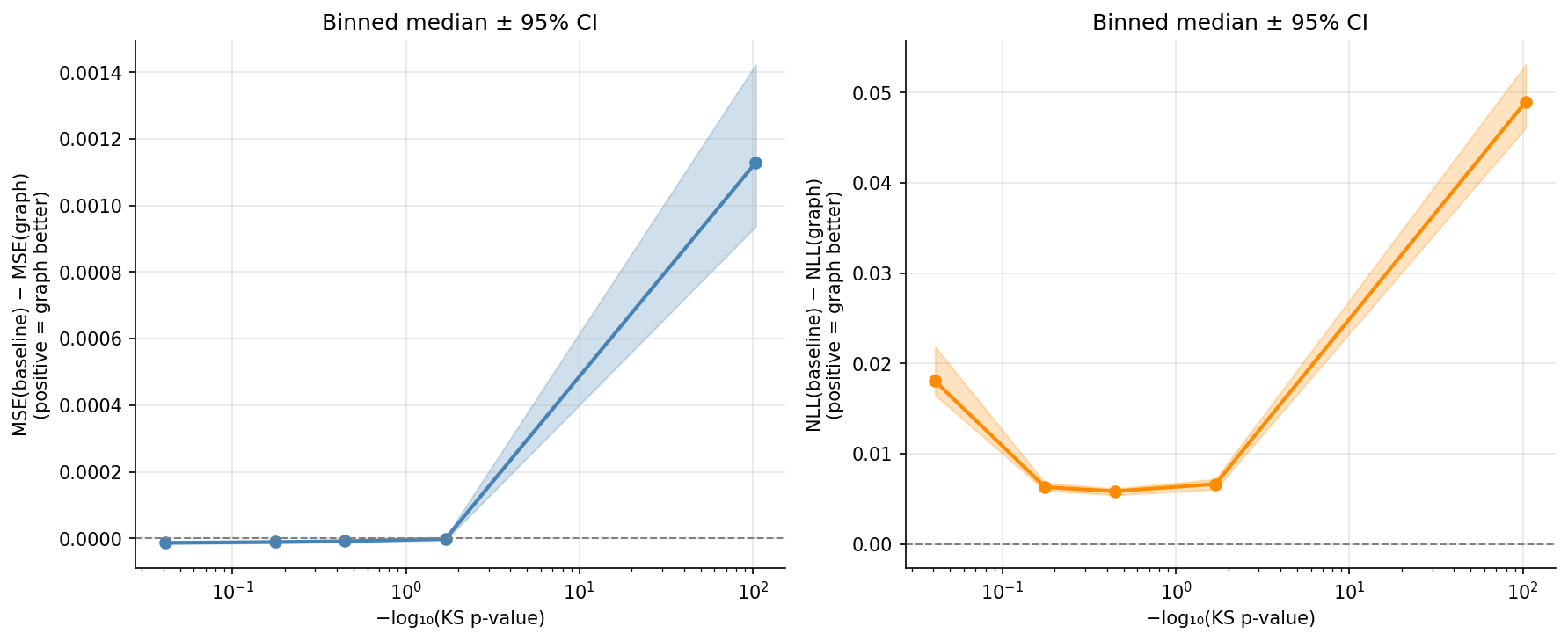}
    \hfill
    \includegraphics[width=0.48\linewidth]{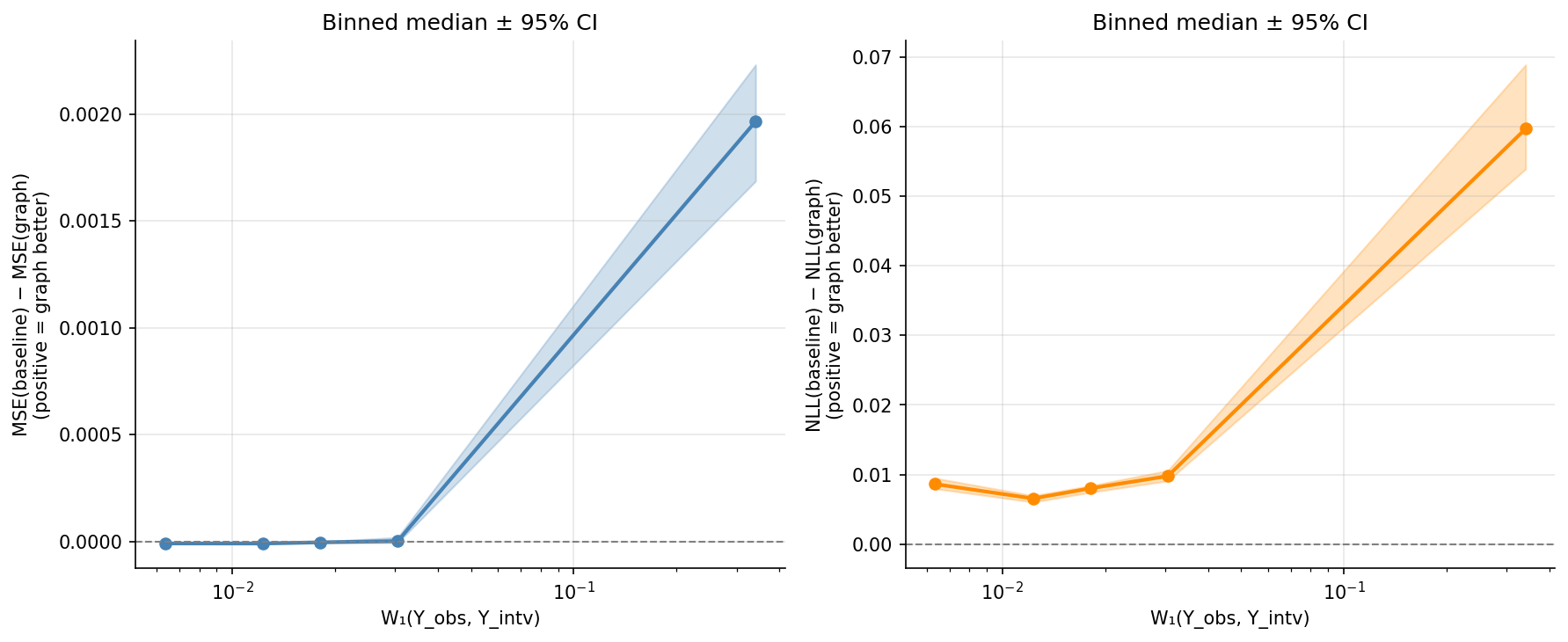}

    \vspace{0.5em}

    \includegraphics[width=0.48\linewidth]{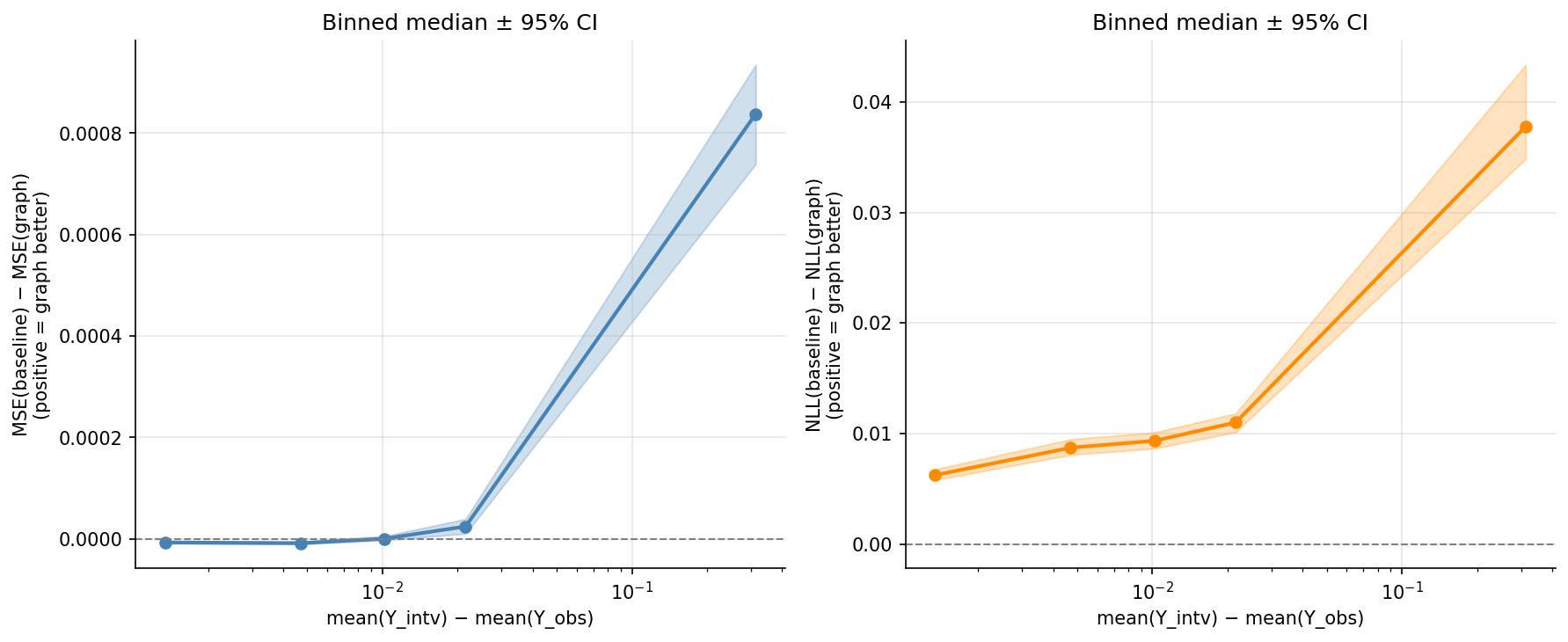}
    \hfill
    \includegraphics[width=0.48\linewidth]{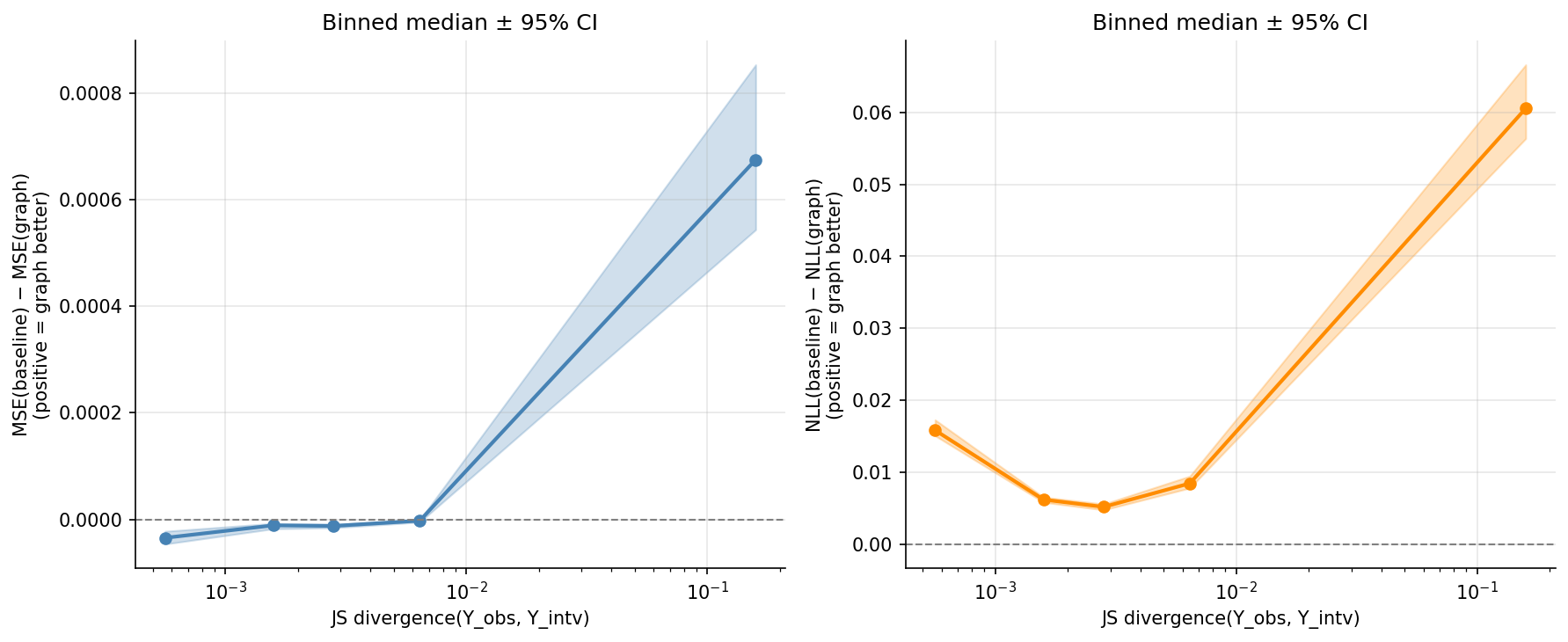}

    \caption{
    Graph-conditioning benefit (MSE and NLL reduction relative to a graph-agnostic baseline,
    positive values indicate the graph-conditioned model is better) as a function of causal
    effect strength, measured by four complementary metrics: $-\log_{10}$ of the
    Kolmogorov--Smirnov (KS) two-sample $p$-value (upper left), Wasserstein-1 distance
    $W_1$ (upper right), absolute mean shift, and
    Jensen--Shannon (JS) divergence (lower right), all between the observational outcome
    distribution $p(y \mid t, \psi)$ and the interventional distribution
    $p(y \mid \doit(t), \psi)$.
    Each curve shows the binned median over $32{,}000$ ComplexMech samples,
    binned into five quantile-equal groups along the x-axis (log scale);
    shaded bands are 95\% bootstrap confidence intervals on the median.
    }
    \label{fig:ComplexMechBenefitVsEffectStrength}
\end{figure}

\section{Properties of the Prior and the Effect of Graph Conditioning}

In this section, we investigate how specific causal properties of our TFM-Prior (see \cref{app:causal_prior}) relate to the effectiveness of graph-conditioning. 

We investigate the effect of graph-conditioning dependent on the difference between the interventional CID $p(y|\doit(t), \psi)$ and the observational posterior $p(y|t, \psi)$, as discussed in \cref{app:causal_effects_prior}. \cref{fig:ComplexMechBenefitVsEffectStrength} shows that for all metrics, the benefits of graph-conditioning are much larger when the difference between the observational and interventional distribution is  above the $80\%$ quantile. In terms of MSE, the difference is basically zero in all metrics except when the difference between observational and interventional distributions is the largest. 
The difference in negative log-likelihood exhibits, at least in terms of the KS $p$-value, Wasserstein-1 distance and JS-divergence a bimodal effect: for the smallest and largest differences, graph-conditioning helps the most, with a much larger benefit for the largest values. This is arguably because in terms of predicting the whole distribution (which the NLL assess), it is also helpful to know that there is no causal effect in the data.

\newpage
\section{Comparison to Other Causal Foundation Models}
\label{app:cfm_comparison}

CausalPFN \citep{balazadeh2025causalpfn} and CausalFM \citep{ma2025foundation} are trained exclusively for specific identification settings and cannot handle datasets where this criterion is violated. By contrast, our approach is generic: it is trained on a broad prior over causal structures and conditions on graphical information at inference time. This makes a direct performance comparison non-trivial, as the three methods differ both in their priors and in the assumptions they impose. We therefore evaluate all methods on data sampled from our own complex-mechanisms prior, which allows us to control for prior differences and isolate the effect of graph conditioning and identification assumptions.

\subsection{Comparison on the Complex-Mechanisms Prior}

\cref{fig:bdcomplex} reports the mean difference in MSE between our graph-conditioned models and CausalPFN (left panel) and CausalFM (right panel), evaluated on 30,000 datasets from our complex-mechanisms prior. We restrict to datasets with a binary treatment variable to enable fair comparison with CausalPFN and CausalFM. Datasets are stratified into two groups: (a) datasets where all observed covariates form a valid backdoor adjustment set---matching the identification assumption of CausalPFN and CausalFM, which is assumed by both---and (b) datasets where this condition is violated. 95\% bootstrap confidence intervals are computed via paired bootstrapping on the same set of results to enable direct uncertainty quantification of the differences.

Our graph-conditioned models (Soft Attention and GCN + Soft Attention) perform only slightly better than CausalPFN when the backdoor criterion is satisfied. When the backdoor criterion is violated---a setting where CausalPFN and CausalFM rely on a misspecified assumption---the advantage of our graph-conditioned models increases substantially: the improvement over CausalPFN is statistically significant at more than 22 standard errors. CausalFM is consistently outperformed by all other methods.

\begin{figure*}[h]
    \centering
    \includegraphics[width=1.0\linewidth]{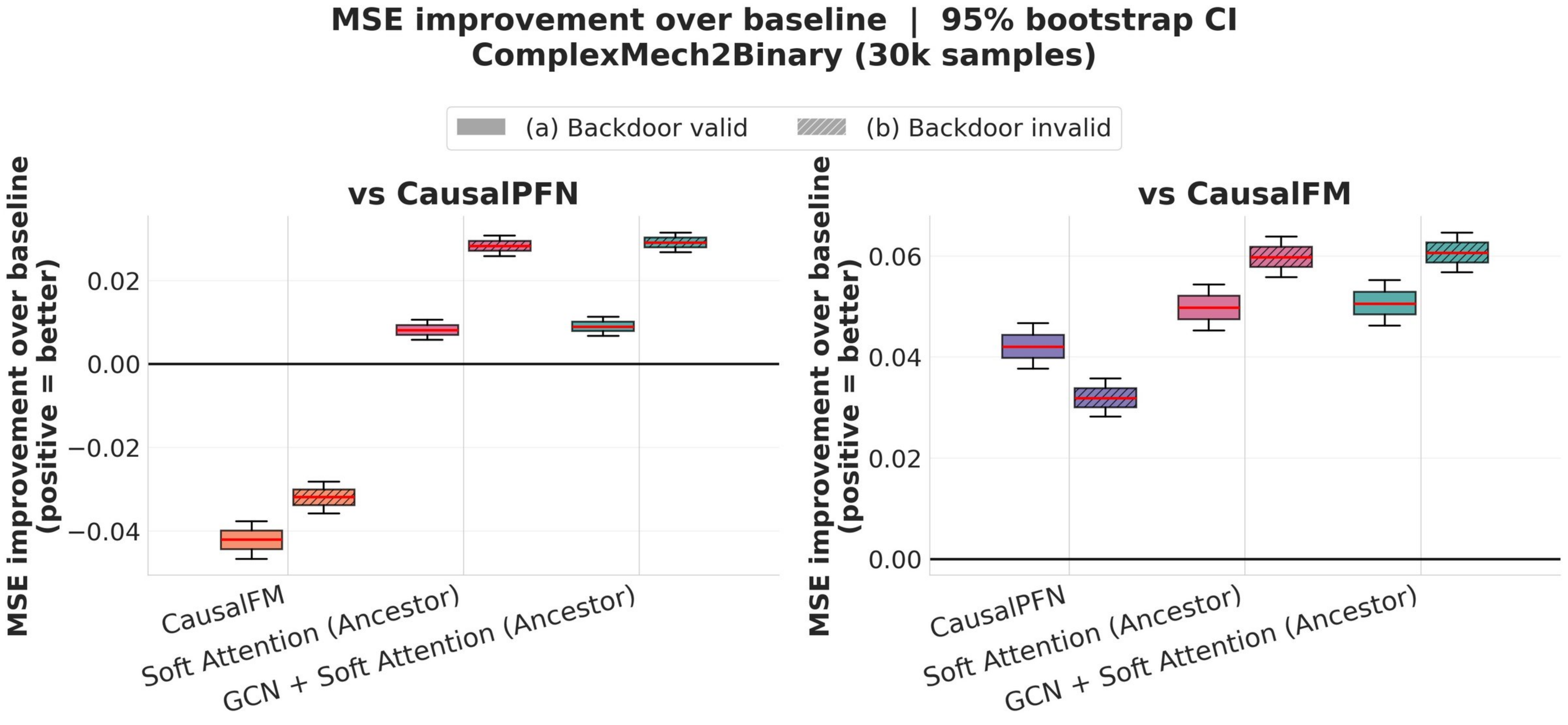}
    \caption{Performance under violation of the backdoor criterion on datasets from our complex-mechanisms prior. We compare against CausalPFN (left) and CausalFM (right) using 30,000 datasets with a binary treatment variable, stratified by whether all observed covariates form a valid backdoor adjustment set. We report the mean \textbf{difference} in MSE relative to the respective baseline; errors are 95\% bootstrap confidence intervals from paired bootstrapping. \textbf{Left:} Our graph-conditioned models perform slightly better than CausalPFN when the backdoor criterion is valid; this advantage grows substantially when the criterion is violated (CausalPFN's performance, represented by $0$, lies more than 22 standard errors outside the confidence intervals). CausalFM is always significantly worse than CausalPFN. \textbf{Right:} All methods outperform CausalFM; the advantage of our graph-conditioned models over CausalFM increases when the backdoor criterion is violated.}
    \label{fig:bdcomplex}
\end{figure*}

\subsection{Comparison on the RealCause Benchmark}

\cref{tab:rebuttal_semisynth} extends \cref{tab:slearner_dofm_comparison} by including results for CausalFM, CausalPFN, and Do-PFN \citep{robertson2025pfn} on all RealCause datasets. CausalPFN performs best overall, which can be attributed to its training process being designed precisely for this case \citep{neal2020realcause, balazadeh2025causalpfn}: both RealCause and CausalPFN use a DGP with a binary treatment drawn from a neural-network propensity model and potential outcomes generated from a shared parametric family, with treatment heterogeneity controlled in essentially the same way. This prior alignment, rather than the absence of graph conditioning, explains CausalPFN's advantage. Once graph conditioning is introduced, our model outperforms CausalFM and Do-PFN across almost all datasets.

\begin{table*}[t]
\centering
\caption{Extended RealCause benchmark results including CausalFM, CausalPFN, and Do-PFN. The first three rows reproduce \cref{tab:slearner_dofm_comparison} from the main paper. The last three rows show other CFMs, both trained exclusively on graphs satisfying the backdoor criterion. Our graph-conditioned model (``Ancestral Info'') consistently outperforms CausalFM and Do-PFN. CausalPFN achieves the best performance overall, which we attribute to its training data closely mimicking the RealCause data-generating process rather than to any advantage in graph conditioning.}
\label{tab:rebuttal_semisynth}
\resizebox{\textwidth}{!}{
\begin{tabular}{lcc cc cc cc cc}
\toprule
& \multicolumn{2}{c}{IHDP}
& \multicolumn{2}{c}{ACIC}
& \multicolumn{2}{c}{CPS}
& \multicolumn{2}{c}{PSID}
& \multicolumn{2}{c}{PSID (balanced)} \\
\cmidrule(lr){2-3}
\cmidrule(lr){4-5}
\cmidrule(lr){6-7}
\cmidrule(lr){8-9}
\cmidrule(lr){10-11}
& $\sqrt{\text{PEHE}}\downarrow$ & $\epsilon_{\text{ATE}}\downarrow$
& $\sqrt{\text{PEHE}}\downarrow$ & $\epsilon_{\text{ATE}}\downarrow$
& $\sqrt{\text{PEHE}}\downarrow$ & $\epsilon_{\text{ATE}}\downarrow$
& $\sqrt{\text{PEHE}}\downarrow$ & $\epsilon_{\text{ATE}}\downarrow$
& $\sqrt{\text{PEHE}}\downarrow$ & $\epsilon_{\text{ATE}}\downarrow$ \\
\midrule
Predictive
& {6.79 $\pm$ 0.81} & 0.81 $\pm$ 0.11
& 3.14 $\pm$ 0.47 & 0.38 $\pm$ 0.06
& {11393 $\pm$ 31} & 0.78 $\pm$ 0.00
& \textbf{11820 $\pm$ 41} & 1.03 $\pm$ 0.01
& 22045 $\pm$ 136 & 0.95 $\pm$ 0.01 \\
No Ancestral Info
& {6.28 $\pm$ 0.79} & {0.67 $\pm$ 0.05}
& 3.47 $\pm$ 0.47 & 0.46 $\pm$ 0.09
& 12800 $\pm$ 55 & 0.99 $\pm$ 0.01
& 13096 $\pm$ 26 & \textbf{0.98 $\pm$ 0.00}
& 21896 $\pm$ 137 & 0.94 $\pm$ 0.00 \\
Ancestral Info
& \textbf{5.49 $\pm$ 0.78} & \textbf{0.49 $\pm$ 0.08}
& \textbf{2.79 $\pm$ 0.45} & \textbf{0.17 $\pm$ 0.08}
& \textbf{11213 $\pm$ 60} & \textbf{0.70 $\pm$ 0.02}
& 12975 $\pm$ 24 & 1.09 $\pm$ 0.01
& \textbf{19711 $\pm$ 230} & \textbf{0.65 $\pm$ 0.02} \\
\midrule
CausalFM
& 5.86 $\pm$ 7.25 & 0.70 $\pm$ 1.23
& 3.30 $\pm$ 1.59 & 0.33 $\pm$ 0.11
& 12400 $\pm$ 157 & 0.94 $\pm$ 0.00
& 22436 $\pm$ 1296 & 0.95 $\pm$ 0.01
& 21071 $\pm$ 1339 & 0.86 $\pm$ 0.05 \\
CausalPFN
& \textbf{0.58 $\pm$ 0.07} & \textbf{0.03 $\pm$ 0.00}
& \textbf{0.92 $\pm$ 0.11} & \textbf{0.06 $\pm$ 0.01}
& \textbf{8956 $\pm$ 21} & \textbf{0.14 $\pm$ 0.01}
& \textbf{14402 $\pm$ 198} & \textbf{0.22 $\pm$ 0.02}
& \textbf{14833 $\pm$ 240} & \textbf{0.26 $\pm$ 0.02} \\
Do-PFN
& 6.07 $\pm$ 8.94 & 0.93 $\pm$ 3.98
& 4.11 $\pm$ 1.64 & 0.66 $\pm$ 0.12
& 12015 $\pm$ 319 & 0.88 $\pm$ 0.06
& 20907 $\pm$ 1375 & 0.93 $\pm$ 0.07
& 22893 $\pm$ 1606 & 1.09 $\pm$ 0.12 \\
\bottomrule
\end{tabular}
}
\end{table*}

\newpage
\section{Out-of-Distribution Performance: Misspecification of Graph Information}
\label{app:misspec}

\paragraph{Qualitative illustration}
In \cref{fig:example_misspec}, we demonstrate what happens when providing a CFM with incorrect graph information in the unidentifiable bivariate case of a linear mechanism and Gaussian noise. Since the data does not provide any evidence regarding the causal direction, the model---while performing correct inference under the wrong assumption---outputs what is effectively a ``wrong'' posterior. This is consistent with traditional causal approaches: if the true causal direction is not identifiable from observations and the stated graph is wrong, the resulting causal estimate will be wrong. Robustness to misspecification is therefore limited in essentially the same way as for classical causal methods.

\paragraph{Quantitative comparison: hiding vs.\ misspecifying}
\cref{fig:misspec_mse} shows a more important finding: a comparison between hiding edges (marking them as ``unknown'') and misspecifying them (flipping their sign) as a fraction of the total entries of the ancestral matrix is corrupted. While misspecifying edges---providing an incorrect causal direction---leads to a rapid and substantial performance degradation, hiding the same edges degrades performance only gradually. This asymmetry directly motivates partial ancestral matrices as a practical tool: a practitioner who is unsure about a specific causal relationship is much better served by leaving it unspecified than by committing to a direction that might be wrong. This is a key practical advantage that distinguishes our approach from methods that require a fully specified graph.

\begin{figure*}[h]
    \centering
    \begin{minipage}[t]{0.31\textwidth}
        \centering
        \includegraphics[width=\linewidth]{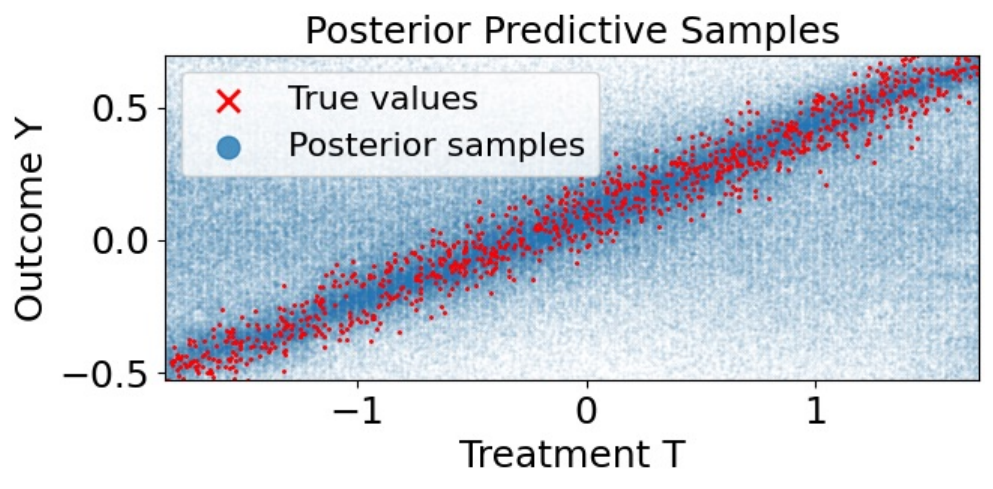}
    \end{minipage}
    \hfill
    \begin{minipage}[t]{0.31\textwidth}
        \centering
        \includegraphics[width=\linewidth]{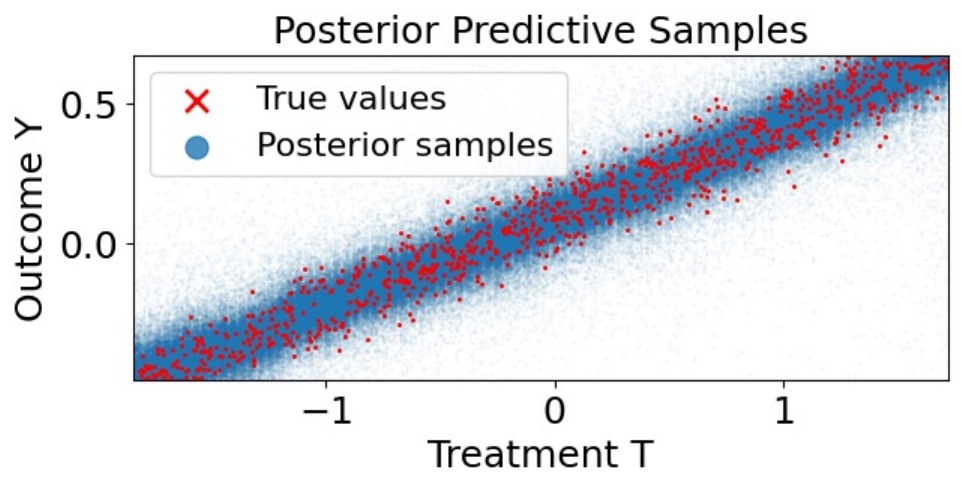}
    \end{minipage}
    \begin{minipage}[t]{0.31\textwidth}
        \centering
        \includegraphics[width=\linewidth]{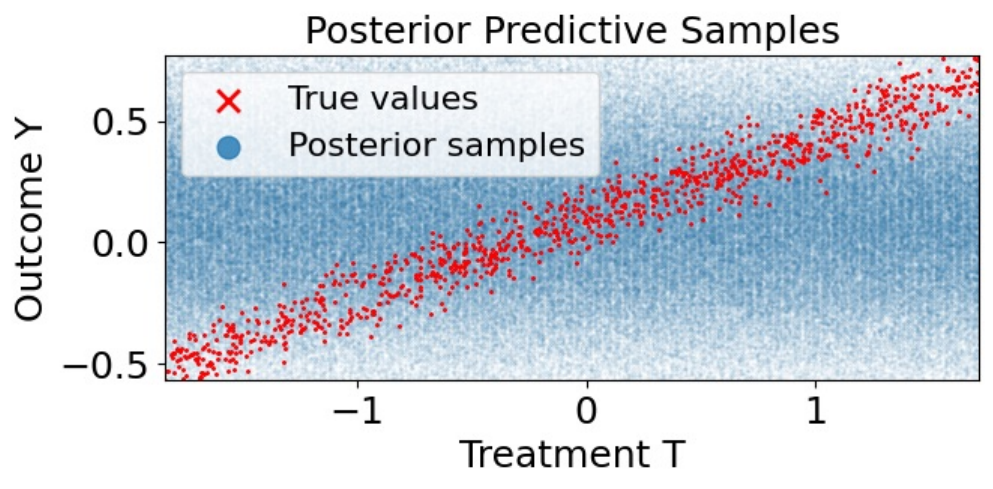} 
    \end{minipage}

    \caption{Posterior predictive samples in the two-node case where $T$ causes $Y$ ($T \rightarrow Y$); more specifically $Y = 0.25\cdot T + \epsilon$, for $\epsilon \sim \mathcal{N}(0, 0.1)$. Here, the causal direction is not identifiable from observational data. \textbf{Left}: Without telling the model $Q_\theta$ trained on arbitrary mechanisms (as, e.g., in \citet{robertson2025pfn} or \citet{dhirestimating}) the correct causal direction, it outputs a mixture distribution {$Q_\theta(Y|\doit(T), T \overset{?}{\longleftrightarrow} Y) = 0.5 \cdot P_{}(Y|\doit(T), Y \rightarrow T) + 0.5 \cdot P_{}(Y|\doit(T), T \rightarrow Y)$}. \textbf{Middle}: When conditioned on the right causal graph, the output aligns with the correct causal effect: $Q_\theta(Y|\doit(T), T \rightarrow Y) = P_{}(Y|\doit(T), T \rightarrow Y)$. \textbf{Right:} When provided with the wrong causal information $T \leftarrow Y$, the model makes the prediction $Q_\theta(Y|\doit(T), T \leftarrow Y) = P_{}(Y|\doit(T), T \leftarrow Y)$, which is the correct thing to do under the wrong assumption $T \leftarrow Y$. In this case, the causal effect is not identifiable and thus the observational data does not provide any indication about the causal direction.}
    \label{fig:example_misspec}
\end{figure*}

\begin{figure}
    \centering
    \includegraphics[width=0.5\linewidth]{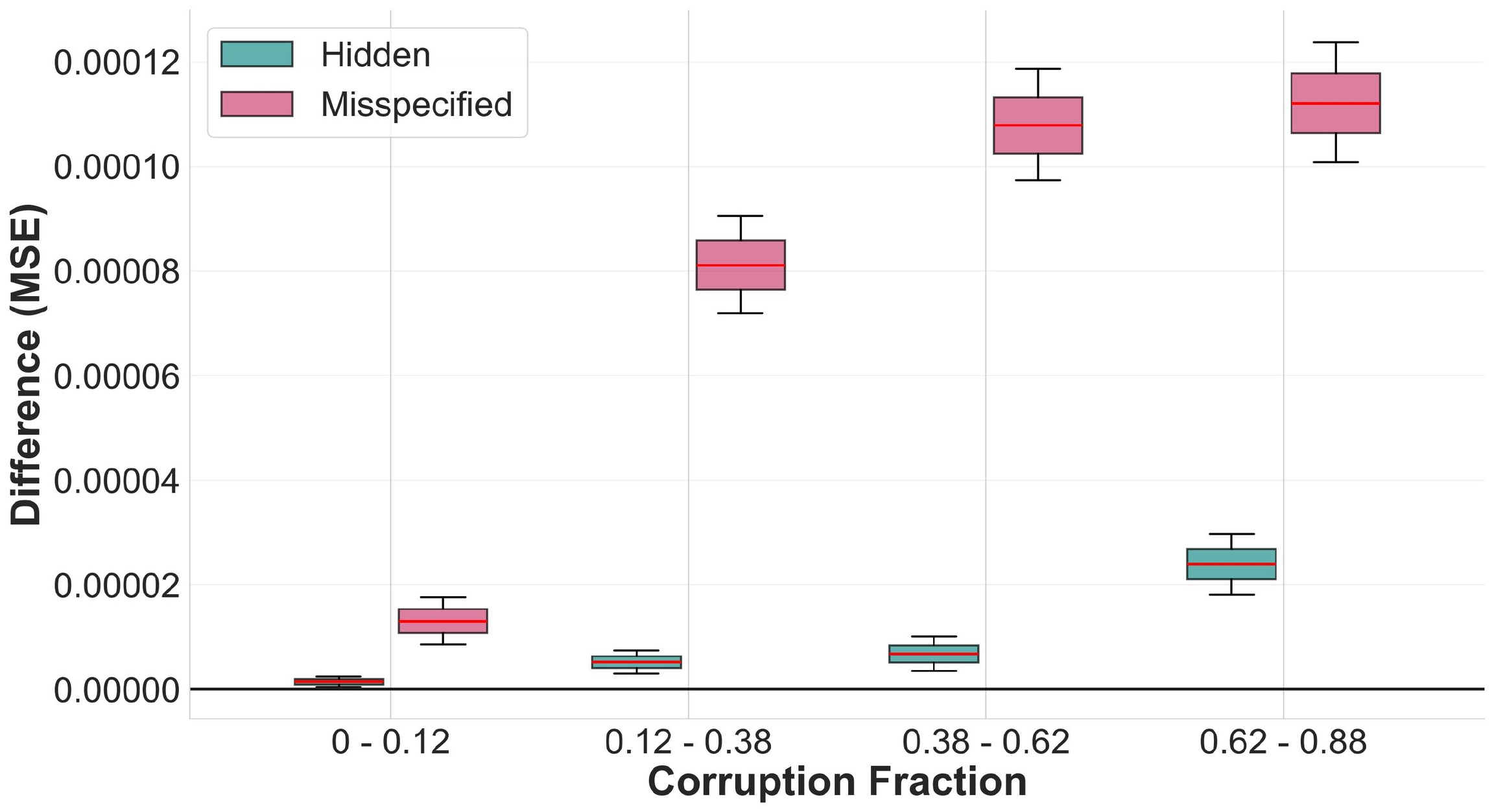}
    \caption{Hiding vs.\ misspecifying edges in the partially known ancestral matrix, evaluated on $100{,}000$ samples from our complex mechanism prior. The baseline is a graph-conditioned model (GCN + Soft Attention) receiving the full ancestral matrix; we compare it against the same model when a fraction of entries is corrupted. \emph{Hidden}: selected entries $T_{i,j} \in \{-1,1\}$ are replaced by $T'_{i,j} = 0$ (marked as ``unknown''). \emph{Misspecified}: the same fraction of entries has its sign flipped, $T'_{i,j} = -T_{i,j}$. Misspecifying edges leads to a rapid and substantial performance drop, while hiding the same edges degrades performance only gradually. Practitioners who are uncertain about a causal relationship are therefore far better off leaving it unspecified than guessing---a key practical advantage of supporting partial ancestral matrices.}
    \label{fig:misspec_mse}
\end{figure}

\end{document}